\documentclass{article}

\PassOptionsToPackage{numbers, sort&compress}{natbib}

\usepackage[preprint]{neurips_2026}

\usepackage[utf8]{inputenc} %
\usepackage[T1]{fontenc}    %
\usepackage{hyperref}       %
\usepackage{url}            %
\usepackage{booktabs}       %
\usepackage{multirow}       %
\usepackage{amsfonts}       %
\usepackage{nicefrac}       %
\usepackage{microtype}      %
\usepackage[table]{xcolor}  %
\definecolor{citeaccent}{RGB}{172, 49, 93}
\definecolor{refaccent}{RGB}{25, 92, 170}
\hypersetup{
  colorlinks=true,
  citecolor=citeaccent,
  linkcolor=refaccent,
  urlcolor=refaccent
}
\usepackage{graphicx}
\usepackage{array}

\usepackage{subcaption}
\newcommand{\datasetTopRule}{\specialrule{0.9pt}{0pt}{0pt}}
\newcommand{\datasetRowRule}{\specialrule{0.25pt}{0pt}{0pt}}
\newcommand{\datasetBottomRule}{\specialrule{0.9pt}{0pt}{0pt}}
\newcommand{\datasetHeaderCell}[1]{\cellcolor{black!8}\textbf{#1}}
\newcommand{\datasetHeaderRow}{\datasetHeaderCell{Template} & \datasetHeaderCell{Clean prompt template example} & \datasetHeaderCell{Corrupted prompt template example} & \datasetHeaderCell{Correct} & \datasetHeaderCell{Incorrect} \\}
\newcommand{\codeblock}[1]{\begin{tabular}[t]{@{}l@{}}#1\end{tabular}}
\newcommand{\codeindent}{\hspace*{1.25em}}
\newcommand{\asciisq}{\textquotesingle}
\newcommand{\asciidq}{\textquotedbl}
\usepackage{amsmath}
\usepackage{amssymb}
\usepackage{mathtools}
\usepackage{amsthm}
\usepackage{float}
\usepackage{placeins}
\usepackage{wrapfig}
\usepackage{enumitem}
\usepackage[capitalize,noabbrev]{cleveref}

\usepackage{aliascnt} %

\theoremstyle{plain}
\newtheorem{theorem}{Theorem}[section]

\newaliascnt{proposition}{theorem}

\aliascntresetthe{proposition}

\newaliascnt{lemma}{theorem}

\aliascntresetthe{lemma}

\newaliascnt{corollary}{theorem}

\aliascntresetthe{corollary}

\theoremstyle{definition}
\newaliascnt{definition}{theorem}
\newtheorem{definition}[definition]{Definition}
\aliascntresetthe{definition}

\newaliascnt{assumption}{theorem}

\aliascntresetthe{assumption}

\theoremstyle{remark}
\newaliascnt{remark}{theorem}
\newtheorem{remark}[remark]{Remark}
\aliascntresetthe{remark}

\crefname{theorem}{Theorem}{Theorems}
\crefname{proposition}{Proposition}{Propositions}
\crefname{lemma}{Lemma}{Lemmas}
\crefname{corollary}{Corollary}{Corollaries}
\crefname{definition}{Definition}{Definitions}
\crefname{assumption}{Assumption}{Assumptions}
\crefname{remark}{Remark}{Remarks}

\title{Demystifying Variance in Circuit Discovery of LLMs}

\author{%
  \normalfont
  \textbf{Frank Zhengqing Wu}
  \qquad \textbf{Francesco Tonin}
  \qquad \textbf{Volkan Cevher} \\[3pt]
  {\small Laboratory for Information and Inference Systems (LIONS)} \\
  {\small École Polytechnique Fédérale de Lausanne (EPFL), Lausanne, Switzerland} \\
  {\footnotesize\texttt{\{zhengqing.wu,francesco.tonin,volkan.cevher\}@epfl.ch}} \\
}

\begin{document}

\maketitle
\captionsetup[figure]{font=small}

\begin{abstract}
Circuit discovery is a key technique in mechanistic interpretability to pinpoint the model components that are crucial for performing a given task.  
Although the current state-of-the-art method (EAP-IG) performs well on the metric of (un)faithfulness, it suffers from substantial variability.
This includes \emph{resampling variance}, where the circuit changes when we probe with a new batch of data from the same distribution; \emph{rephrasing variance}, where the discovered circuit shifts when the prompts are rephrased; and \emph{sample-wise variance}, where a circuit with low population unfaithfulness exhibits large fluctuations in unfaithfulness across individual samples. 

This paper studies the roots of these variances.
We demonstrate that CEAP, our new circuit discovery method that improves upon EAP-IG with a theoretical guarantee, can substantially lessen resampling variance.
We further show that rephrasing variance arises because prompts with different templates tend to activate different circuits in the model.  
This leads us to argue that it may be challenging to find a comprehensive circuit that explains and controls the model’s behavior on a task, which can be expressed in countless templates, suggesting that LLMs may be inherently hard to steer.
We show that sparsity, which has been claimed to form more compact and interpretable task circuits, fails to solve this problem.  
Regarding sample-wise variance, we argue that it is largely benign: extremely poor unfaithfulness scores often stem from how unfaithfulness is defined, rather than from defects in the measured circuits. 
We show that the magnitude of unfaithfulness is affected by \emph{selective contribution scaling}, a neural mechanism that accounts for the extremely poor scores sometimes observed.
\end{abstract}

\section{Introduction}
Circuit discovery aims to find the key pathways a model uses to perform a task. 
It usually does this by first evaluating the importance of all components, then selecting the most important ones to form the circuit that is considered responsible for carrying out the task.
In recent years, this area has advanced along two main axes: greater efficiency and broader generality.
On the efficiency side, methods are becoming more automated and easier to parallelize.
To eliminate the need for extensive manual work, \citep{conmy2023towards} introduced an automated framework that evaluates the importance of model components by applying causal interventions to each in turn.
As such a method scales poorly, \citep{marks2024sparse,syed2024attribution,nanda2022attribution} proposed gradient-based circuit discovery approaches that leverage the parallelism provided by modern deep learning hardware.
In particular, edge attribution patching (EAP) \citep{syed2024attribution,nanda2022attribution} can evaluate all the components of a model in two forward and one backward passes.
However, this level of efficiency comes at the cost of precision.  
To balance these two aspects, \citep{faithfulness} interpolated the model behavior into multiple steps and applied EAP to each step, yielding EAP-IG (integrated gradients).

On the generality side, earlier work \citep{nanda2022attribution,nostalgebraist2020logitlens} often focused on single-sample probing.
This methodology was unable to provide much insight into how models handle a task as a whole \citep{nanda2022attribution}.
Subsequently, researchers transitioned to methods that assess the significance of model components over a task distribution with many samples, typically through computing the average importance across those samples \citep{faithfulness,wang2023interpretability,hanna2023how,marks2024sparse}.
By steering the discovered task circuits, \citep{finetunecircuit,cheng2026drives} enhanced the model’s performance on those tasks, highlighting the practical value of circuit discovery.
Nevertheless, several works revealed great variances in the circuit discovery algorithms, calling their reliability into question.

In this work, we discuss three forms of such variance.
The first is \emph{resampling variance} \citep{meloux2025mechanistic}, which means that the identified circuit may differ when the probing prompts are resampled from the same underlying distribution.
We show that such variance can be reduced by our new circuit discovery method, conductance-based EAP (CEAP), which achieves the same level of efficiency as EAP-IG.

The second type of variance is \emph{rephrasing variance} \citep{meloux2025mechanistic}, where alternative phrasings of the same input can produce substantially different circuits.
We show that this is because different templates activate distinct circuits in the model.
A recent work \citep{franco2026findinghighlyinterpretablepromptspecific} also observed template dependence for a different circuit-discovery algorithm.
Together, our results suggest the ubiquity of this phenomenon.
While \citep{franco2026findinghighlyinterpretablepromptspecific} proposed to shift the granularity of circuit discovery from tasks to templates, we take the stance that the dependence of circuits on templates reveals a profound shortcoming of current models and notions of tasks.
Since it is impractical to exhaust all templates, the generality goal of circuit discovery might be unattainable: it is infeasible to find comprehensive task circuits, and one cannot reliably predict the outcome of task circuit steering when the model is faced with unseen templates.
We investigate sparse training \citep{gao2025weight} as a potential remedy, as it tends to produce more compact task circuits, which might compel the model to rely on a single circuit for all templates.  
Nonetheless, we do not find strong evidence that sparsity adequately mitigates this issue.

The third form of variance is \emph{sample-wise variance} \citep{miller2024robust}, in which the circuit quality metric, unfaithfulness, varies sharply across samples.
We show that the poor unfaithfulness scores arise from a negative correlation between the metric and the magnitude of the model's behavior, rather than from deficiencies in the discovered circuit itself. We account for this using a neural mechanism that we term selective contribution scaling.

Taken together, our contributions are the following:
\begin{itemize}[leftmargin=*,itemsep=0.2em,parsep=0pt,topsep=0.2em]
    \item We introduce CEAP, whose underlying component selection strategy is more principled than that of EAP-IG. We support this claim by proving that conductance satisfies additive order preservation—an intuitive and desirable property that IG fails to meet. We further demonstrate that CEAP lowers resampling variance.
    \item We show that template-induced rephrasing variance is widespread and study its implications for circuit steering. We find that sparsity does not solve this problem, despite its prima facie appeal.
    \item We explain extreme sample-wise unfaithfulness through selective contribution scaling, showing that large unfaithfulness values reflect the metric geometry rather than catastrophic circuit failure.
\end{itemize}

\section{Background: Evaluating Subgraphs With (Un)faithfulness}
\label{sec: background}
\begin{figure}[H]
    \centering
    \includegraphics[width=0.6\textwidth]{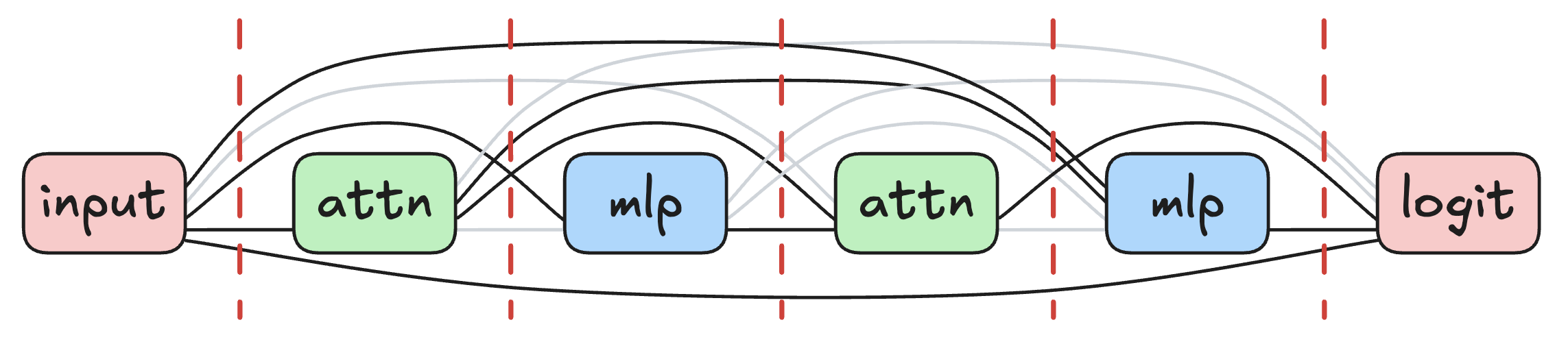}
    \caption{The computational graph of a mini Transformer. The residual addition is split into separate edges. The black edges are the ones included in the subgraph/circuit, while the grey ones are excluded. During a patched run, modules will take the results from previous ones through black edges but the grey edges will be fixed at activations corresponding to the corrupted input. Each red dotted line intercepts a set of edges that is necessary and sufficient for producing the output.}
    \label{fig:circuit-diagram}
\end{figure}

Circuit discovery aims to identify a subgraph (circuit) from the model's full computational graph that best reproduces the full model's behavior (illustrated in \cref{fig:circuit-diagram}).
A widely used measure of the circuit quality is {faithfulness} \citep{faithfulness}.
In the current setup, the \emph{behavior} of a task is usually characterized by how a task-specific output metric changes when the input changes from a base point (referred to as corrupted input in previous literature) to the clean input.
For instance, when examining subject-verb agreement (SVA) \citep{SVA}, we might quantify how much output probability shifts from plural verbs to singular verbs as the input changes from one that cues a plural verb (corrupted input) to one that cues a singular verb (clean input).
To quantify this change in general, for a given input $x$, we use the metric \texttt{prob diff} $$M(x;x_0,x_1) \coloneqq \mathbb{P}(C(x_1)|x) - \mathbb{P}(W(x_0)|x),$$
where $x_1$ is a clean input, like \emph{``The book on the table"}, and $x_0$ is a corrupted input like \emph{``The books on the table"}.
$C(x_1)$ is the set of tokens that $x_1$ encourages, and $W(x_0)$ is the set of competing tokens encouraged by $x_0$ instead.
Continuing our example of SVA, $C(x_1)$ is singular verbs, and $W(x_0)$ is plural verbs.
In the following, when the clean-corrupted pair is clear from context, we shorten $M(x;x_0,x_1)$ to $M(x)$.

Besides \texttt{prob diff},
another commonly used metric is \texttt{logit diff} \citep{nanda2022attribution,faithfulness}, which, in this case, is the summation of all the logits corresponding to singular verbs minus those of the plural ones.
\citep{faithfulness,conmy2023towards} also recommended using KL divergence as a metric. Nevertheless, we observed numerical issues with this approach, detailed in \cref{app: KL}.

We use subscripts $G$ and $G'$ to denote evaluation on the full computational graph and selected subgraph, respectively.
The \emph{behavior} of the full model and the selected circuit can be written as a subtraction of metrics, 
\begin{equation}
Q_\Gamma(x_0, x_1)\coloneqq M_\Gamma(x_1) - M_\Gamma(x_0),
\label{eq:metric_subtraction}
\end{equation}
where $\Gamma \in \{G,G'\}$.
The metrics for the full model in \cref{eq:metric_subtraction} can each be obtained via two standard forward passes.
By contrast, $M_{G'}(\cdot)$ is obtained by patching: edges outside $G'$ are fixed to their activations under $x_0$, while edges inside $G'$ remain responsive to the current input through the selected subgraph \citep{faithfulness}, as visualized in \cref{fig:circuit-diagram}.
By this design, $M_{G'}(x_0) = M_{G}(x_0)$.

Finally, the faithfulness \citep{faithfulness} of $G'$ with respect to $G$ is defined as $$\Phi(G',G; x_0, x_1) = \frac{Q_{G'}(x_0, x_1)}{Q_G(x_0, x_1)}.\footnote{This value was previously called normalized faithfulness in previous literature \citep{faithfulness,zhang2025eapgp}. As the ``unnormalized" faithfulness is not the emphasis of this paper, we will refer to the normalized faithfulness as faithfulness for brevity.}$$ 
The best value for $\Phi(G',G; x_0, x_1)$ is $1$, as it is the faithfulness yielded by choosing the entire graph.
For the ease of discussion, in what follows, we use \emph{unfaithfulness}, defined as $$U(G',G; x_0, x_1)\coloneqq \vert 1 - \Phi(G',G; x_0, x_1)\vert,$$
to evaluate the subgraph.
The smaller this metric, the better, which clarifies the presentation.

\section{Conductance-based EAP (CEAP)}

To find a subgraph that best explains the model's behavior on a certain task, we first quantify the importance of the edges in the model with a scoring method, after which we select the most important edges, which constitute the subgraph.
Naturally, the quality of the subgraph hinges upon the soundness of the scoring method.
In this section, we discuss how the conductance-based scoring method, which CEAP builds on, improves upon the previous IG-based scoring method.

Suppose we are studying a metric function $M(\cdot)\in C(\mathbb{R}^{d_{\text{in}}},\mathbb{R})$, where $d_\text{in}$ is the input dimension.
The value of the metric given any input can be written as a function of the activations on the edges, $M(x) = M(\{a_e(x)\}_{e\in E})$.
Here, $a_e(x)\in\mathbb{R}^{d_e}$ are the activations on edge $e$, and $E$ denotes the set of all edges.
We study the process where we change the input from $x_0$ to $x_1$.
The importance we attribute to each component $i$ of the activation ${a_e}$ is its conductance \citep{dhamdhere2018how}:
\begin{definition} The \emph{conductance} of a scalar activation ${a_{ei}}$ for a scalar-valued function $M(\cdot)$, evaluated with respect to the corrupted-input pair $(x_0,x_1)$, is given by:
\begin{equation}
\begin{aligned}
I_{\text{Cond}}({a_{ei}})
= \int_{\gamma}\frac{\partial M}{\partial {a_{ei}}}\mathrm{d}{a_{ei}}
= \sum_{j\in [d_\mathrm{in}]} ({x_1}_j - {x_0}_j)
   \int_{0}^{1}
   \frac{\partial M(x_0 + \alpha(x_1 - x_0))}{\partial {a_{ei}}}
   \frac{\partial {a_{ei}}}{\partial x_j}
   \,\mathrm{d}\alpha,
\end{aligned}
\label{eq:conductance_formula}
\end{equation}
where $\gamma$ is the path that ${a_{ei}}$ traverses when the input moves from $x_0$ to $x_1$ in a straight line.
Then, we define the importance/conductance of the edge $e$ to be:
$I_{\text{Cond}}(e) = \sum_{i\in [d_e]} I_{\text{Cond}}({a_{ei}}).$
\end{definition}

\paragraph{Comparison with previous methods.} The current state-of-the-art patching method (EAP-IG \citep{faithfulness}) uses integrated gradients (IG) \citep{IG-sundararajan17a}, which scores each activation component with:
\begin{equation}
I_{\text{IG}}({a_{ei}}) = [{a_{ei}}(x_1) - {a_{ei}}(x_0)]\int_0^1 \frac{\partial M(x_0+\alpha(x_1 - x_0))}{\partial {a_{ei}}}\mathrm{d}\alpha.
\label{eq:IG_formula}
\end{equation}
Then, the IG method scores an edge with $I_{\text{IG}}(e) =  \sum_{i\in [d_e]} I_{\text{IG}}({a_{ei}}).$

When evaluating edges across multiple samples, the edge scores are computed as the average over all samples.
After all edges have been scored, \emph{circuit discovery} proceeds by retaining only those edges whose scores have the largest absolute values.
In this paper, we use the greedy method \citep{faithfulness}.
This method first finds a circuit with a target edge number and prunes the childless or parentless edges.\footnote{The target edge number and the final edge number are usually close. 
We do not distinguish between them in our discussion.}
The excluded edges are patched as described in the previous section.
Different scoring methods, IG or conductance, result in different patching: EAP-IG and CEAP.
Overall, to obtain the most informative graph, we need to give higher absolute-value scores to edges whose patching causes greater changes at the network output, which motivates us to study the following property.

\begin{definition}[Additive order preservation]
Consider a function that can be decomposed additively into several branches:
$
\mathbb{R} \ni F(x; B)
= \sum_{b \in B} f_b(y_b)
= \sum_{b\in B} f_b(g_b(x)),
$
where \(B\) is the set of branches, and \(x\) and \(y_b\) may be vectors.
We also denote $
\mathbb{R} \ni F(x; {B'})
= \sum_{b \in B'} f_b(y_b)
$
for any $B'\subseteq B$.
Suppose we study the process where \(x\) moves from \(x_0\) to \(x_1\).
For convenience, we denote
$
\Delta F(B') =  F(x_1; B') - F(x_0; B')
$
and $\Delta f_b = f_b(x_1) - f_b(x_0)$.
For a scoring function that assigns a scalar to an edge, \(I(y_b) \in \mathbb{R}\),
we say it satisfies \emph{additive order preservation} if
  \[
  \resizebox{\textwidth}{!}{\ensuremath{\displaystyle \vert \Delta F(B)-\Delta F(B \setminus \{b_1\})\vert = \vert \Delta f_{b_1} \vert >  \vert \Delta f_{b_2} \vert =\vert\Delta F(B)-\Delta F(B \setminus \{b_2\})\vert \iff \vert I(y_{b_1}) \vert  > \vert I(y_{b_2}) \vert.}}
  \]
\label{def:additive-order-preservation}
\end{definition}

\begin{remark}
Note that $\vert \Delta F(B)-\Delta F(B \setminus \{b\})\vert$ exactly equals to the importance of "activation" $y_{b}$ measured with activation patching \citep{nanda2022attribution}, which freezes $y_{b}$ to be $y_{b}(x_0)$ while letting other activations freely move with upstream inputs.
    Additive order preservation conveys that, if the patching of one branch causes the whole function to change its behavior more than the patching of another, then the former branch should be assigned a score higher in its absolute value than that of the latter, so that the former can be prioritized to be admitted into the subgraph.
    This property is particularly desirable when we study unfaithfulness, which evaluates a subgraph by how much the behavior is affected when the complement of the subgraph is patched.
\end{remark}

\begin{remark}
The sum-of-branches type of functions discussed in \cref{def:additive-order-preservation} can be of significant relevance if we wish to interpret the residual stream of currently prevalent architectures, as the residual stream is always a summation of previous branches.
More broadly, additive order preservation is a minimal desideratum for scoring when applied to general functions.
General functions may have other ways to mix the outputs of branches.
 The appropriate desiderata for more complicated mixings may not be immediately intuitive.
However, the first-order Taylor expansion of any mixing always appears in the form of sum-of-branches functions, so studying the latter still provides insight.
\end{remark}

To show that conductance is a more principled scoring method, we prove:
\begin{theorem}
   Conductance satisfies additive order preservation, while IG does not.
   \label{thm:Cond_VS_IG}
\end{theorem}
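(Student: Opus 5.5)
We prove the two halves separately: the positive claim for conductance comes from a path-integral identity, and the negative claim for IG from an explicit counterexample.

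\textbf{Conductance.} The score of branch $b$ is taken to be the conductance of its intermediate $y_b$ with respect to $F(\cdot;B)$. Along the straight input path $x_\alpha = x_0+\alpha(x_1-x_0)$, the only dependence of $F$ on $y_b$ is through the summand $f_b$, so $\partial F/\partial y_{bi} = \partial f_b/\partial y_{bi}$. Summing \eqref{eq:conductance_formula} over the components $i$ of $y_b$ and applying the chain rule gives
\[
I_{\text{Cond}}(y_b)=\int_0^1 \nabla f_b(g_b(x_\alpha))^\top \frac{\mathrm{d}}{\mathrm{d}\alpha} g_b(x_\alpha)\,\mathrm{d}\alpha = \int_0^1 \frac{\mathrm{d}}{\mathrm{d}\alpha} f_b(g_b(x_\alpha))\,\mathrm{d}\alpha=\Delta f_b .
\]
This is the fundamental theorem of calculus along the curve $\alpha\mapsto g_b(x_\alpha)$. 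It needs only that $f_b$ and $g_b$ are $C^1$, or that the path is piecewise smooth, which is the standing assumption behind \eqref{eq:conductance_formula}.

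Since additivity gives $\Delta F(B)-\Delta F(B\setminus\{b\})=\Delta f_b$, we have $|I_{\text{Cond}}(y_b)| = |\Delta F(B)-\Delta F(B\setminus\{b\})|$ for every branch. The two sides of the biconditional in \cref{def:additive-order-preservation} are therefore literally the same inequality, and the equivalence holds.

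\textbf{IG.} By \eqref{eq:IG_formula}, IG scores $y_b$ by $(y_b(x_1)-y_b(x_0))^\top\int_0^1\nabla f_b(y_b(x_\alpha))\,\mathrm{d}\alpha$. The integrated gradient is taken along the straight input path, but it is multiplied by the total displacement of $y_b$. When $g_b$ is nonlinear, $y_b(x_\alpha)$ does not move linearly in $\alpha$, so this product differs from $\Delta f_b$.

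A one-dimensional counterexample suffices. Take $x\in\mathbb{R}$, $x_0=0$, $x_1=1$, and two branches:
\begin{itemize}
\item $g_1(x)=x$, $f_1(y)=y$, so $\Delta f_1 = 1$ and $I_{\text{IG}}(y_1)=1\cdot 1=1$;
\item $g_2(x)=x^2$, $f_2(y)=c\,y$, so $\Delta f_2=c$ and $I_{\text{IG}}(y_2)=1\cdot c=c$.
\end{itemize}
Because $f_2$ is linear, this choice gives no violation, so we make $f_2$ nonlinear: $f_2(y)=y^2$ with $g_2(x)=x^2$. Then $\Delta f_2=1$. Along the path, $y_2=\alpha^2$, so $f_2'(y_2)=2\alpha^2$, which integrates to $2/3$, and $I_{\text{IG}}(y_2)=1\cdot 2/3=2/3$.

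Now rescale branch 1 to $f_1(y)=0.8\,y$, giving $\Delta f_1 = 0.8$ and $I_{\text{IG}}(y_1)=0.8$. Then $|\Delta f_2|=1>0.8=|\Delta f_1|$, while $|I_{\text{IG}}(y_2)|=2/3<0.8=|I_{\text{IG}}(y_1)|$. This violates the biconditional, so IG does not satisfy additive order preservation.

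\textbf{Main obstacle.} The nontrivial points are matching the abstract setting to \eqref{eq:conductance_formula} and \eqref{eq:IG_formula}, with $y_b$ playing the role of the edge activation, and checking that each counterexample branch makes the ordering reverse rather than merely changing magnitudes. A branch that is linear in $y$, or whose intermediate moves linearly in $\alpha$, gives no violation; the counterexample needs a nonlinear $f_b$ composed with a nonlinearly moving $y_b$.
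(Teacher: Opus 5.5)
Your proof is correct. The conductance half is the same argument as the paper's. Both write $I_{\text{Cond}}(y_b)$ as the line integral of $\nabla f_b$ along the trajectory of $y_b$ and conclude $I_{\text{Cond}}(y_b)=\Delta f_b=\Delta F(B)-\Delta F(B\setminus\{b\})$, which makes the biconditional a tautology.

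Where you differ is the IG counterexample.

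\textbf{The paper's construction.} It is piecewise linear on $x\in[0,3]$. The nonlinear branch has a flat plateau that $y_1$ crosses over a long stretch, so IG's single weight $\Delta y_1$ visibly misweights the three segments. This ties the example to the paper's discussion of discretized IG versus conductance, but it forces a separate remark about one-sided derivatives and smoothing the kinks.

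\textbf{Your construction.} Your example $F(x)=0.8x+(x^2)^2$ on $[0,1]$ is $C^\infty$, so no such caveat is needed. The reversal checks in two lines: $|\Delta f_2|=1>0.8=|\Delta f_1|$, but $|I_{\text{IG}}(y_2)|=2/3<0.8=|I_{\text{IG}}(y_1)|$. The mechanism is the same as the paper's. IG weights the gradient uniformly in $\alpha$, whereas $y_2=\alpha^2$ moves fastest exactly where $f_2'$ is largest, so IG undercounts the branch.

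\textbf{An error in the paper's numbers.} The paper's displayed values $10/3$ and $3$ drop the factor $1/3$ from the substitution $x=3\alpha$. The correct values are $10/9$ and $1$; the latter must equal $\Delta f_2$ for a linear branch. These still reverse the order, so the paper's conclusion stands, whereas your arithmetic is clean throughout.

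In a final write-up, drop the discarded first attempt with linear $f_2$ and present only the working example.
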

The proof is in \cref{app:proof_of_Cond_VS_IG}.
In the following, we derive the discretized version of conductance and IG, which shows how the two quantities are computed numerically.
The discretized computation also sheds light on why IG appears less principled.

Let us break down the input movement $x_0 \to x_1$ into $K$ evenly spaced segments and denote the endpoints of each segment by $x^k\coloneqq x_0 + k/K (x_1-x_0)$, where $k\in \{0,1,\cdots, K\}$.
We also denote the activations corresponding to these input endpoints by $a_{ei}^k \coloneqq {a_{ei}}(x^k)$.
The discretized IG can be written as
\begin{equation}
I_{\text{IG}}^{\text{dis}}({a_{ei}}) = \frac{{a_{ei}^K} - {a_{ei}^0}}{K}\sum_{k=0}^{K-1} \frac{\partial M(x^k)}{\partial {a_{ei}}}.
\label{eq:IG_discrete}
\end{equation}
One can interpret this formula as a refined version of \texttt{gradient$\times$input} \citep{nanda2022attribution}.
The latter scores the edge with $\left({a_{ei}^K} - {a_{ei}^0}\right)\frac{M(x^0)}{a_{ei}}$.
\cref{eq:IG_discrete}, in comparison, computes the average gradient instead of using one gradient at one endpoint.
Another more fine-grained interpretation of \cref{eq:IG_discrete} is that it is a weighted sum of the gradients $\frac{\partial M(x^k)}{\partial {a_{ei}}}$, with all weights equal to $\frac{{a_{ei}^K} - {a_{ei}^0}}{K}$.
Nevertheless, it may be undesirable for all the weights to be the same.
After all, the rate of change of $M$ with respect to $e_{ei}$ is roughly $\frac{\partial M(x^k)}{\partial {a_{ei}}}$ only when $a_{ei}$ lies within the segment of $\left(a_{ei}^k ,a_{ei}^{k+1}\right)$, whose length is not necessarily $\frac{{a_{ei}^K} - {a_{ei}^0}}{K}$ due to the nonlinearity of $a_{ei}(x)$.
This motivates a new formula that rectifies the weightings of $\frac{\partial M(x^k)}{\partial {a_{ei}}}$:
\begin{equation}
I_{\text{Cond}}^{\text{dis}}({a_{ei}}) = \sum_{k=0}^{K-1} \frac{\partial M(x^k)}{\partial {a_{ei}}}\left( a_{ei}^{k+1}-a_{ei}^{k}\right),
\label{eq:Cond_discrete}
\end{equation}
which is exactly the discrete form of conductance \citep{shrikumar2018computationallyefficientmeasuresinternal}.
The ``weighted-sum of gradients" perspective of \cref{eq:IG_discrete,eq:Cond_discrete} also suggests that IG can be taken as an approximation to conductance, which explains its efficacy shown by \citep{faithfulness}.

\subsection{CEAP Reduces Circuit Variance Under Data Resampling}

\citep{meloux2025mechanistic} noted that EAP-IG finds significantly different circuits when we resample the dataset from the same distribution.
We will show that such an instability is ameliorated by CEAP.
Following the setup in \citep{faithfulness}, we conducted experiments on GPT-2 small, GPT-2 XL, Pythia-160M, and Pythia-2.8B over three datasets: SVA \citep{SVA}, IOI \citep{wang2023interpretability}, and greater-than \citep{hanna2023how}.\footnote{\citep{faithfulness} also used three other datasets.
They have relatively few samples and are unsuitable for our purpose.}
The templates for generating these datasets are shown in \cref{app: dataset templates}.
Experimental details are in \cref{app:experiments}.

In our experiments, for each task, we used a full dataset of roughly $10000$ samples and subsampled $4$ smaller datasets (without replacement), each with $1000$ samples.
We performed circuit discovery for the $4$ datasets and measured the stability of the found circuits using pairwise Jaccard index (PJI).
More concretely, suppose we have two graphs $ G_1$ and $ G_2$, with edge sets $E_{G_1}$ and $E_{G_2}$, respectively.
The PJI for these two graphs is defined as
$\left|E_{G_1} \cap E_{G_2}\right|/\left|E_{G_1} \cup E_{G_2}\right|$.
With $4$ sub-datasets, for the same target edge number, we obtained $4$ graphs and thus $6$ PJIs.
We report the mean and standard deviation of these PJIs in \cref{fig: ceap_vs_eap-ig_jaccard} for GPT-2 XL on SVA.
The x-axis shows the number of edges we admitted during circuit discovery.
We separate experiments for different templates, as we will show in \cref{sec: template-induced circuit variance} that different templates activate distinct circuits in the model and thus their respective circuit discovery should be done separately.
Due to space constraints, we only show the first $6$ templates here.
The results for other models, datasets, and templates are shown in \cref{app: complete ceap vs eap-ig jaccard and unfaithfulness}.
\emph{Overall, we find that CEAP achieves PJI values that are higher than, or at least on par with, those of EAP-IG.}
We believe this is because CEAP can find important edges in a more principled manner as shown in \cref{thm:Cond_VS_IG}, while EAP-IG can only be viewed as a noisy version of CEAP.

\begin{figure}
    \centering
    \includegraphics[width=0.8\linewidth]{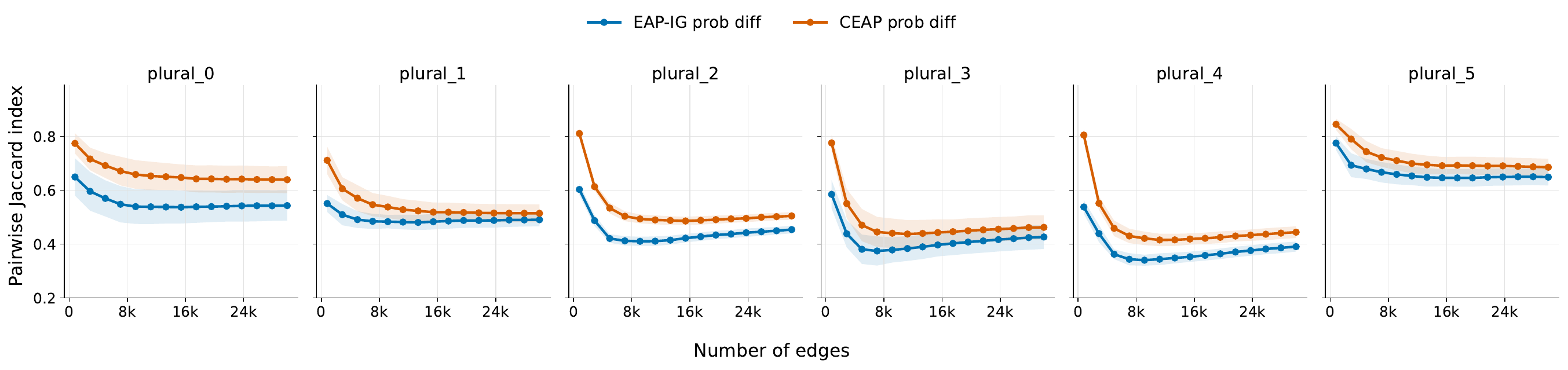}
    \caption{Comparison of PJI yielded by CEAP and EAP-IG (using GPT-2 XL for a subset of SVA)}
    \label{fig: ceap_vs_eap-ig_jaccard}
\end{figure}

Note that CEAP attains unfaithfulness comparable to EAP-IG, so its increased stability in circuit selection does not arise from choosing task-irrelevant circuits. 
Further details are given in \cref{app: complete ceap vs eap-ig jaccard and unfaithfulness}.

\section{Template-induced Circuit Variance}

\label{sec: template-induced circuit variance}
\citep{meloux2025mechanistic} observed that performing circuit discovery with paraphrased clean-corrupted prompt pairs can yield dramatically different circuits, even when a human judges the paraphrased prompts as representing the same underlying task.
We demonstrate that this occurs because the templates influence the circuits the models use to perform the same task.
To show this, we take $1000$ samples from the same task but in different templates, apply CEAP to evaluate edge importance, and identify the critical circuit for each sample.
Then, we illustrate the similarity of the scores across different samples, as well as the similarity of the graphs identified for these samples.

For the similarity of the scores, we investigate the ranking of edges based on the absolute values of their scores.
More precisely, consider all edges in the model arranged in a single vector $\left(e_1, e_2, \cdots, e_{\vert E \vert} \right)$.
Their associated scores for a given sample are given by $\left(s_1, s_2, \cdots, s_{\vert E \vert} \right)$, where $s_i = I_{\mathrm{Cond}}(e_i)$.
We then take the absolute values of these scores,\footnote{This is to follow the convention of \citep{faithfulness} that performed circuit discovery using the absolute score values.} $\left(\vert s_1 \vert, \vert s_2 \vert, \cdots, \vert s_{\vert E \vert} \vert \right)$, and sort them in descending order to obtain the absolute-score rank vector $\left(r_1, r_2, \cdots, r_{\vert E \vert} \right)$, where $r_i$ denotes the rank of $\vert s_i \vert$.
We then applied UMAP to the absolute-score rank vectors for all $1000$ samples to project them onto $2$ dimensions. 
The resulting visualization for GPT-2 small on SVA is shown in \cref{fig:umap}.
Additionally, we directly visualize the similarity between the graphs obtained for different samples.
We select the graph size for which the mean unfaithfulness across all $1000$ samples falls below $0.2$, and then compute the PJI among all graphs found for each sample at that size; these results are shown in \cref{fig:sample-wise pji}.
We produced the same visualization for GPT-2 small and Pythia-160M on SVA, IOI, and greater-than.
Across all figures, it is evident that the model relies on different circuits to process different templates.
The complete set of visualizations is provided in \cref{app: umap and pji matrix}.

\begin{figure}
    \centering
    \begin{subfigure}[t]{0.36\textwidth}
        \centering
        \includegraphics[width=\linewidth,height=0.22\textheight,keepaspectratio]{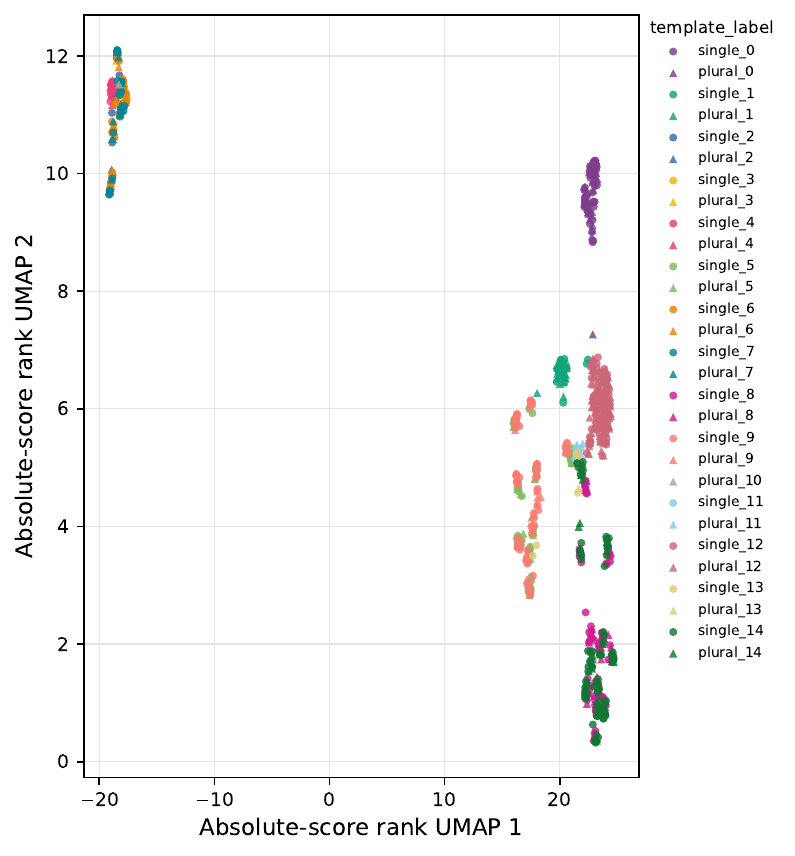}
        \caption{Absolute-score rank UMAP.}
        \label{fig:umap}
    \end{subfigure}
    \hfill
    \begin{subfigure}[t]{0.43\textwidth}
        \centering
        \includegraphics[width=\linewidth,height=0.22\textheight,keepaspectratio]{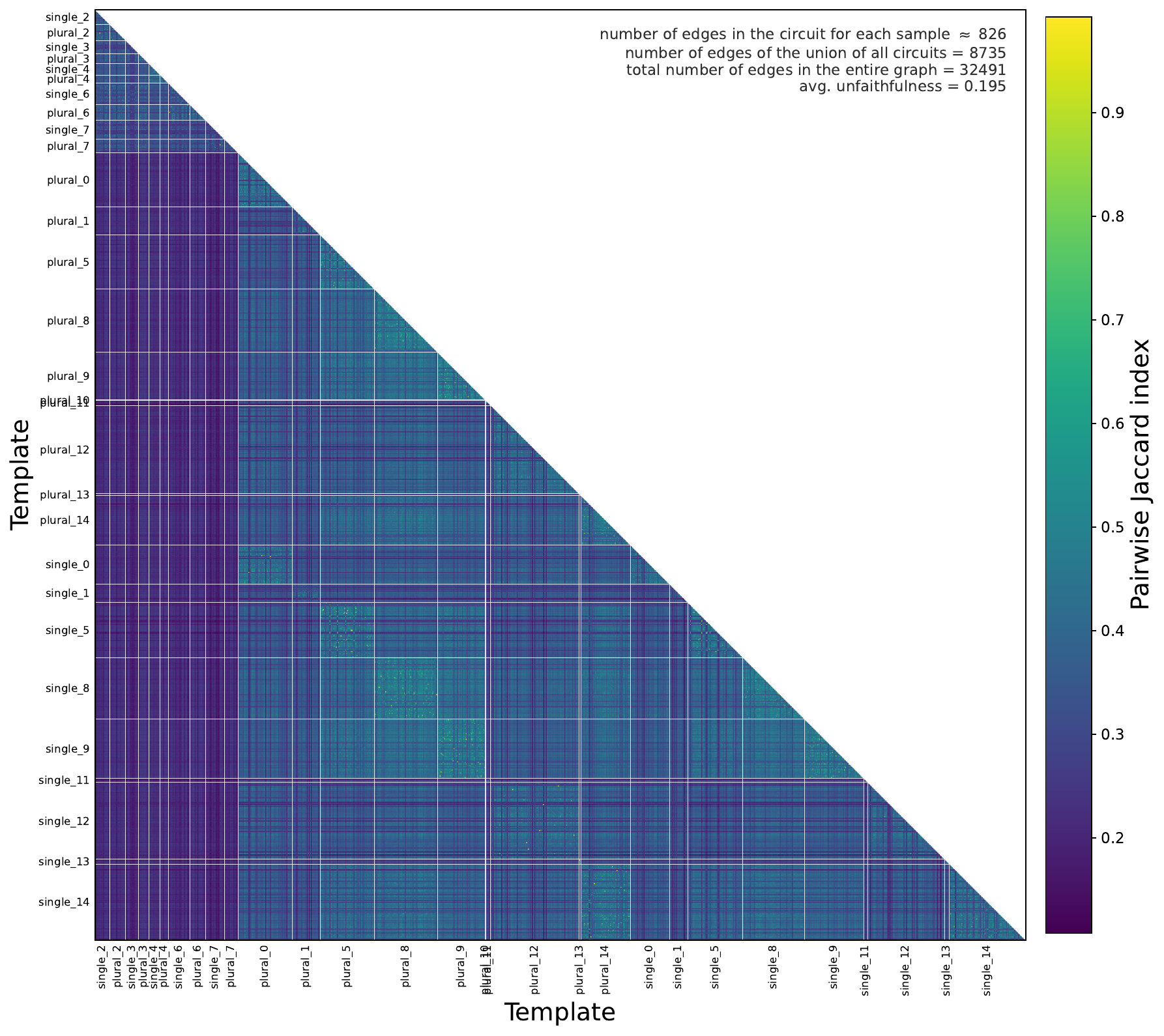}
        \caption{Pairwise Jaccard index.}
        \label{fig:sample-wise pji}
    \end{subfigure}
    \caption{Template-dependent sample circuits for GPT-2 small on SVA. The UMAP embeds samples by their absolute-score rank vectors, and the Pairwise Jaccard index matrix captures the overlap between the corresponding circuits.
    In the right panel, two clearly separated template groups emerge.
    Circuits within each group exhibit substantial overlap, while overlap between the two groups is almost zero.
    These two groups correspond to the two clusters in the left panel: one in the upper-left corner and the other containing the rest.
    The statistics in the upper-right of the right panel offer quantitative insight into how little overlap there is among the circuits for different templates.
    }
    \label{fig:gpt2-sva-template-umap-pji-main}
\end{figure}

Numerous prior studies \citep{faithfulness,meloux2025mechanistic,wang2025discovering,miller2024robust,nanda2022attribution,gao2025weight} have operated under the assumption that a single circuit within a given model is responsible for performing a specific task, and that, by evaluating edge importance via averaging over a sufficient number of samples, usually generated with multiple templates \citep{faithfulness,meloux2025mechanistic}, we can comprehensively pinpoint the circuit responsible for performing the task.
However, we have shown that such an assumption is flawed.
Different templates, in fact, trigger different circuits within the models.
Since the space of possible templates is effectively unbounded, it follows that \emph{identifying a comprehensive circuit for a particular task may be an unattainable goal}.

This implication challenges one of the central aspirations of mechanistic interpretability: that, by adjusting the identified task circuits, we can reliably steer the model’s behavior on that task in our preferred direction  \citep{zhang2026locatesteerimprovepractical}, including boosting math capability \citep{finetunecircuit}, moderating refusal \citep{cheng2026drives}, editing knowledge \citep{meng2022locating,meng2023massediting,knowledge_cirtuits_in_pretrained_transformers}, and many more \citep{gao2025weight,lindsey2025biology,wang2023interpretability}.
Although previous works showed that interventions on circuits identified from available samples can be effective \citep{gao2025weight,wang2023interpretability} and may improve benchmark performance \citep{finetunecircuit,cheng2026drives}, our observations here suggest that such a methodology does not guarantee reliable behavior in the wild, where the model may encounter unseen templates.\footnote{In \cref{app: fail to steer}, we present a case that, after encouraging plural verbs for a sample by patching a circuit obtained for this purpose from another sample, the model actually encourages singular verbs. }
In practice, the degree of concentration among template circuits can serve as a proxy for our confidence in the effect of a model intervention.

To deepen the discussion, we ask whether it is achievable or meaningful to pursue concentrated task circuits independent of templates, since "tasks" may seem an artificial concept, while templates are the concrete data the models process.
It may be quite demanding to require models to develop task-focused circuits rather than template-focused ones.
Nevertheless, a task can be thought of informally as a collection of prompts that a human is able to address using a single, unified algorithm,\footnote{For example, for IOI the algorithm is: find all names, remove duplicates, and output the rest \citep{wang2023interpretability}.} which indicates that it should be feasible to implement it using just one circuit.
The fact that current models resort to different circuits for different templates implies that the models leverage redundant algorithms.\footnote{A redundant algorithm may look like: \texttt{if handling template\_A: use algorithm\_A, elif handling template\_B: use algorithm\_B...} A good example of this for IOI is shown in Figure 3 of \citep{franco2026findinghighlyinterpretablepromptspecific}.}
A well-known principle in AI holds that intelligence can be understood as a form of compression \citep{wolff2013computingcompressionsptheory,schmidhuber2008driven}.
In particular, from the perspective of Kolmogorov complexity \citep{hutter2000theoryuniversalartificialintelligence}, intelligence can be viewed as the ability to find short descriptions of data that enable accurate prediction.
In this sense, models that rely on redundant algorithms to perform tasks have suboptimal intelligence, and the aim of training models that rely on a template-independent circuit for a given task aligns with the broader objective of discovering more compressive, thus intelligent, models.

A recent effort \citep{gao2025weight} to train more interpretable transformers showed that imposing sparsity during training shrinks the task circuits.
Prima facie, this seems like a potential solution to mitigate the unsteerability of models that we discussed above.
\emph{Could it be that sparsity forces the model to merge the circuits for different templates, thereby causing the task circuits to shrink?} 
Our experiments, based on the models from \citep{gao2025weight}, suggest that the benefit from sparsity is limited.

In our experiments, we adapted \texttt{transformer-lens} \citep{nanda2022transformerlens}, the interpretability infrastructure that we used for CEAP, to the specialized architectures of the sparse models built by \citep{gao2025weight}, which were customized for coding.  
We also created two datasets suitable for these models, single-double-quote and else-elif, summarized in \cref{app: dataset templates}.
Across these experiments, we do not find that sparsity causes the model to merge template circuits: sparse models still deploy different circuits for different templates.
Full visual comparisons and summary statistics are presented in \cref{app:sparse umap pji plot,app: template overlap summary tables}.

\begin{wrapfigure}[15]{r}{0.27\textwidth}
    \centering
    \vspace{-0.4em}
    \includegraphics[width=\linewidth,height=0.18\textheight,keepaspectratio]{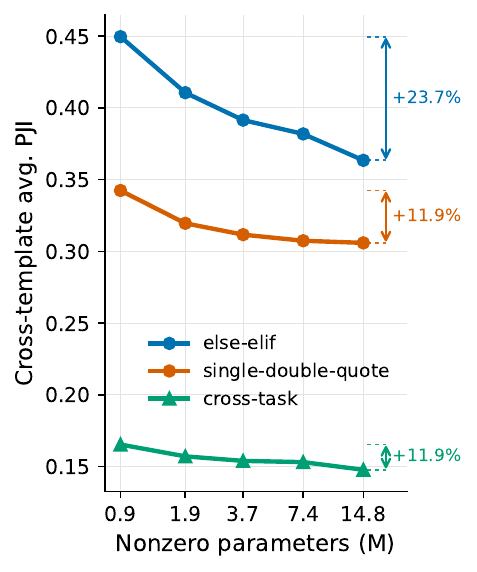}
    \caption{\footnotesize Cross-template average PJI increases with more sparsity, but so does cross-task average PJI.}
    \label{fig:cross-template-pji-vs-nonzero}
    \vspace{0.4em}
\end{wrapfigure}

In addition, we analyze how the mean PJI between circuit pairs obtained from different templates varies with increasing sparsity, while keeping all other settings fixed.
As shown in \cref{fig:cross-template-pji-vs-nonzero}, the overlap of different template circuits increases substantially when we allow fewer nonzero parameters.
Nevertheless, the average PJI for circuit pairs across different tasks (cross-task) also increases to an extent comparable to the mean cross-template PJI for single-double-quote.
This suggests that the level of polysemanticity increases with sparsity, which leads to the possibility that the increasing overlap between cross-template circuits for one task might be a result of the model using a larger portion of polysemantic neurons to implement redundant algorithms, rather than truly compressing for a single, unified algorithm for the task. This complicates task steering.
Determining whether sparsity primarily enhances polysemanticity or compression requires detailed model-level investigation, which lies beyond the scope of this variance-focused paper. We leave such an analysis for future work. Our current observations do not indicate that sparsity resolves the issue discussed in this section.

\FloatBarrier
\section{Variance of Unfaithfulness Across Samples}
\label{sec: sample-wise variance}
\citep{miller2024robust} noted that averaging edge scores over samples to identify circuits can yield large variance in per-sample unfaithfulness, even if the resulting circuit is faithful at the population level.\footnote{The notion of population-level faithfulness has varied in previous works \citep{miller2024robust,faithfulness,wang2023interpretability}, which we discuss in \cref{app: population-level unfaithfulness}.} 
One might think this is because the averaged scores better match some samples while fitting others less well.
However, it turns out that this is not the primary source of variance.
In our experiments, we observed that even if we perform circuit discovery with CEAP using a fixed circuit size \emph{for each sample separately}, thereby eliminating influence from other samples, the resulting unfaithfulness still fluctuates greatly across samples.
\cref{fig: u distribution} shows such an example, which we use throughout this section.
We see that the unfaithfulness $U$ of a sample (sample 78) may go as high as $20$, while most of the samples from the same template (\texttt{plural\_1}) already achieve low unfaithfulness.

In this section, we show that under a reasonable circuit discovery scheme, poor unfaithfulness scores typically stem from how unfaithfulness interacts with the distribution of edge scores, rather than from missing important edges.
Comparing unfaithfulness values between samples does not reveal which sample’s circuit is more sufficient for recovering the full model’s behavior.

\begin{wrapfigure}[14]{l}{0.24\textwidth}
    \centering
    \vspace{-0.2em}
    \includegraphics[width=0.96\linewidth]{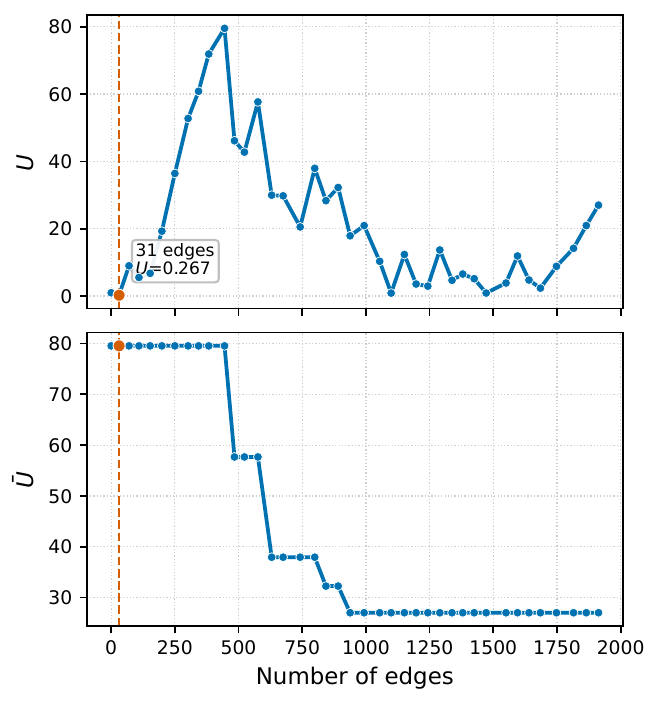}
    \caption{\footnotesize Normal and pessimistic unfaithfulness for sample 78 of SVA.}
    \label{fig:sva-sample78-u-vs-ubar}
    \vspace{0.2em}
\end{wrapfigure}
It is important to recognize that unfaithfulness measured on an individual sample is often hard to interpret, because it can vary in a highly non-monotonic way as the circuit size increases.
\cref{fig:sva-sample78-u-vs-ubar} shows how $U$ changes with the circuit size.
It reaches $0.267$ with just $31$ edges but shoots up to around $80$ when we increase the circuit size.
Ideally, however, unfaithfulness should consistently decrease when we admit more edges into the circuit.
Single-sample unfaithfulness fails to meet this because it evaluates the behavior of the model at a single point in the input space.
We detail this in \cref{app: single sample faithfulness}.
Highly fluctuating unfaithfulness can introduce noise to our analysis and lower statistical significance.
To reduce this noise, we define pessimistic unfaithfulness $\bar{U}$:
\begin{definition}
Let $U(m)$ denote the unfaithfulness of the circuit found by the algorithm with target edge number $m$. Pessimistic unfaithfulness is defined as $\bar{U}(m) = \max_{m\leq\hat{m}\leq\vert E\vert} U(\hat{m})$.
\end{definition}
In experiments, since we only evaluate a subset of all possible target edge counts, we estimate $\bar{U}(m)$ by taking the maximum unfaithfulness observed over the {evaluated} counts.

Next, we show that extremely poor $U$ should not be interpreted as a dramatic discrepancy between the circuit and the model's behavior.
We define the unnormalized unfaithfulness $U'\coloneqq \vert Q_{G'}(x_0,x_1) -  Q_{G}(x_0,x_1)\vert=\vert M_{G'}(x_1) -  M_{G}(x_1)\vert$ to directly measure the behavior gap between the circuit and the full model.
We also define the pessimistic unnormalized unfaithfulness $\bar U'$ by enforcing monotonicity.
In our experiments, we found that extremely high $\bar U$ is usually associated with near-zero $\vert Q_G\vert$,\footnote{In very rare cases, we find negative $Q_G$, meaning the full model is more likely to output the wrong answer.} as shown in \cref{fig:U_barVSQ}, where we obtained a negative $\rho$ for Spearman rank correlation (SRC).
Since $Q_G$ is the normalization for computing (un)faithfulness, it is natural to suspect whether the extremely small normalization values are what cause $\bar U$ to blow up for certain samples.
This suspicion is corroborated by a positive SRC coefficient $\rho$ between the unnormalized $\bar U'$ and $\vert Q_G\vert$,  illustrated in \cref{fig:U_bar_primeVSQ}.
\emph{These observations suggest that the extremely large $\bar U$ does not mean the logits produced by the chosen circuit differ wildly from those of the full model; rather, it is merely a consequence of normalization.}

\begin{figure}
    \centering
    \begin{subfigure}[t]{0.17\textwidth}
        \centering
        \raisebox{0.8em}{\includegraphics[width=\linewidth,height=0.155\textheight,keepaspectratio]{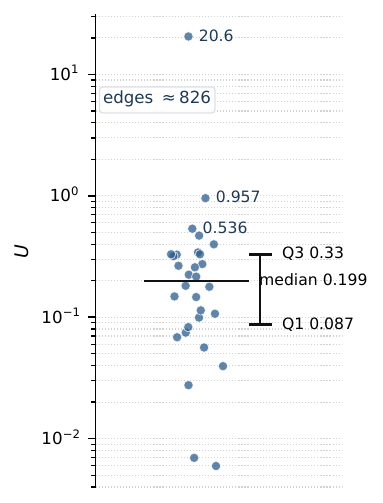}}
        \caption{$U$ distribution.}
        \label{fig: u distribution}
    \end{subfigure}%
    \hspace{0.005\textwidth}%
    \begin{subfigure}[t]{0.34\textwidth}
        \centering
        \includegraphics[width=\linewidth,height=0.155\textheight,keepaspectratio]{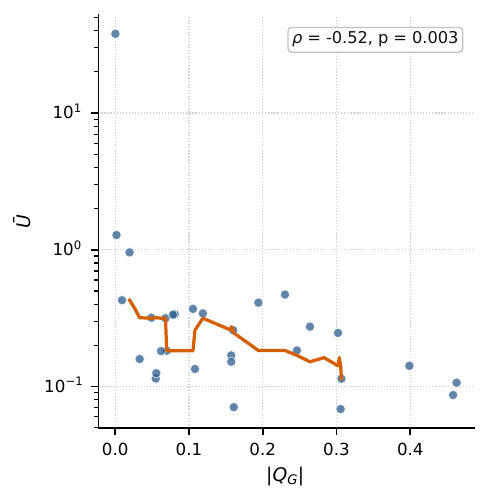}
        \caption{$\bar{U}$ vs.\ $|Q_G|$.}
                     \label{fig:U_barVSQ}
    \end{subfigure}%
    \hspace{0.005\textwidth}%
    \begin{subfigure}[t]{0.34\textwidth}
        \centering
        \includegraphics[width=\linewidth,height=0.155\textheight,keepaspectratio]{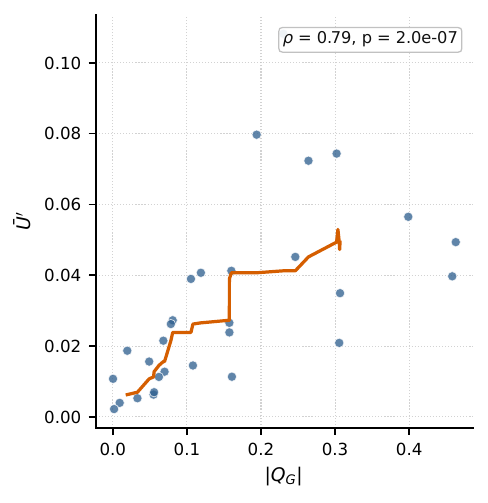}
        \caption{$\bar{U}'$ vs.\ $|Q_G|$.}
             \label{fig:U_bar_primeVSQ}

    \end{subfigure}
    \caption{For GPT-2 small on SVA template \texttt{plural\_1} with target edge number $826$. Here, we perform circuit discovery for each sample separately. The corresponding $U$ distribution for scores obtained via averaging within the template is in \cref{fig:u-distribution-all-averaging}. Samples with smaller $|Q_G|$ tend to show larger pessimistic unfaithfulness $\bar U$, while the pessimistic unnormalized unfaithfulness $\bar U'$ tends to increase with $|Q_G|$. Orange curves show rolling-median trends as visual guides only. Boxes report Spearman rank correlation $\rho$ and its $p$-value.}
    \label{fig: unfaithfulness exaggeration}
\end{figure} 

\subsection[The Correlation Between Ubar and |QG|]{The Correlation Between $\bar U$ and $|Q_G|$}

What remains unclear is why, given the same number of allowed edges, the circuits identified for samples with small $|Q_G|$ achieve much worse $\bar U$ than the circuits identified for samples with large $|Q_G|$.
Intuitively, this may be because as $|Q_G|$ decreases, the relative significance of the edges left out of the circuit, compared with those that are included, becomes greater.
The following analysis supports this explanation.

Given a model with each edge scored on sample $n$, we arrange the absolute-valued scores of all its edges into a nonincreasing vector $s^n$.
Let $S^n\coloneqq \sum_{\ell=1}^{\vert E \vert} s^n_\ell$, and $u^n = s^n/S^n$ so that all $u^n$ components sum up to $1$.
We define the normalized momentum to be $\mu^n  \coloneqq \sum_{\ell=1}^{\vert E \vert} u^n_\ell\times \ell$, which quantifies how heavy-tailed $s^n$ is.
From here on, for clarity, we introduce a superscript of sample index $n$ for $Q_G$.
We found that the samples with lower $\vert Q_G^n\vert$ have larger $\mu^n$ (\cref{fig:muVSQG}).
Suppose we consider two samples such that $\vert Q_G^{n_1} \vert>\vert Q_G^{n_2} \vert$. Then $u^{n_1}$ and $u^{n_2}$ would appear as illustrated in the lower panel of \cref{fig:long-tail-mental-image}.
Given an edge number $m$, our circuit discovery algorithm approximately selects the edges with the highest absolute scores to form the circuit.
Define $C^n(m)\coloneqq \sum_{\ell\in L}s^n_\ell$ where $L$ is the set of indices of $s^n$ that correspond to the edges admitted into the circuit.
Then, $C^n(m)/S^n\coloneqq R^n(m)$ can be interpreted as the ratio of score mass included in the circuit.
Naturally, $\mu^n$ and $R^n(m)$ should have a negative correlation (\cref{fig:RVSmu}).
Moreover, as $R^n(m)$ carries the meaning of the aggregate importance of edges included in the circuit \emph{relative} to total importance of all edges, it should have a negative correlation with unfaithfulness $\bar U$, which reflects the change of logit behavior caused by including only the edges in the chosen circuit \emph{relative} to the original logit behavior caused by the full model (\cref{fig:UbarVSR}).
Based on these correlations, we derive the following relationships that account for the trend observed in \cref{fig:U_barVSQ}.
\begin{figure}
    \centering
    \begin{subfigure}{0.245\textwidth}
        \centering
        \includegraphics[width=\linewidth]{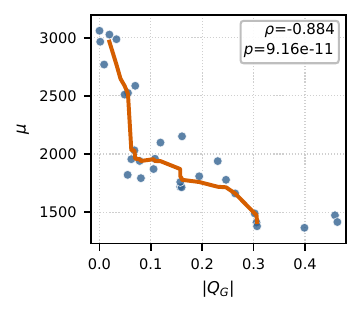}
        \caption{$\mu$ vs.\ $|Q_G|$.}
        \label{fig:muVSQG}
    \end{subfigure}\hfill
    \begin{subfigure}{0.245\textwidth}
        \centering
        \includegraphics[width=\linewidth]{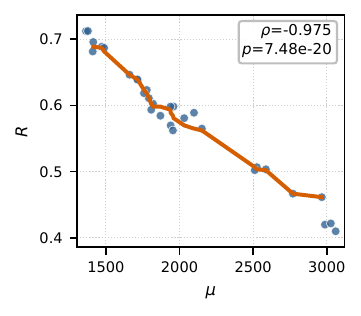}
        \caption{$R$ vs.\ $\mu$.}
        \label{fig:RVSmu}
    \end{subfigure}\hfill
    \begin{subfigure}{0.245\textwidth}
        \centering
        \includegraphics[width=\linewidth]{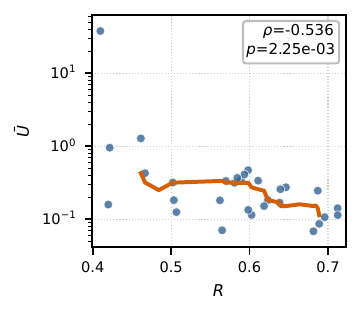}
        \caption{$\bar{U}$ vs.\ $R$.}
        \label{fig:UbarVSR}
    \end{subfigure}\hfill
    \begin{subfigure}{0.245\textwidth}
        \centering
        \includegraphics[width=\linewidth]{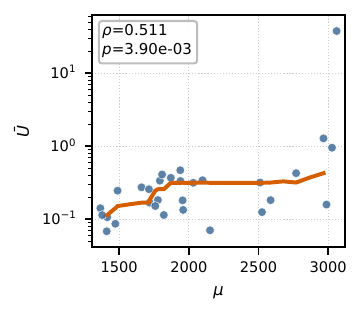}
        \caption{$\bar{U}$ vs.\ $\mu$.}
    \end{subfigure}
    \caption{Spearman rank correlations that explain \cref{fig:U_barVSQ}. The setup is exactly the same as in \cref{fig: unfaithfulness exaggeration}. The last panel, though not discussed, 
    adds supporting evidence: it suggests that the heavy-tailedness of $s^n$ is indicative of the unfaithfulness $\bar U$, combining the last two correlations in \cref{eq: arrow trend}.}
    \label{fig: long-tail mechanism plot}
\end{figure}

\begin{equation}
     \vert Q_G^n\vert \searrow ~~~ \boldsymbol{\longrightarrow} ~~~\mu^n \nearrow~~~ \boldsymbol{\longrightarrow}~~~{R^n(m)} \searrow ~~~\boldsymbol{\longrightarrow} ~~~\bar U \nearrow
     \label{eq: arrow trend}
\end{equation}

While the last two arrows in the above appear intuitive, the mechanism underlying the first one is unclear.
We show that this is due to a neural mechanism that we term \emph{selective contribution scaling}, which we cover in \cref{sec: selective contribution scaling}.

We verify all the correlation trends in this section, including the two in \cref{fig: unfaithfulness exaggeration} and the four in \cref{fig: long-tail mechanism plot}, for all templates of SVA/IOI/greater-than, for both GPT-2 small and Pythia-160M, and for all graph sizes that we swept. The trends hold up well.
Moreover, we check these trends using $U$ in place of $\bar{U}$, in case our design of $\bar{U}$ introduces undesirable artifacts.
$U$-based plots generally show the same trends, despite being noisier.
Note that the discussion of this section so far is based on scoring samples separately.
We also produce all the visualizations for scores obtained by averaging samples within their own templates, which is closer to common practice \citep{faithfulness,meloux2025mechanistic,miller2024robust}. 
Those experiments are well captured by the conclusion drawn in this section, as the circuits found within a template are highly similar in most cases. 
All the results are presented in \cref{app:extended-correlation-diagnostics}.

\subsection{A Mental Image for Selective Contribution Scaling}
\label{sec: selective contribution scaling}

Our conductance score satisfies that, if a set of edges is necessary and sufficient to produce the output (\cref{fig:circuit-diagram}), their scores must sum up to $Q_G^n$ (known as partition consistency \citep{dhamdhere2018how}).
In this sense, one can interpret the score as the contribution of the edge to the output.
Suppose we have a pair of samples with $\vert Q_G^{n_1} \vert>\vert Q_G^{n_2} \vert$. It is reasonable to expect that the entries in $s^{n_1}$ are larger than those in $s^{n_2}$, given partition consistency.
Since we know in hindsight that $u^{n_2}$ is more heavy-tailed than $u^{n_1}$ (\cref{fig:muVSQG}), we can infer that $s^{n_1}$ is considerably higher than $s^{n_2}$ at low indices, while the differences between them at high indices are mild.
Namely, if we move the input from sample $n_2$ to $n_1$ to increase the output signal amplitude from $\vert Q_G^{n_2}\vert$ to $\vert Q_G^{n_1}\vert$, the important edges located at low indices will contribute more than the less important ones at high indices.\footnote{Note that we are implicitly assuming that the edge orderings in $s^{n_1}$ and $s^{n_2}$ coincide. While this is not strictly true, they should be very similar, because, as we show in \cref{sec: template-induced circuit variance}, samples sharing the same template tend to exhibit similar scoring patterns.}
This is what we call selective contribution scaling, as illustrated in the upper panel of \cref{fig:long-tail-mental-image}.
We verify the mental image of \cref{fig:long-tail-mental-image} with a concrete sample pair in \cref{app: mental image corroboration}.

\begin{wrapfigure}[10]{r}{0.23\textwidth}
    \centering
    \vspace{-2.5em}
    \includegraphics[width=\linewidth]{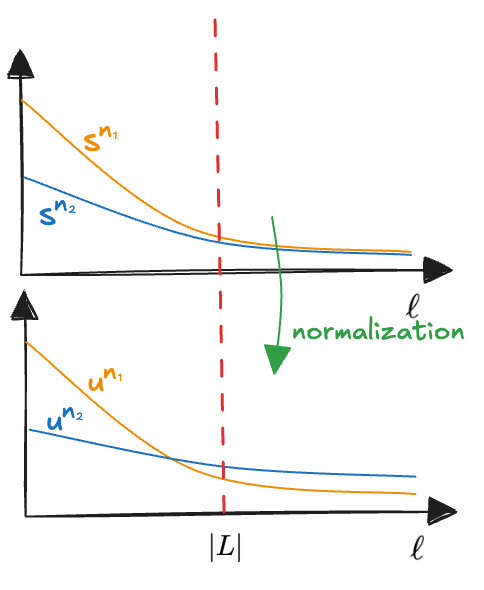}
    \caption{\footnotesize Mental image of selective contribution scaling.}
    \label{fig:long-tail-mental-image}
    \vspace{-1.4em}
\end{wrapfigure}
That there exist specialized parts of the model reacting to structurally specific stimuli (tasks expressed in templates) and other parts being indifferent to them aligns with the notion of sparse coding, a widely endorsed framework of information processing in neuroscience \citep{barlow,Olshausen1996EmergenceOS,OLSHAUSEN19973311,sparse,lonnqvist2023latent}.
Nonetheless, in our case, it links unfaithfulness to the insignificant magnitude of $Q_G$, which weakens unfaithfulness as a definitive marker of performance recovery.
By the same token, we also anticipate great cross-sample variance in faithfulness-style normalization-dependent metrics measured on systems that exhibit sparse coding.

\section{Conclusion and Discussion}
In this paper, we systematically investigate the three types of variance in circuit discovery algorithms.
While resampling variance can be mitigated with more principled scoring, and sample-wise variance of unfaithfulness is largely benign, rephrasing variance is hard to diminish and suggests the impossibility of finding comprehensive task circuits.

In discussing the above, we develop methods and guidance that can inform future practice.
First, conductance is a more principled way of scoring compared to IG, as it satisfies additive order preservation.
Second, although evaluating (un)faithfulness is relatively straightforward, and the notion itself is intuitively plausible \citep{faithfulness}, there is a caveat: comparisons of this metric across different samples are subject to an intrinsic sample-wise variance induced by selective contribution scaling.
We further discuss in \cref{app: population-level unfaithfulness} that population-level (un)faithfulness should explicitly account for this effect and be more lenient toward samples whose behavioral magnitude is small.
Nonetheless, (un)faithfulness remains informative for tracking how circuit quality improves as we increase the permitted circuit size.
In this regard, pessimism serves as a useful complement to reduce interpretational noise.

Furthermore, our discussion on template-based rephrasing variance sheds new light on the generalizability of \emph{task} circuits \citep{sharkey2025open}.
On the one hand, it suggests that statements involving task circuits should be carefully tested for robustness under changes in templates,\footnote{Some previous studies have considered robustness to paraphrasing \citep{meng2022locating,meng2023massediting}. However, it has not yet become a standard practice.} and that template-agnostic benchmarking can be inconclusive.
On the other hand, it opens up avenues for future exploration.
Let us define a task as the collection of samples that engage the same circuit.
This definition is useful because it is operational and may provide insight into circuit steering.
At present, despite certain post hoc heuristics \citep{franco2026findinghighlyinterpretablepromptspecific}, we lack a principled way to determine which templates engage the same circuit and which rely on distinct circuits.
As a result, the notion of a task remains elusive.
To address this, we must systematically characterize how templates influence the algorithmic procedures that models use to process them.
 Put differently, it is important to formalize a model-centered notion of a task, in contrast to the current notion, in which humans judge whether different instances should be considered part of the same task \citep{faithfulness,wang2023interpretability,hanna2023how,SVA}.
Furthermore, as the circuits currently appear to be template-dependent, an important question is whether models can be trained to use the same circuit for tasks defined in human terms, which may, in turn, lead to greater compression and potentially higher intelligence.

\section*{Acknowledgements}
 
We thank Dr. Paul Rolland for valuable discussions.
This work was funded  by the Swiss National Science Foundation (SNSF) under grant number 2000-1-240094.
This work was also supported under project ID \# 37 as part of the Swiss AI Initiative, through a grant from the ETH Domain and computational resources provided by the Swiss National Supercomputing Centre (CSCS) under the Alps infrastructure.

\vspace{-0.4em}
\bibliography{reference}
\bibliographystyle{unsrtnat}

\newpage
\appendix
\captionsetup[figure]{font=normalsize}
\crefname{appendix}{Appendix}{Appendices}
\Crefname{appendix}{Appendix}{Appendices}
\crefalias{section}{appendix}
\onecolumn
\raggedbottom

\section{Experimental Details}
\label{app:experiments}

Following previous conventions \citep{faithfulness,nanda2022attribution}, we used \texttt{logit\_diff} for tasks whose correct and incorrect answers are single logits (IOI, single-double-quote).
For other tasks, we used \texttt{prob\_diff}.

For GPT-2 and Pythia experiments, we used integration step $K=200$ for both EAP-IG and CEAP.
For sparse-transformer experiments \citep{gao2025weight}, we used $150$.

\section{Code and Data Availability}
\label{app:code-data-availability}

A ready-to-use implementation of CEAP, along with code to reproduce the figures presented in this paper, is available at \url{https://github.com/LIONS-EPFL/Circuit-Variance}.

\section{Numerical Issue Caused by KL Divergence}
\label{app: KL}
\citep{faithfulness,conmy2023towards} also recommended using the KL divergence as a metric, as it is in principle applicable to all tasks.
In practice, however, we found KL divergence to be restrictive: it cannot be used for models that assign zero probability to some tokens, since this leads to an infinite KL value, which we observe \emph{e.g.}, for Pythia-160M.
While $\mathrm{Softmax(\cdot)}$ should theoretically ensure that every token has nonzero probability, numerical underflow (which occurs for both \texttt{fp32} and \texttt{fp64}) frequently produces exact zeros in the probabilities.
To keep the exposition uncluttered, we therefore report only the \texttt{logit}/\texttt{prob diff} metrics in the main paper.
The experiments that we carried out for KL divergence do not show qualitatively different results from the conclusions we present here.

\section{Dataset Templates}
\label{app: dataset templates}

We use template-generated datasets so that clean and corrupted prompts differ in a controlled way.
\cref{tab:sva-templates,tab:ioi-templates,tab:greater-than-template,tab:single-double-quote-templates,tab:else-elif-templates} summarize the templates used in our experiments.
Placeholders in square brackets are filled with names, nouns, variable names, or code literals during dataset construction.
For the tasks customized for sparse-transformers \citep{gao2025weight}, each base template generates a paired contrast.
For \texttt{single\_double\_quote}, we show both paired directions explicitly using the \texttt{\_s} and \texttt{\_d} suffixes.
For \texttt{else\_elif}, we show one representative direction in the table and omit the \texttt{\_else}/\texttt{\_elif} suffixes from the displayed template names for space; the generated data and visualizations keep those suffixes.

\FloatBarrier
\begin{table}[H]
\centering
\caption{SVA dataset templates. Each row shows a prompt contrast where the correct answer is either a singular or plural verb form. In the placeholders, $\mathrm{sg}$ and $\mathrm{pl}$ denote singular and plural forms, respectively.}
\label{tab:sva-templates}
\scriptsize
\setlength{\tabcolsep}{3pt}
\begin{tabular}{@{}!{\vrule width 0.9pt}p{0.13\linewidth}p{0.32\linewidth}p{0.32\linewidth}p{0.10\linewidth}p{0.10\linewidth}!{\vrule width 0.9pt}@{}}
\datasetTopRule
\datasetHeaderRow
\datasetRowRule
\texttt{single\_0} & The [Noun$_\mathrm{sg}$] that [Verb$_\mathrm{sg}$] the [Noun] & The [Noun$_\mathrm{pl}$] that [Verb$_\mathrm{pl}$] the [Noun] & singular verb & plural verb \\
\datasetRowRule
\texttt{plural\_0} & The [Noun$_\mathrm{pl}$] that [Verb$_\mathrm{pl}$] the [Noun] & The [Noun$_\mathrm{sg}$] that [Verb$_\mathrm{sg}$] the [Noun] & plural verb & singular verb \\
\datasetRowRule
\texttt{single\_1} & The [Noun$_\mathrm{sg}$] from the [Noun$_\mathrm{pl}$] & The [Noun$_\mathrm{pl}$] from the [Noun$_\mathrm{pl}$] & singular verb & plural verb \\
\datasetRowRule
\texttt{plural\_1} & The [Noun$_\mathrm{pl}$] from the [Noun$_\mathrm{pl}$] & The [Noun$_\mathrm{sg}$] from the [Noun$_\mathrm{pl}$] & plural verb & singular verb \\
\datasetRowRule
\texttt{single\_2} & The [Noun$_\mathrm{pl}$] that the [Noun$_\mathrm{sg}$] & The [Noun$_\mathrm{pl}$] that the [Noun$_\mathrm{pl}$] & singular verb & plural verb \\
\datasetRowRule
\texttt{plural\_2} & The [Noun$_\mathrm{pl}$] that the [Noun$_\mathrm{pl}$] & The [Noun$_\mathrm{pl}$] that the [Noun$_\mathrm{sg}$] & plural verb & singular verb \\
\datasetRowRule
\texttt{single\_3} & The [Noun$_\mathrm{pl}$] the [Noun$_\mathrm{sg}$] & The [Noun$_\mathrm{pl}$] the [Noun$_\mathrm{pl}$] & singular verb & plural verb \\
\datasetRowRule
\texttt{plural\_3} & The [Noun$_\mathrm{pl}$] the [Noun$_\mathrm{pl}$] & The [Noun$_\mathrm{pl}$] the [Noun$_\mathrm{sg}$] & plural verb & singular verb \\
\datasetRowRule
\texttt{single\_4} & The [Noun$_\mathrm{pl}$] said the [Noun$_\mathrm{sg}$] & The [Noun$_\mathrm{pl}$] said the [Noun$_\mathrm{pl}$] & singular verb & plural verb \\
\datasetRowRule
\texttt{plural\_4} & The [Noun$_\mathrm{pl}$] said the [Noun$_\mathrm{pl}$] & The [Noun$_\mathrm{pl}$] said the [Noun$_\mathrm{sg}$] & plural verb & singular verb \\
\datasetRowRule
\texttt{single\_5} & The [Noun$_\mathrm{sg}$] the [Noun$_\mathrm{pl}$] [Verb] & The [Noun$_\mathrm{pl}$] the [Noun$_\mathrm{pl}$] [Verb] & singular verb & plural verb \\
\datasetRowRule
\texttt{plural\_5} & The [Noun$_\mathrm{pl}$] the [Noun$_\mathrm{pl}$] [Verb] & The [Noun$_\mathrm{sg}$] the [Noun$_\mathrm{pl}$] [Verb] & plural verb & singular verb \\
\datasetRowRule
\texttt{single\_6} & The [Noun$_\mathrm{pl}$] that the [Noun$_\mathrm{sg}$] & The [Noun$_\mathrm{pl}$] that the [Noun$_\mathrm{pl}$] & singular verb & plural verb \\
\datasetRowRule
\texttt{plural\_6} & The [Noun$_\mathrm{pl}$] that the [Noun$_\mathrm{pl}$] & The [Noun$_\mathrm{pl}$] that the [Noun$_\mathrm{sg}$] & plural verb & singular verb \\
\datasetRowRule
\texttt{single\_7} & The [Noun$_\mathrm{pl}$] the [Noun$_\mathrm{sg}$] & The [Noun$_\mathrm{pl}$] the [Noun$_\mathrm{pl}$] & singular verb & plural verb \\
\datasetRowRule
\texttt{plural\_7} & The [Noun$_\mathrm{pl}$] the [Noun$_\mathrm{pl}$] & The [Noun$_\mathrm{pl}$] the [Noun$_\mathrm{sg}$] & plural verb & singular verb \\
\datasetRowRule
\texttt{single\_8} & The [Noun$_\mathrm{sg}$] that the [Noun$_\mathrm{pl}$] [Verb] & The [Noun$_\mathrm{pl}$] that the [Noun$_\mathrm{pl}$] [Verb] & singular verb & plural verb \\
\datasetRowRule
\texttt{plural\_8} & The [Noun$_\mathrm{pl}$] that the [Noun$_\mathrm{pl}$] [Verb] & The [Noun$_\mathrm{sg}$] that the [Noun$_\mathrm{pl}$] [Verb] & plural verb & singular verb \\
\datasetRowRule
\texttt{single\_9} & The [Noun$_\mathrm{sg}$] that the [Noun$_\mathrm{pl}$] [Verb] & The [Noun$_\mathrm{pl}$] that the [Noun$_\mathrm{pl}$] [Verb] & singular verb & plural verb \\
\datasetRowRule
\texttt{plural\_9} & The [Noun$_\mathrm{pl}$] that the [Noun$_\mathrm{pl}$] [Verb] & The [Noun$_\mathrm{sg}$] that the [Noun$_\mathrm{pl}$] [Verb] & plural verb & singular verb \\
\datasetRowRule
\texttt{single\_10} & The [Noun$_\mathrm{sg}$] & The [Noun$_\mathrm{pl}$] & singular verb & plural verb \\
\datasetRowRule
\texttt{plural\_10} & The [Noun$_\mathrm{pl}$] & The [Noun$_\mathrm{sg}$] & plural verb & singular verb \\
\datasetRowRule
\texttt{single\_11} & The [Noun$_\mathrm{sg}$] [Verb$_\mathrm{sg}$] [VP] and & The [Noun$_\mathrm{pl}$] [Verb$_\mathrm{pl}$] [VP] and & singular verb & plural verb \\
\datasetRowRule
\texttt{plural\_11} & The [Noun$_\mathrm{pl}$] [Verb$_\mathrm{pl}$] [VP] and & The [Noun$_\mathrm{sg}$] [Verb$_\mathrm{sg}$] [VP] and & plural verb & singular verb \\
\datasetRowRule
\texttt{single\_12} & The [Noun$_\mathrm{sg}$] next to the [Noun$_\mathrm{pl}$] & The [Noun$_\mathrm{pl}$] next to the [Noun$_\mathrm{pl}$] & singular verb & plural verb \\
\datasetRowRule
\texttt{plural\_12} & The [Noun$_\mathrm{pl}$] next to the [Noun$_\mathrm{pl}$] & The [Noun$_\mathrm{sg}$] next to the [Noun$_\mathrm{pl}$] & plural verb & singular verb \\
\datasetRowRule
\texttt{single\_13} & The [Noun$_\mathrm{sg}$] [Verb$_\mathrm{sg}$] and & The [Noun$_\mathrm{pl}$] [Verb$_\mathrm{pl}$] and & singular verb & plural verb \\
\datasetRowRule
\texttt{plural\_13} & The [Noun$_\mathrm{pl}$] [Verb$_\mathrm{pl}$] and & The [Noun$_\mathrm{sg}$] [Verb$_\mathrm{sg}$] and & plural verb & singular verb \\
\datasetRowRule
\texttt{single\_14} & The [Noun$_\mathrm{sg}$] the [Noun$_\mathrm{pl}$] [Verb] & The [Noun$_\mathrm{pl}$] the [Noun$_\mathrm{pl}$] [Verb] & singular verb & plural verb \\
\datasetRowRule
\texttt{plural\_14} & The [Noun$_\mathrm{pl}$] the [Noun$_\mathrm{pl}$] [Verb] & The [Noun$_\mathrm{sg}$] the [Noun$_\mathrm{pl}$] [Verb] & plural verb & singular verb \\
\datasetBottomRule
\end{tabular}
\end{table}
\FloatBarrier

\begin{table}[H]
\centering
\caption{IOI dataset templates. Each clean prompt is truncated immediately before the indirect-object answer; the corrupted prompt replaces the cueing name with a distractor [C].}
\label{tab:ioi-templates}
\scriptsize
\setlength{\tabcolsep}{3pt}
\begin{tabular}{@{}!{\vrule width 0.9pt}p{0.12\linewidth}p{0.31\linewidth}p{0.31\linewidth}p{0.10\linewidth}p{0.10\linewidth}!{\vrule width 0.9pt}@{}}
\datasetTopRule
\datasetHeaderRow
\datasetRowRule
\texttt{ABBA\_00} & Then, [A] and [B] went to the [PLACE]. [B] gave a [OBJECT] to & Then, [A] and [B] went to the [PLACE]. [C] gave a [OBJECT] to & [A] & [B] \\
\datasetRowRule
\texttt{ABBA\_01} & Then, [A] and [B] had a lot of fun at the [PLACE]. [B] gave a [OBJECT] to & Then, [A] and [B] had a lot of fun at the [PLACE]. [C] gave a [OBJECT] to & [A] & [B] \\
\datasetRowRule
\texttt{ABBA\_02} & Then, [A] and [B] were working at the [PLACE]. [B] decided to give a [OBJECT] to & Then, [A] and [B] were working at the [PLACE]. [C] decided to give a [OBJECT] to & [A] & [B] \\
\datasetRowRule
\texttt{ABBA\_03} & Then, [A] and [B] were thinking about going to the [PLACE]. [B] wanted to give a [OBJECT] to & Then, [A] and [B] were thinking about going to the [PLACE]. [C] wanted to give a [OBJECT] to & [A] & [B] \\
\datasetRowRule
\texttt{ABBA\_04} & Then, [A] and [B] had a long argument, and afterwards [B] said to & Then, [A] and [B] had a long argument, and afterwards [C] said to & [A] & [B] \\
\datasetRowRule
\texttt{ABBA\_05} & After [A] and [B] went to the [PLACE], [B] gave a [OBJECT] to & After [A] and [B] went to the [PLACE], [C] gave a [OBJECT] to & [A] & [B] \\
\datasetRowRule
\texttt{ABBA\_06} & When [A] and [B] got a [OBJECT] at the [PLACE], [B] decided to give it to & When [A] and [B] got a [OBJECT] at the [PLACE], [C] decided to give it to & [A] & [B] \\
\datasetRowRule
\texttt{BABA\_00} & Then, [B] and [A] went to the [PLACE]. [B] gave a [OBJECT] to & Then, [B] and [A] went to the [PLACE]. [C] gave a [OBJECT] to & [A] & [B] \\
\datasetRowRule
\texttt{BABA\_01} & Then, [B] and [A] had a lot of fun at the [PLACE]. [B] gave a [OBJECT] to & Then, [B] and [A] had a lot of fun at the [PLACE]. [C] gave a [OBJECT] to & [A] & [B] \\
\datasetRowRule
\texttt{BABA\_02} & Then, [B] and [A] were working at the [PLACE]. [B] decided to give a [OBJECT] to & Then, [B] and [A] were working at the [PLACE]. [C] decided to give a [OBJECT] to & [A] & [B] \\
\datasetRowRule
\texttt{BABA\_03} & Then, [B] and [A] were thinking about going to the [PLACE]. [B] wanted to give a [OBJECT] to & Then, [B] and [A] were thinking about going to the [PLACE]. [C] wanted to give a [OBJECT] to & [A] & [B] \\
\datasetRowRule
\texttt{BABA\_04} & Then, [B] and [A] had a long argument, and afterwards [B] said to & Then, [B] and [A] had a long argument, and afterwards [C] said to & [A] & [B] \\
\datasetRowRule
\texttt{BABA\_05} & After [B] and [A] went to the [PLACE], [B] gave a [OBJECT] to & After [B] and [A] went to the [PLACE], [C] gave a [OBJECT] to & [A] & [B] \\
\datasetRowRule
\texttt{BABA\_06} & When [B] and [A] got a [OBJECT] at the [PLACE], [B] decided to give it to & When [B] and [A] got a [OBJECT] at the [PLACE], [C] decided to give it to & [A] & [B] \\
\datasetBottomRule
\end{tabular}
\end{table}
\FloatBarrier

\begin{table}[H]
\centering
\caption{greater-than dataset template example. The task asks whether the next two-digit year suffix is greater than the suffix $xy$ implied by the clean prompt.}
\label{tab:greater-than-template}
\scriptsize
\setlength{\tabcolsep}{3pt}
\begin{tabular}{@{}!{\vrule width 0.9pt}p{0.12\linewidth}p{0.31\linewidth}p{0.31\linewidth}p{0.10\linewidth}p{0.10\linewidth}!{\vrule width 0.9pt}@{}}
\datasetTopRule
\datasetHeaderRow
\datasetRowRule
\texttt{x} & The [NOUN] lasted from the year 13xy to the year 13 & The [NOUN] lasted from the year 1301 to the year 13 & suffix $>xy$ & suffix $\leq xy$ \\
\datasetBottomRule
\end{tabular}
\end{table}
\FloatBarrier

\begin{table}[H]
\centering
\caption{single-double-quote dataset templates. Each base template is shown in both directions: \texttt{\_s} expects the closing single-quote-parenthesis token, and \texttt{\_d} expects the closing double-quote-parenthesis token.}
\label{tab:single-double-quote-templates}
\tiny
\setlength{\tabcolsep}{1.5pt}
\renewcommand{\arraystretch}{1.08}
\begin{tabular}{@{}!{\vrule width 0.9pt}p{0.145\linewidth}p{0.335\linewidth}p{0.335\linewidth}p{0.075\linewidth}p{0.075\linewidth}!{\vrule width 0.9pt}@{}}
\datasetTopRule
\datasetHeaderRow
\datasetRowRule
\texttt{append\_call\_s} & \codeblock{\texttt{if [date bounds fail]:}\\\codeindent\texttt{[RESULT\_LIST].append(\asciisq{}Invalid Input}} & \codeblock{\texttt{if [date bounds fail]:}\\\codeindent\texttt{[RESULT\_LIST].append(\asciidq{}Invalid Input}} & \texttt{\asciisq)} & \texttt{\asciidq)} \\
\datasetRowRule
\texttt{append\_call\_d} & \codeblock{\texttt{if [date bounds fail]:}\\\codeindent\texttt{[RESULT\_LIST].append(\asciidq{}Invalid Input}} & \codeblock{\texttt{if [date bounds fail]:}\\\codeindent\texttt{[RESULT\_LIST].append(\asciisq{}Invalid Input}} & \texttt{\asciidq)} & \texttt{\asciisq)} \\
\datasetRowRule
\texttt{constructor\_call\_s} & \codeblock{\texttt{[OBJECT\_VAR] = Project(}\\\codeindent\texttt{\asciisq[PROJECT\_NAME]}} & \codeblock{\texttt{[OBJECT\_VAR] = Project(}\\\codeindent\texttt{\asciidq[PROJECT\_NAME]}} & \texttt{\asciisq)} & \texttt{\asciidq)} \\
\datasetRowRule
\texttt{constructor\_call\_d} & \codeblock{\texttt{[OBJECT\_VAR] = Project(}\\\codeindent\texttt{\asciidq[PROJECT\_NAME]}} & \codeblock{\texttt{[OBJECT\_VAR] = Project(}\\\codeindent\texttt{\asciisq[PROJECT\_NAME]}} & \texttt{\asciidq)} & \texttt{\asciisq)} \\
\datasetRowRule
\texttt{for\_loop\_s} & \codeblock{\texttt{for [ITEM\_VAR] in [ITEM\_SET]:}\\\codeindent\texttt{if len([ITEM\_VAR]) == 0:}\\\codeindent\codeindent\texttt{print(\asciisq{}invalid}} & \codeblock{\texttt{for [ITEM\_VAR] in [ITEM\_SET]:}\\\codeindent\texttt{if len([ITEM\_VAR]) == 0:}\\\codeindent\codeindent\texttt{print(\asciidq{}invalid}} & \texttt{\asciisq)} & \texttt{\asciidq)} \\
\datasetRowRule
\texttt{for\_loop\_d} & \codeblock{\texttt{for [ITEM\_VAR] in [ITEM\_SET]:}\\\codeindent\texttt{if len([ITEM\_VAR]) == 0:}\\\codeindent\codeindent\texttt{print(\asciidq{}invalid}} & \codeblock{\texttt{for [ITEM\_VAR] in [ITEM\_SET]:}\\\codeindent\texttt{if len([ITEM\_VAR]) == 0:}\\\codeindent\codeindent\texttt{print(\asciisq{}invalid}} & \texttt{\asciidq)} & \texttt{\asciisq)} \\
\datasetRowRule
\texttt{function\_call\_s} & \codeblock{\texttt{[FUNC\_NAME](\asciisq[FILE\_NAME]}} & \codeblock{\texttt{[FUNC\_NAME](\asciidq[FILE\_NAME]}} & \texttt{\asciisq)} & \texttt{\asciidq)} \\
\datasetRowRule
\texttt{function\_call\_d} & \codeblock{\texttt{[FUNC\_NAME](\asciidq[FILE\_NAME]}} & \codeblock{\texttt{[FUNC\_NAME](\asciisq[FILE\_NAME]}} & \texttt{\asciidq)} & \texttt{\asciisq)} \\
\datasetRowRule
\texttt{if\_clause\_1\_s} & \codeblock{\texttt{if [COND]:}\\\codeindent\texttt{print(\asciisq{}There is no topological sort,}\\\codeindent\texttt{the graph has a cycle}} & \codeblock{\texttt{if [COND]:}\\\codeindent\texttt{print(\asciidq{}There is no topological sort,}\\\codeindent\texttt{the graph has a cycle}} & \texttt{\asciisq)} & \texttt{\asciidq)} \\
\datasetRowRule
\texttt{if\_clause\_1\_d} & \codeblock{\texttt{if [COND]:}\\\codeindent\texttt{print(\asciidq{}There is no topological sort,}\\\codeindent\texttt{the graph has a cycle}} & \codeblock{\texttt{if [COND]:}\\\codeindent\texttt{print(\asciisq{}There is no topological sort,}\\\codeindent\texttt{the graph has a cycle}} & \texttt{\asciidq)} & \texttt{\asciisq)} \\
\datasetRowRule
\texttt{if\_clause\_2\_s} & \codeblock{\texttt{if degree not in range(3, [LIMIT\_VAR]-1):}\\\codeindent\texttt{print(\asciisq{}Degree should be in the range}\\\codeindent\texttt{[3, [LIMIT\_VAR]-1]}} & \codeblock{\texttt{if degree not in range(3, [LIMIT\_VAR]-1):}\\\codeindent\texttt{print(\asciidq{}Degree should be in the range}\\\codeindent\texttt{[3, [LIMIT\_VAR]-1]}} & \texttt{\asciisq)} & \texttt{\asciidq)} \\
\datasetRowRule
\texttt{if\_clause\_2\_d} & \codeblock{\texttt{if degree not in range(3, [LIMIT\_VAR]-1):}\\\codeindent\texttt{print(\asciidq{}Degree should be in the range}\\\codeindent\texttt{[3, [LIMIT\_VAR]-1]}} & \codeblock{\texttt{if degree not in range(3, [LIMIT\_VAR]-1):}\\\codeindent\texttt{print(\asciisq{}Degree should be in the range}\\\codeindent\texttt{[3, [LIMIT\_VAR]-1]}} & \texttt{\asciidq)} & \texttt{\asciisq)} \\
\datasetRowRule
\texttt{if\_clause\_3\_s} & \codeblock{\texttt{if [ALT\_VAR] < [DIST\_VAR].get(}\\\codeindent\texttt{[NODE\_VAR], float(\asciisq{}inf}} & \codeblock{\texttt{if [ALT\_VAR] < [DIST\_VAR].get(}\\\codeindent\texttt{[NODE\_VAR], float(\asciidq{}inf}} & \texttt{\asciisq)} & \texttt{\asciidq)} \\
\datasetRowRule
\texttt{if\_clause\_3\_d} & \codeblock{\texttt{if [ALT\_VAR] < [DIST\_VAR].get(}\\\codeindent\texttt{[NODE\_VAR], float(\asciidq{}inf}} & \codeblock{\texttt{if [ALT\_VAR] < [DIST\_VAR].get(}\\\codeindent\texttt{[NODE\_VAR], float(\asciisq{}inf}} & \texttt{\asciidq)} & \texttt{\asciisq)} \\
\datasetRowRule
\texttt{method\_call\_s} & \codeblock{\texttt{[GRAPH\_VAR].read\_data(}\\\codeindent\texttt{\asciisq[GRAPH\_FILE]}} & \codeblock{\texttt{[GRAPH\_VAR].read\_data(}\\\codeindent\texttt{\asciidq[GRAPH\_FILE]}} & \texttt{\asciisq)} & \texttt{\asciidq)} \\
\datasetRowRule
\texttt{method\_call\_d} & \codeblock{\texttt{[GRAPH\_VAR].read\_data(}\\\codeindent\texttt{\asciidq[GRAPH\_FILE]}} & \codeblock{\texttt{[GRAPH\_VAR].read\_data(}\\\codeindent\texttt{\asciisq[GRAPH\_FILE]}} & \texttt{\asciidq)} & \texttt{\asciisq)} \\
\datasetRowRule
\texttt{nested\_loop\_s} & \codeblock{\texttt{for [OUTER] in [SET]:}\\\codeindent\texttt{for [INNER] in [OUTER]:}\\\codeindent\codeindent\texttt{if ([INNER] == ()):}\\\codeindent\codeindent\codeindent\texttt{print(\asciisq{}No [INNER] at}\\\codeindent\codeindent\codeindent\texttt{the current iteration}} & \codeblock{\texttt{for [OUTER] in [SET]:}\\\codeindent\texttt{for [INNER] in [OUTER]:}\\\codeindent\codeindent\texttt{if ([INNER] == ()):}\\\codeindent\codeindent\codeindent\texttt{print(\asciidq{}No [INNER] at}\\\codeindent\codeindent\codeindent\texttt{the current iteration}} & \texttt{\asciisq)} & \texttt{\asciidq)} \\
\datasetRowRule
\texttt{nested\_loop\_d} & \codeblock{\texttt{for [OUTER] in [SET]:}\\\codeindent\texttt{for [INNER] in [OUTER]:}\\\codeindent\codeindent\texttt{if ([INNER] == ()):}\\\codeindent\codeindent\codeindent\texttt{print(\asciidq{}No [INNER] at}\\\codeindent\codeindent\codeindent\texttt{the current iteration}} & \codeblock{\texttt{for [OUTER] in [SET]:}\\\codeindent\texttt{for [INNER] in [OUTER]:}\\\codeindent\codeindent\texttt{if ([INNER] == ()):}\\\codeindent\codeindent\codeindent\texttt{print(\asciisq{}No [INNER] at}\\\codeindent\codeindent\codeindent\texttt{the current iteration}} & \texttt{\asciidq)} & \texttt{\asciisq)} \\
\datasetRowRule
\texttt{print\_stmt\_1\_s} & \codeblock{\codeindent\texttt{print(\asciisq[PRINT\_TEXT]}} & \codeblock{\codeindent\texttt{print(\asciidq[PRINT\_TEXT]}} & \texttt{\asciisq)} & \texttt{\asciidq)} \\
\datasetRowRule
\texttt{print\_stmt\_1\_d} & \codeblock{\codeindent\texttt{print(\asciidq[PRINT\_TEXT]}} & \codeblock{\codeindent\texttt{print(\asciisq[PRINT\_TEXT]}} & \texttt{\asciidq)} & \texttt{\asciisq)} \\
\datasetRowRule
\texttt{print\_stmt\_2\_s} & \codeblock{\texttt{print(\asciisq[USAGE\_TEXT]}} & \codeblock{\texttt{print(\asciidq[USAGE\_TEXT]}} & \texttt{\asciisq)} & \texttt{\asciidq)} \\
\datasetRowRule
\texttt{print\_stmt\_2\_d} & \codeblock{\texttt{print(\asciidq[USAGE\_TEXT]}} & \codeblock{\texttt{print(\asciisq[USAGE\_TEXT]}} & \texttt{\asciidq)} & \texttt{\asciisq)} \\
\datasetRowRule
\texttt{print\_stmt\_3\_s} & \codeblock{\texttt{print(\asciisq[QUESTION\_TEXT]}} & \codeblock{\texttt{print(\asciidq[QUESTION\_TEXT]}} & \texttt{\asciisq)} & \texttt{\asciidq)} \\
\datasetRowRule
\texttt{print\_stmt\_3\_d} & \codeblock{\texttt{print(\asciidq[QUESTION\_TEXT]}} & \codeblock{\texttt{print(\asciisq[QUESTION\_TEXT]}} & \texttt{\asciidq)} & \texttt{\asciisq)} \\
\datasetRowRule
\texttt{print\_stmt\_4\_s} & \codeblock{\codeindent\codeindent\codeindent\texttt{print(\asciisq[STATUS\_TEXT]}} & \codeblock{\codeindent\codeindent\codeindent\texttt{print(\asciidq[STATUS\_TEXT]}} & \texttt{\asciisq)} & \texttt{\asciidq)} \\
\datasetRowRule
\texttt{print\_stmt\_4\_d} & \codeblock{\codeindent\codeindent\codeindent\texttt{print(\asciidq[STATUS\_TEXT]}} & \codeblock{\codeindent\codeindent\codeindent\texttt{print(\asciisq[STATUS\_TEXT]}} & \texttt{\asciidq)} & \texttt{\asciisq)} \\
\datasetRowRule
\texttt{raise\_error\_s} & \codeblock{\texttt{raise ValueError(\asciisq[ERROR\_TEXT]}} & \codeblock{\texttt{raise ValueError(\asciidq[ERROR\_TEXT]}} & \texttt{\asciisq)} & \texttt{\asciidq)} \\
\datasetRowRule
\texttt{raise\_error\_d} & \codeblock{\texttt{raise ValueError(\asciidq[ERROR\_TEXT]}} & \codeblock{\texttt{raise ValueError(\asciisq[ERROR\_TEXT]}} & \texttt{\asciidq)} & \texttt{\asciisq)} \\
\datasetRowRule
\texttt{string\_split\_s} & \codeblock{\texttt{def [FUNC]([ARG]):}\\\codeindent\texttt{[LEFT], [RIGHT] =}\\\codeindent\texttt{[ARG].split(\asciisq-}} & \codeblock{\texttt{def [FUNC]([ARG]):}\\\codeindent\texttt{[LEFT], [RIGHT] =}\\\codeindent\texttt{[ARG].split(\asciidq-}} & \texttt{\asciisq)} & \texttt{\asciidq)} \\
\datasetRowRule
\texttt{string\_split\_d} & \codeblock{\texttt{def [FUNC]([ARG]):}\\\codeindent\texttt{[LEFT], [RIGHT] =}\\\codeindent\texttt{[ARG].split(\asciidq-}} & \codeblock{\texttt{def [FUNC]([ARG]):}\\\codeindent\texttt{[LEFT], [RIGHT] =}\\\codeindent\texttt{[ARG].split(\asciisq-}} & \texttt{\asciidq)} & \texttt{\asciisq)} \\
\datasetRowRule
\texttt{strftime\_call\_s} & \codeblock{\texttt{[TIME\_VAR] = datetime.datetime.now()}\\\codeindent\texttt{.strftime(\asciisq\%Y\%m\%d\_\%H\%M\%S}} & \codeblock{\texttt{[TIME\_VAR] = datetime.datetime.now()}\\\codeindent\texttt{.strftime(\asciidq\%Y\%m\%d\_\%H\%M\%S}} & \texttt{\asciisq)} & \texttt{\asciidq)} \\
\datasetRowRule
\texttt{strftime\_call\_d} & \codeblock{\texttt{[TIME\_VAR] = datetime.datetime.now()}\\\codeindent\texttt{.strftime(\asciidq\%Y\%m\%d\_\%H\%M\%S}} & \codeblock{\texttt{[TIME\_VAR] = datetime.datetime.now()}\\\codeindent\texttt{.strftime(\asciisq\%Y\%m\%d\_\%H\%M\%S}} & \texttt{\asciidq)} & \texttt{\asciisq)} \\
\datasetBottomRule
\end{tabular}
\end{table}
\FloatBarrier

\begin{table}[H]
\centering
\caption{else-elif dataset templates. The task contrasts Python branches where \texttt{else} should be followed by \texttt{:\textbackslash n}, while \texttt{elif} should be followed by any other token. In visualizations, template labels additionally carry \texttt{\_else} or \texttt{\_elif} suffixes for the two directions; these suffixes are omitted here for space, and the prompt examples show one representative direction.}
\label{tab:else-elif-templates}
\tiny
\setlength{\tabcolsep}{1.5pt}
\renewcommand{\arraystretch}{1.08}
\begin{tabular}{@{}!{\vrule width 0.9pt}p{0.145\linewidth}p{0.335\linewidth}p{0.335\linewidth}p{0.075\linewidth}p{0.075\linewidth}!{\vrule width 0.9pt}@{}}
\datasetTopRule
\datasetHeaderRow
\datasetRowRule
\texttt{return\_result} & \codeblock{\texttt{if [RESULT]:}\\\codeindent\texttt{[CACHE] = [RESULT]}\\\codeindent\texttt{return [RESULT]}\\\texttt{else}} & \codeblock{\texttt{if [RESULT]:}\\\codeindent\texttt{[CACHE] = [RESULT]}\\\codeindent\texttt{return [RESULT]}\\\texttt{elif}} & \texttt{:\textbackslash n} & other tokens \\
\datasetRowRule
\texttt{answer\_check} & \codeblock{\texttt{if eval([PROBLEM]) == [ANSWER]:}\\\codeindent\texttt{print([MSG])}\\\texttt{elif}} & \codeblock{\texttt{if eval([PROBLEM]) == [ANSWER]:}\\\codeindent\texttt{print([MSG])}\\\texttt{else}} & other tokens & \texttt{:\textbackslash n} \\
\datasetRowRule
\texttt{append\_guard} & \codeblock{\texttt{if [COUNT\_MAP][[NODE].topic] >= [TARGET]:}\\\codeindent\texttt{[LIST].append([NODE])}\\\texttt{else}} & \codeblock{\texttt{if [COUNT\_MAP][[NODE].topic] >= [TARGET]:}\\\codeindent\texttt{[LIST].append([NODE])}\\\texttt{elif}} & \texttt{:\textbackslash n} & other tokens \\
\datasetRowRule
\texttt{program\_dir\_chain} & \codeblock{\texttt{if [DIR] == 0:}\\\codeindent\texttt{[DIR] = [NEG]}\\\texttt{elif [DIR] == 1:}\\\codeindent\texttt{[DIR] = [POS]}\\\texttt{elif}} & \codeblock{\texttt{if [DIR] == 0:}\\\codeindent\texttt{[DIR] = [NEG]}\\\texttt{elif [DIR] == 1:}\\\codeindent\texttt{[DIR] = [POS]}\\\texttt{else}} & other tokens & \texttt{:\textbackslash n} \\
\datasetRowRule
\texttt{schedule\_branch} & \codeblock{\texttt{if [TIME] > [LAST]:}\\\codeindent\texttt{[START] = [LAST] + 1}\\\texttt{else}} & \codeblock{\texttt{if [TIME] > [LAST]:}\\\codeindent\texttt{[START] = [LAST] + 1}\\\texttt{elif}} & \texttt{:\textbackslash n} & other tokens \\
\datasetRowRule
\texttt{strategy\_branch} & \codeblock{\texttt{if [STRATEGY] == 'bfs':}\\\codeindent\texttt{return bfs(...)}\\\texttt{elif [STRATEGY] == 'dfs':}\\\codeindent\texttt{return dfs(...)}\\\texttt{elif}} & \codeblock{\texttt{if [STRATEGY] == 'bfs':}\\\codeindent\texttt{return bfs(...)}\\\texttt{elif [STRATEGY] == 'dfs':}\\\codeindent\texttt{return dfs(...)}\\\texttt{else}} & other tokens & \texttt{:\textbackslash n} \\
\datasetRowRule
\texttt{equality\_case} & \codeblock{\texttt{if [CONSTRAINT].is\_equality():}\\\codeindent\texttt{return [Z3] == -[COEFF]['1']}\\\texttt{else}} & \codeblock{\texttt{if [CONSTRAINT].is\_equality():}\\\codeindent\texttt{return [Z3] == -[COEFF]['1']}\\\texttt{elif}} & \texttt{:\textbackslash n} & other tokens \\
\datasetRowRule
\texttt{stats\_update} & \codeblock{\texttt{if [ROOT] not in [STATS]:}\\\codeindent\texttt{[STATS][[ROOT]] =}\\\codeindent\texttt{([AREA], [COUNT])}\\\texttt{elif}} & \codeblock{\texttt{if [ROOT] not in [STATS]:}\\\codeindent\texttt{[STATS][[ROOT]] =}\\\codeindent\texttt{([AREA], [COUNT])}\\\texttt{else}} & other tokens & \texttt{:\textbackslash n} \\
\datasetRowRule
\texttt{inline\_threshold} & \codeblock{\texttt{if ([NUM] <= [LIMIT]):}\\\codeindent\texttt{return [FUNC\_CALL]}\\\texttt{else}} & \codeblock{\texttt{if ([NUM] <= [LIMIT]):}\\\codeindent\texttt{return [FUNC\_CALL]}\\\texttt{elif}} & \texttt{:\textbackslash n} & other tokens \\
\datasetRowRule
\texttt{string\_align} & \codeblock{\texttt{if [CHAR] in [STRING]:}\\\codeindent\texttt{[LEFT], [RIGHT] =}\\\codeindent\texttt{[STRING].split([CHAR], 1)}\\\texttt{elif}} & \codeblock{\texttt{if [CHAR] in [STRING]:}\\\codeindent\texttt{[LEFT], [RIGHT] =}\\\codeindent\texttt{[STRING].split([CHAR], 1)}\\\texttt{else}} & other tokens & \texttt{:\textbackslash n} \\
\datasetRowRule
\texttt{binary\_search} & \codeblock{\texttt{if [DATA][[INDEX]] < [VALUE]:}\\\codeindent\texttt{[RESULT] = [INDEX]}\\\codeindent\texttt{[LOW] = [INDEX] + 1}\\\texttt{else}} & \codeblock{\texttt{if [DATA][[INDEX]] < [VALUE]:}\\\codeindent\texttt{[RESULT] = [INDEX]}\\\codeindent\texttt{[LOW] = [INDEX] + 1}\\\texttt{elif}} & \texttt{:\textbackslash n} & other tokens \\
\datasetRowRule
\texttt{truthy\_call} & \codeblock{\texttt{if [ROOM]:}\\\codeindent\texttt{set\_parameter([BOARD], [PARAM],}\\\codeindent\texttt{[ROOM\_NAME] + ' - ' + [ROOM])}\\\texttt{elif}} & \codeblock{\texttt{if [ROOM]:}\\\codeindent\texttt{set\_parameter([BOARD], [PARAM],}\\\codeindent\texttt{[ROOM\_NAME] + ' - ' + [ROOM])}\\\texttt{else}} & other tokens & \texttt{:\textbackslash n} \\
\datasetRowRule
\texttt{compare\_objects} & \codeblock{\texttt{if isinstance([LEFT], (list, tuple)):}\\\codeindent\texttt{if isinstance([RIGHT], (list, tuple)):}\\\codeindent\codeindent\texttt{compare\_objects(...)}\\\codeindent\texttt{else}} & \codeblock{\texttt{if isinstance([LEFT], (list, tuple)):}\\\codeindent\texttt{if isinstance([RIGHT], (list, tuple)):}\\\codeindent\codeindent\texttt{compare\_objects(...)}\\\codeindent\texttt{elif}} & \texttt{:\textbackslash n} & other tokens \\
\datasetRowRule
\texttt{cell\_toggle} & \codeblock{\texttt{if [BOARD][[ROW]][[COL]] == [OFF]:}\\\codeindent\texttt{[BOARD][[ROW]][[COL]] = 1}\\\texttt{elif}} & \codeblock{\texttt{if [BOARD][[ROW]][[COL]] == [OFF]:}\\\codeindent\texttt{[BOARD][[ROW]][[COL]] = 1}\\\texttt{else}} & other tokens & \texttt{:\textbackslash n} \\
\datasetRowRule
\texttt{random\_split} & \codeblock{\texttt{if random.random() < 0.5:}\\\codeindent\texttt{[OUT].append(}\\\codeindent\texttt{f"ML,\{[PEER]\},\{[ELEMENT]\}")}\\\texttt{else}} & \codeblock{\texttt{if random.random() < 0.5:}\\\codeindent\texttt{[OUT].append(}\\\codeindent\texttt{f"ML,\{[PEER]\},\{[ELEMENT]\}")}\\\texttt{elif}} & \texttt{:\textbackslash n} & other tokens \\
\datasetRowRule
\texttt{none\_default} & \codeblock{\texttt{if [END] is None:}\\\codeindent\texttt{[GOAL] = ([ROWS]-1, [COLS]-1)}\\\texttt{elif}} & \codeblock{\texttt{if [END] is None:}\\\codeindent\texttt{[GOAL] = ([ROWS]-1, [COLS]-1)}\\\texttt{else}} & other tokens & \texttt{:\textbackslash n} \\
\datasetBottomRule
\end{tabular}
\end{table}
\FloatBarrier

\section[Proof of CEAP Scoring Theorem]{Proof of \cref{thm:Cond_VS_IG}}
\label{app:proof_of_Cond_VS_IG}
\begin{proof}
We first consider the CEAP scoring function.
Given any branch $b\in B$, suppose the edge has size $y \in \mathbb{R}^{d_b}$, we have:
$$
\begin{aligned}
I_{\text{Cond}}(y_b)
&= \sum_{i\in [d_b]}\sum_{j\in [d_{in}]} ({x_1}_j - {x_0}_j)
   \int_{0}^{1}
   \frac{\partial F(x_0 + \alpha(x_1 - x_0))}{\partial {y_b}_i}
   \frac{\partial {y_b}_i}{\partial x_j}
   \,\mathrm{d}\alpha\\
&= \sum_{i\in [d_b]}\int_{0}^{1}
   \frac{\partial F(x_0 + \alpha(x_1 - x_0))}{\partial {y_b}_i}\underbrace{\sum_{j\in [d_{in}]}\frac{\partial {y_b}_i}{\partial x_j}
   \,\mathrm{d}\alpha({x_1}_j - {x_0}_j)}_{(=\mathrm{d}{y_b}_i)}\\
   &= \sum_{i\in [d_b]}\int_{0}^{1}
   \frac{\partial f_b({y_b}_i(\alpha))}{\partial {y_b}_i}{\mathrm{d}{y_b}_i(\alpha)} = \int_{\gamma_{y_b}}\sum_{i\in [d_b]}
   \frac{\partial f_b({y_b}_i)}{\partial {y_b}_i}{\mathrm{d}{y_b}_i}\\
   &= \int_{\gamma_{y_b}}\left(\nabla_{y_b}f_b\right)\cdot \boldsymbol{\mathrm{d}r}= f_b(x_1) - f_b(x_0) = \Delta f_b.
\end{aligned}
$$
In the above, $\gamma_{y_b}$ is the trajectory that $y_b$ follows when $\alpha$ goes from $0$ to $1$, and the second last equality is due to that the integrand is a gradient, and thus the result is path-independent.
The above derivation shows that conductance satisfies additive order preservation, almost by design.

On the other hand, we can come up with the following counterexample for IG.
Suppose we are studying a two-branched function:
$F = g_1(y_1) + g_2(y_2) = g_1(f_1(x))+ g_2(f_2(x))$, where
\[
f_1(x)=
\begin{cases}
\dfrac{1}{3}x, & x<1,\\[6pt]
x-\dfrac{2}{3}, & 1\le x<2,\\[6pt]
\dfrac{1}{3}x+\dfrac{2}{3}, & x\ge 2,
\end{cases}
\qquad
g_1(y_1)=
\begin{cases}
y_1, & y_1<\dfrac{1}{3},\\[6pt]
\dfrac{1}{3}, & \dfrac{1}{3}\le y_1<\dfrac{4}{3},\\[6pt]
y_1-1, & y_1\ge \dfrac{4}{3},
\end{cases}
\qquad
\begin{aligned}
f_2(x) &= x,\\
g_2(y_2) &= \dfrac{1}{3}y_2.
\end{aligned}
\]
We study the process where the input $x$ is moved from $0$ to $3$.
In this case, there are two branches, denoted by set $B=\{1,2\}$.
We can compute that $\vert \Delta f_1\vert = \frac{2}{3}$, and  $\vert \Delta f_2\vert = 1$.
We can also compute:
\begin{align}
I_{\mathrm{IG}}(y_1) &= \frac{5}{3}\int_{0}^1 \left.\frac{\partial F}{\partial y_1}\right|_{x = 3\alpha}\mathrm{d}\alpha
 = \frac{5}{3}\int_{0}^3 \left.\frac{\partial F}{\partial y_1}\right|_{x}\mathrm{d}x \label{eq:IG_y1_counterexample}
\\
 &=\frac{5}{3}\left(\int_0^1 \mathrm{d}x + \int_1^2 0 \mathrm{d}x+\int_2^3 \mathrm{d}x\right) = \frac{10}{3}.
 \label{eq:three_integral}
\end{align}

\begin{align*}
   I_{\mathrm{IG}}(y_2) &= 3\int_{0}^1 \left.\frac{\partial F}{\partial y_2}\right|_{x = 3\alpha}\mathrm{d}\alpha= 3\int_{0}^3 \left.\frac{\partial F}{\partial y_2}\right|_{x}\mathrm{d}x = 3\int_{0}^3 \frac{1}{3}\mathrm{d}x = 3.
\end{align*}
Hence $\vert\Delta f_1\vert< \vert\Delta f_2\vert$ and yet $\vert I_{\mathrm{IG}}(y_1)\vert> \vert I_{\mathrm{IG}}(y_2)\vert$, breaking additive order preservation.
\end{proof}

\begin{remark}
    One might notice that $g_1(\cdot)$ above is not differentiable, and thus we are implicitly using the one-sided derivative in place of $\frac{\partial F}{\partial y_1}$.
    However, our statement should still hold for such non-smooth functions.
    It is easy to see that non-differentiability is not the root cause of IG breaking additive order preservation.
    First, in the counterexample, the non-differentiability occurs on a Lebesgue-null set and hence does not affect the integral.
    Second, we can smoothen the kinks in $g_1(\cdot)$ using techniques like huberized ReLU \citep{huberize_relu} or softplus \citep{softplus}, where the kinks are replaced with smooth arcs whose span can be arbitrarily small.
    Thus, $I_{\mathrm
    {IG}}(y_1)$ for the smoothened $g_1$ can be arbitrarily close to what we compute above.
    Constructing such $g_1$ might involve tedious notation and blur the main message.
    We settle for the current counterexample for its simplicity and clarity:
    The reason why the magnitude of $I_{\mathrm
    {IG}}(y_1)$ relative to $I_{\mathrm
    {IG}}(y_2)$ seems to lose its meaning is that the integrals receives the same weight $\frac{5}{3}$ in \cref{eq:three_integral}, while, in fact, the weighting should reflect how much $y_1$ moves when $x$ moves through each of the three integration intervals.
\end{remark}

\begin{remark}
The fact that conductance satisfies additive order preservation may be understood as a result of linearity and partition consistency \citep{dhamdhere2018how} of conductance.
Nevertheless, these two properties are not necessary to satisfy additive order preservation.
In fact, if we engineer a new scoring method, which simply assigns to each edge the squared conductance score, additive order preservation is still satisfied, while linearity and partition consistency will be breached.
\end{remark}

\section{Complete Results for Pairwise Jaccard Index and Unfaithfulness}
\label{app: complete ceap vs eap-ig jaccard and unfaithfulness}

In this section, we compare unfaithfulness and pairwise Jaccard index yielded by CEAP and EAP-IG.
Overall, we found that while CEAP does not have a consistent advantage over EAP-IG in terms of unfaithfulness, it clearly outperforms the latter for pairwise Jaccard index, or at the very least matches it.
Specifically, for unfaithfulness, the quicker it drops with the increase of the number of edges, the better.
This is based on the principle of minimality \citep{wang2023interpretability}.
For the pairwise Jaccard index, the higher, the better.

A circuit-finding scheme could in principle game the PJI metric by always selecting the same circuit with no regard to the task.
We check that this is not the case for CEAP: it achieves low unfaithfulness across the tasks shown below.
Nonetheless, we do not find compelling evidence that CEAP consistently attains lower unfaithfulness than EAP-IG.
For instance, on GPT-2 XL, CEAP achieves lower unfaithfulness with smaller circuits for nearly all SVA templates, but the outcomes for IOI and greater-than are inconclusive.
This is understandable: both CEAP and EAP-IG are path-integral methods that characterize trajectories in the activation space as inputs move from corrupted samples to their clean counterparts.
However, once we patch the activations of the excluded edges, the resulting trajectory no longer aligns with the paths used to assign their scores.
Compounded with the nonlinearity of the network, this introduces errors that are difficult to control or even quantify and can drown out the advantage of CEAP over EAP-IG.

\subsection{GPT-2 XL}

\subsubsection{SVA}
\begin{figure}[H]
    \centering
    \includegraphics[width=\textwidth]{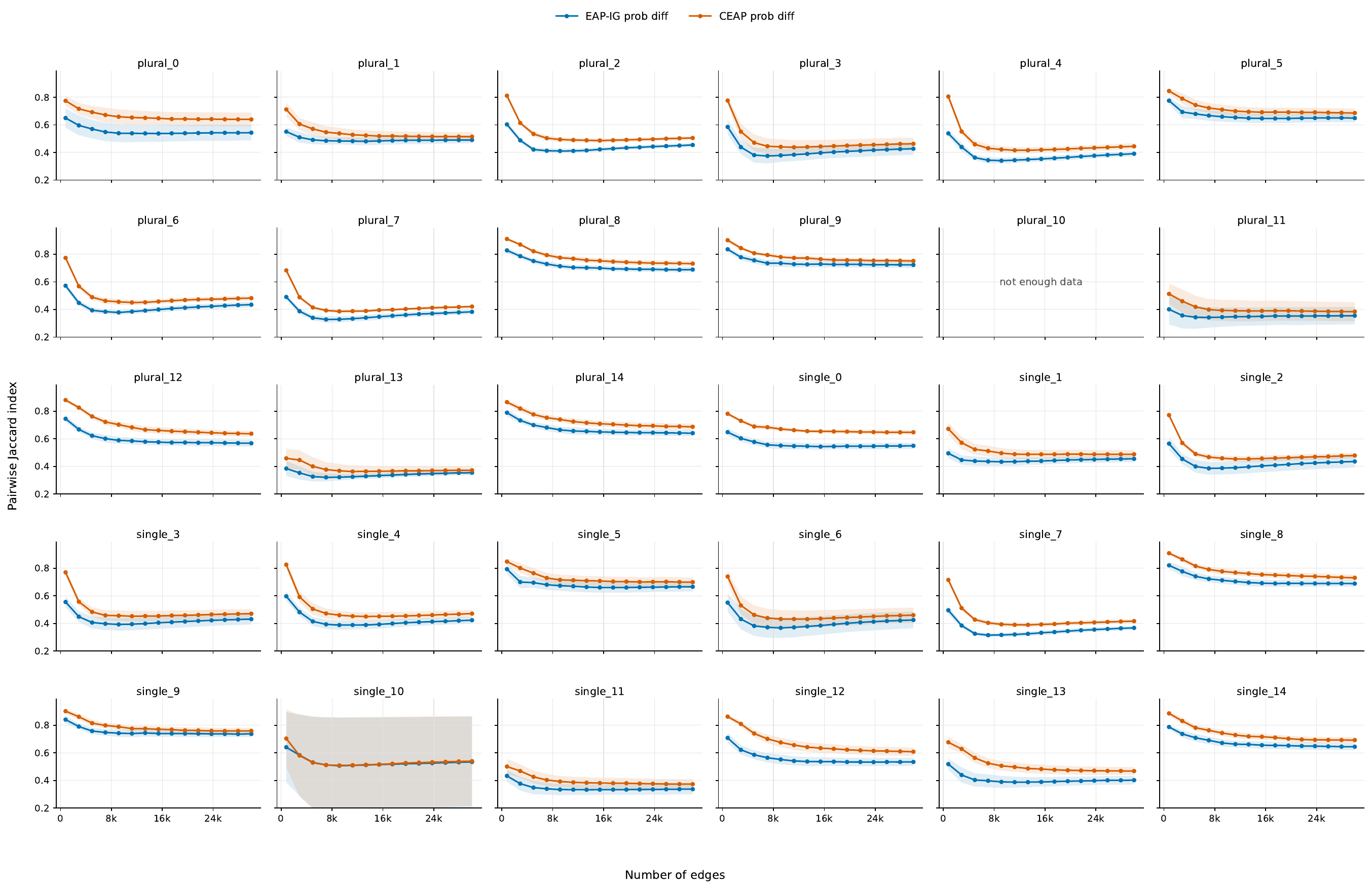}
    \caption{Pairwise Jaccard index vs.\ number of edges for GPT-2 XL on SVA.}
\end{figure}

\begin{figure}[H]
    \centering
    \includegraphics[width=\textwidth]{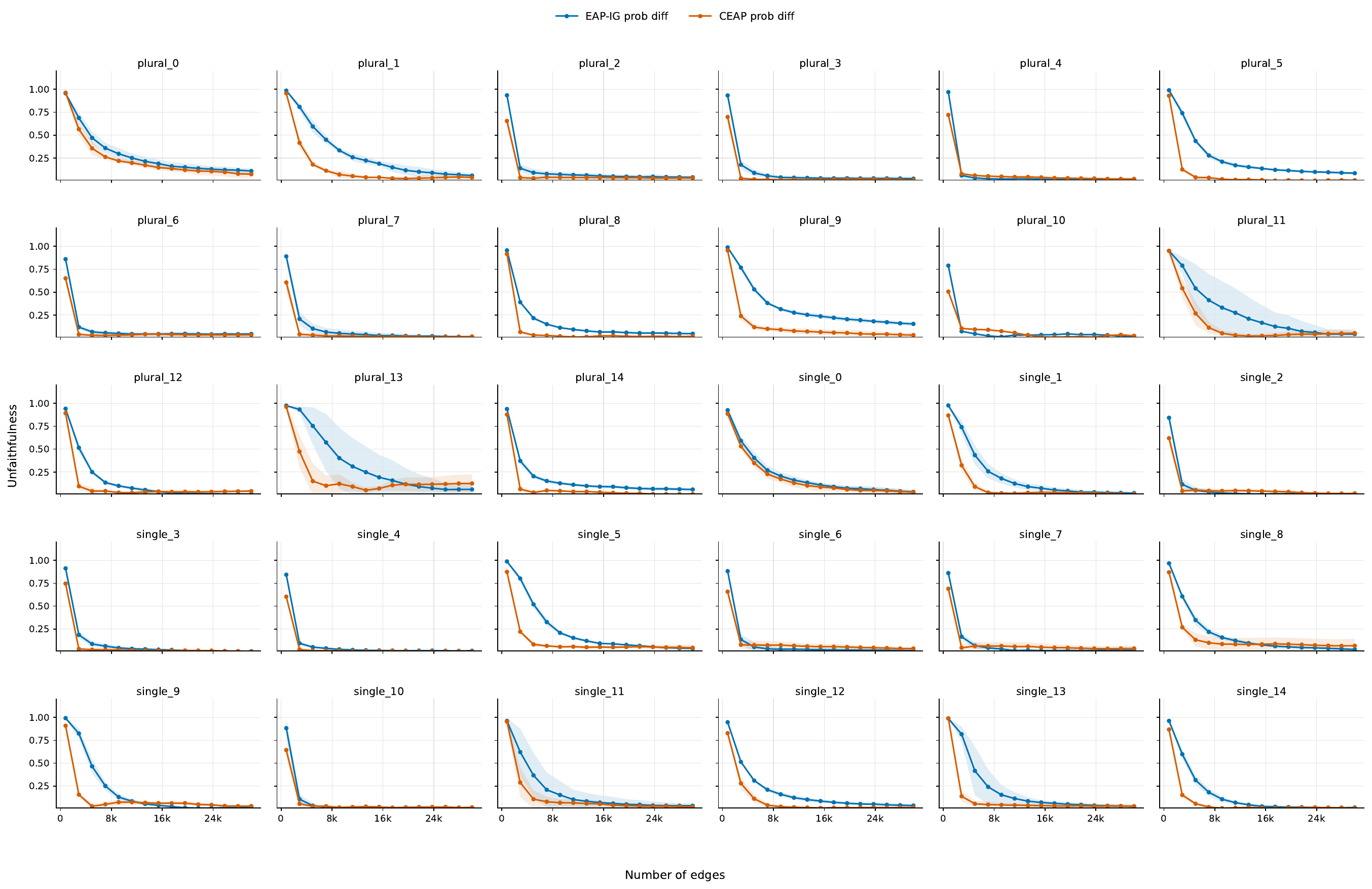}
    \caption{Unfaithfulness vs.\ number of edges for GPT-2 XL on SVA.}
\end{figure}

\subsubsection{IOI}
\begin{figure}[H]
    \centering
    \includegraphics[width=\textwidth]{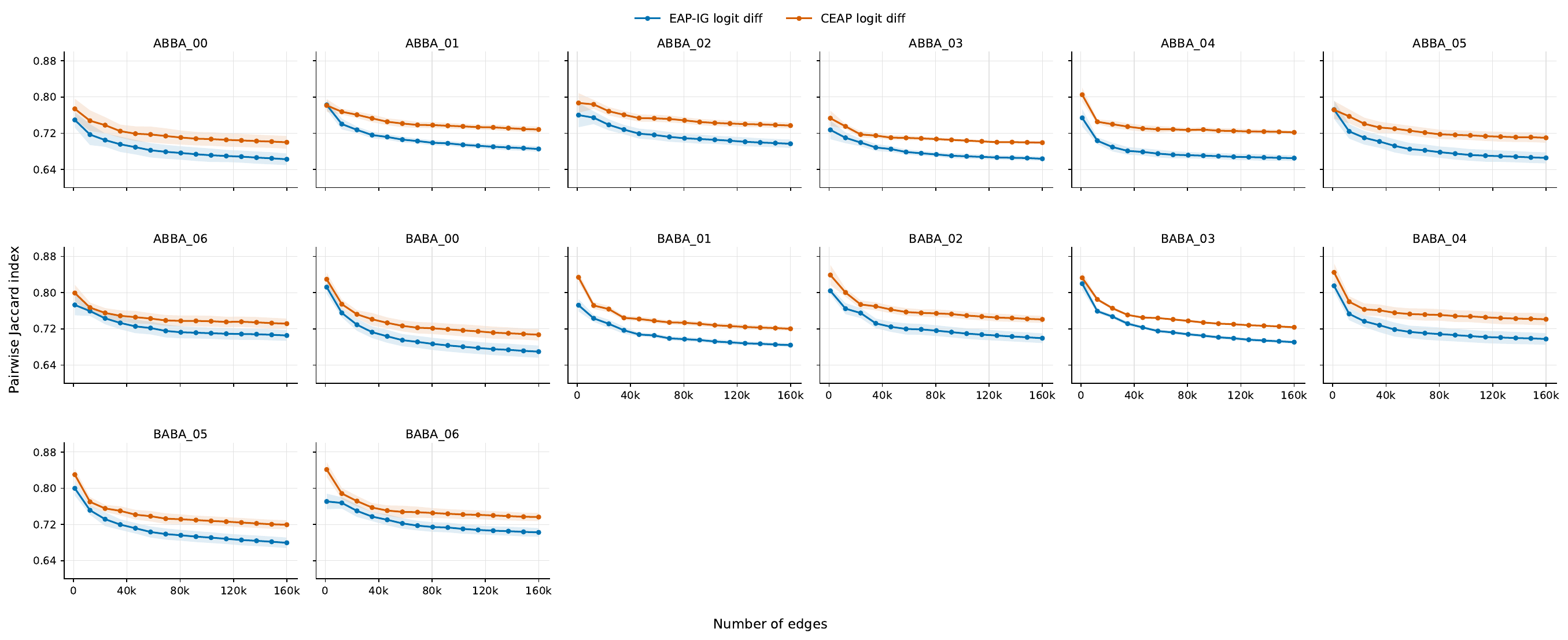}
    \caption{Pairwise Jaccard index vs.\ number of edges for GPT-2 XL on IOI.}
\end{figure}

\begin{figure}[H]
    \centering
    \includegraphics[width=\textwidth]{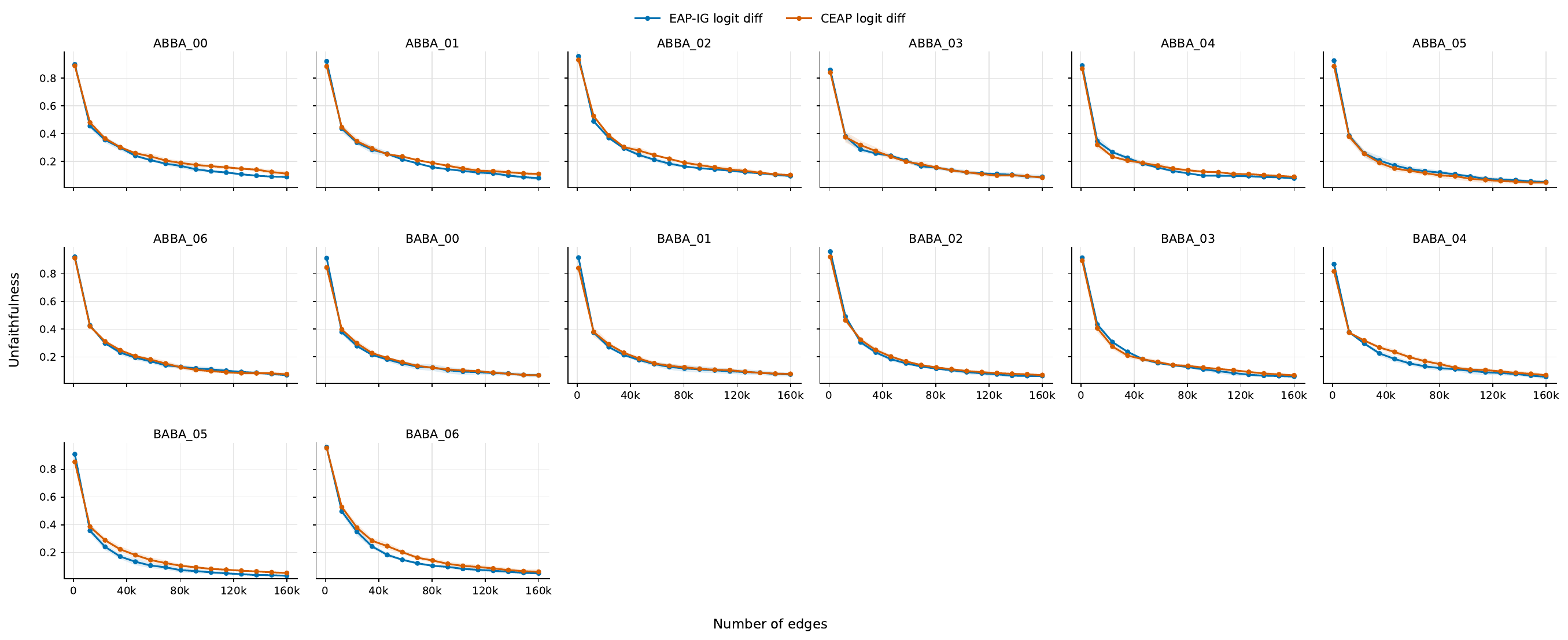}
    \caption{Unfaithfulness vs.\ number of edges for GPT-2 XL on IOI.}
\end{figure}

\subsubsection{Greater-than}
\begin{figure}[H]
    \centering
    \includegraphics[width=\textwidth]{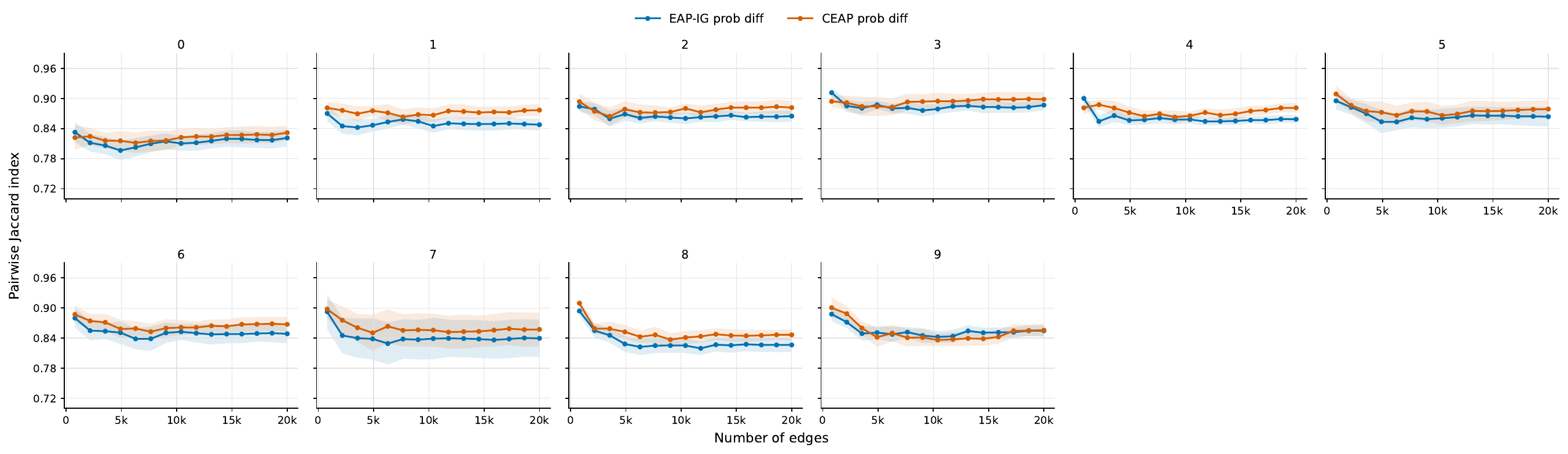}
    \caption{Pairwise Jaccard index vs.\ number of edges for GPT-2 XL on Greater-than.}
\end{figure}

\begin{figure}[H]
    \centering
    \includegraphics[width=\textwidth]{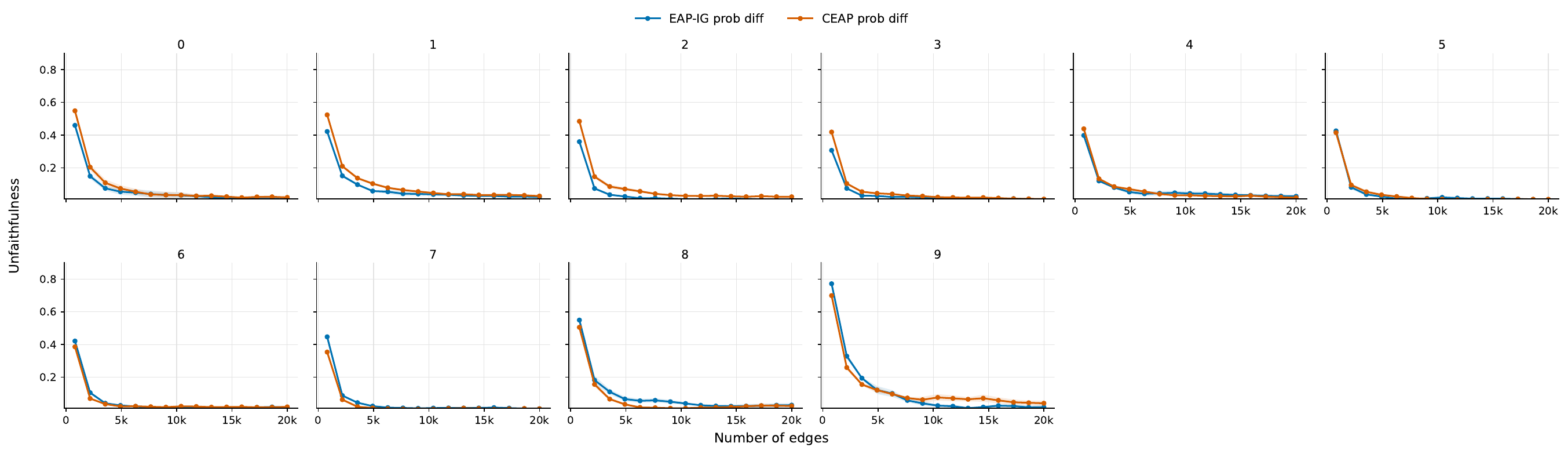}
    \caption{Unfaithfulness vs.\ number of edges for GPT-2 XL on Greater-than.}
\end{figure}

\subsection{Pythia-2.8B}

\subsubsection{SVA}
\begin{figure}[H]
    \centering
    \includegraphics[width=\textwidth]{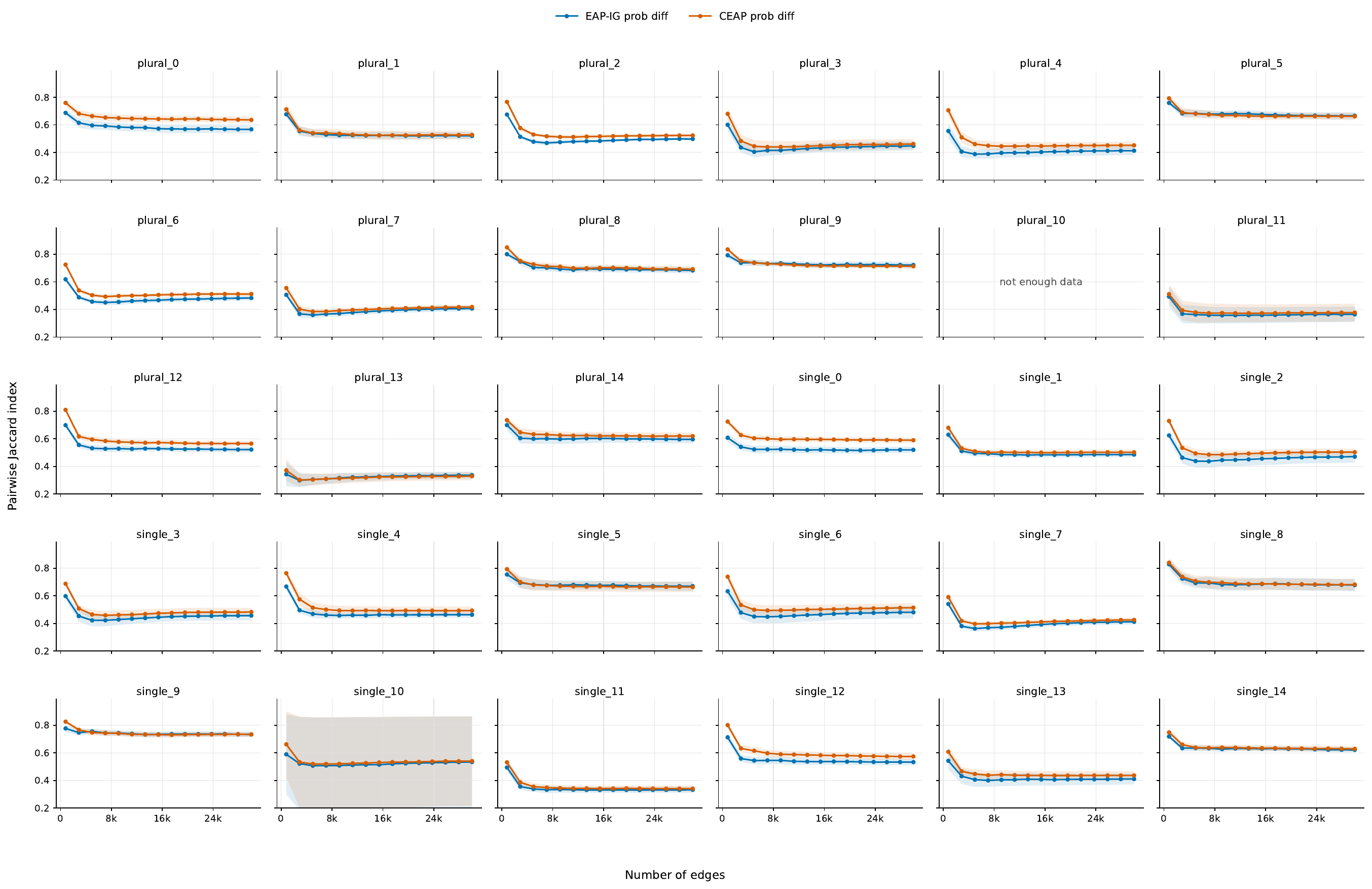}
    \caption{Pairwise Jaccard index vs.\ number of edges for Pythia-2.8B on SVA.}
\end{figure}

\begin{figure}[H]
    \centering
    \includegraphics[width=\textwidth]{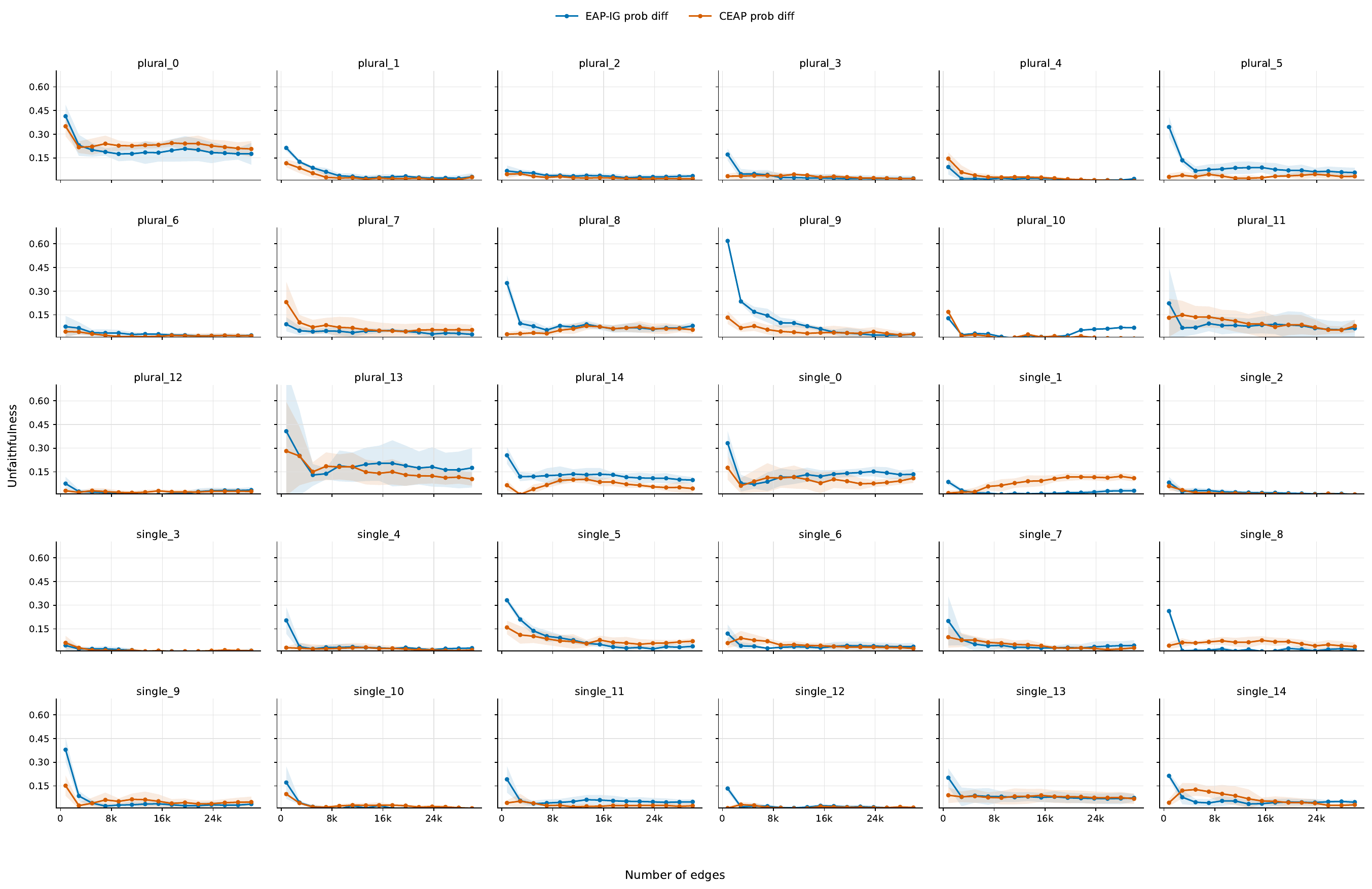}
    \caption{Unfaithfulness vs.\ number of edges for Pythia-2.8B on SVA.}
\end{figure}

\subsubsection{IOI}
\begin{figure}[H]
    \centering
    \includegraphics[width=\textwidth]{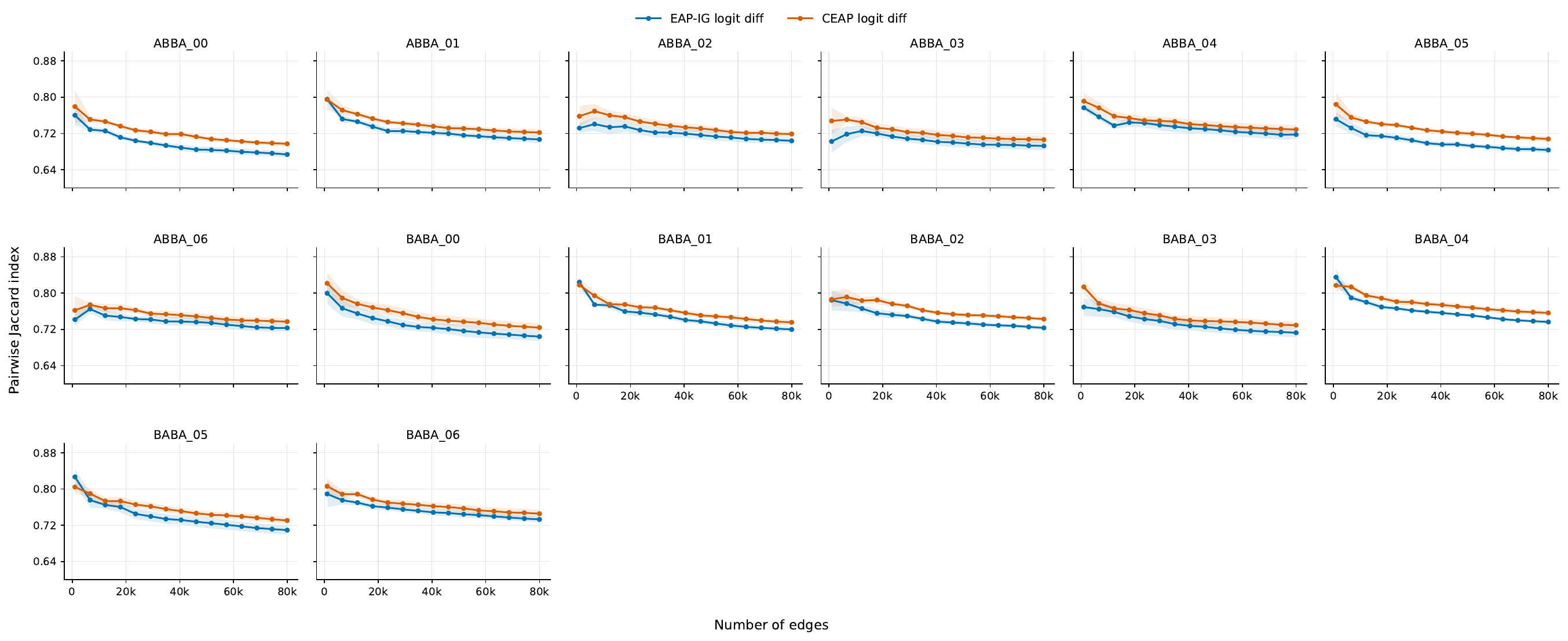}
    \caption{Pairwise Jaccard index vs.\ number of edges for Pythia-2.8B on IOI.}
\end{figure}

\begin{figure}[H]
    \centering
    \includegraphics[width=\textwidth]{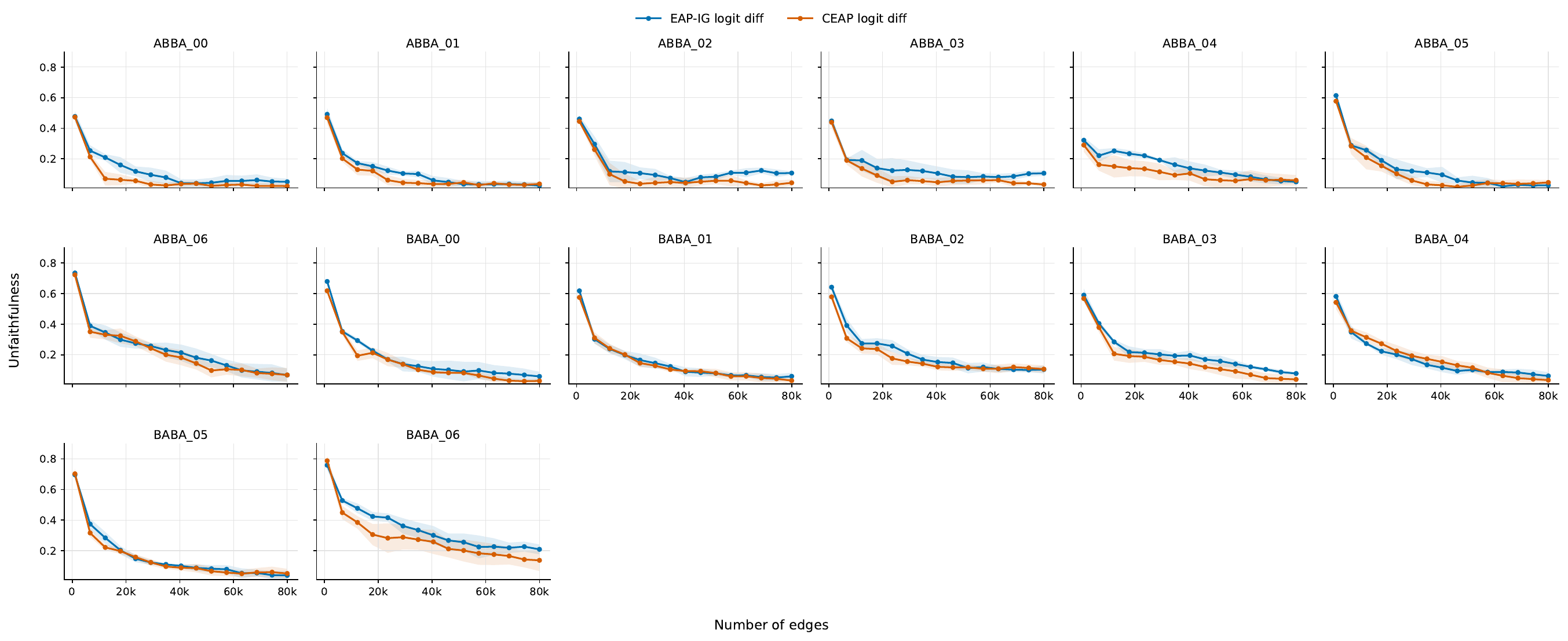}
    \caption{Unfaithfulness vs.\ number of edges for Pythia-2.8B on IOI.}
\end{figure}

\subsubsection{Greater-than}
\begin{figure}[H]
    \centering
    \includegraphics[width=\textwidth]{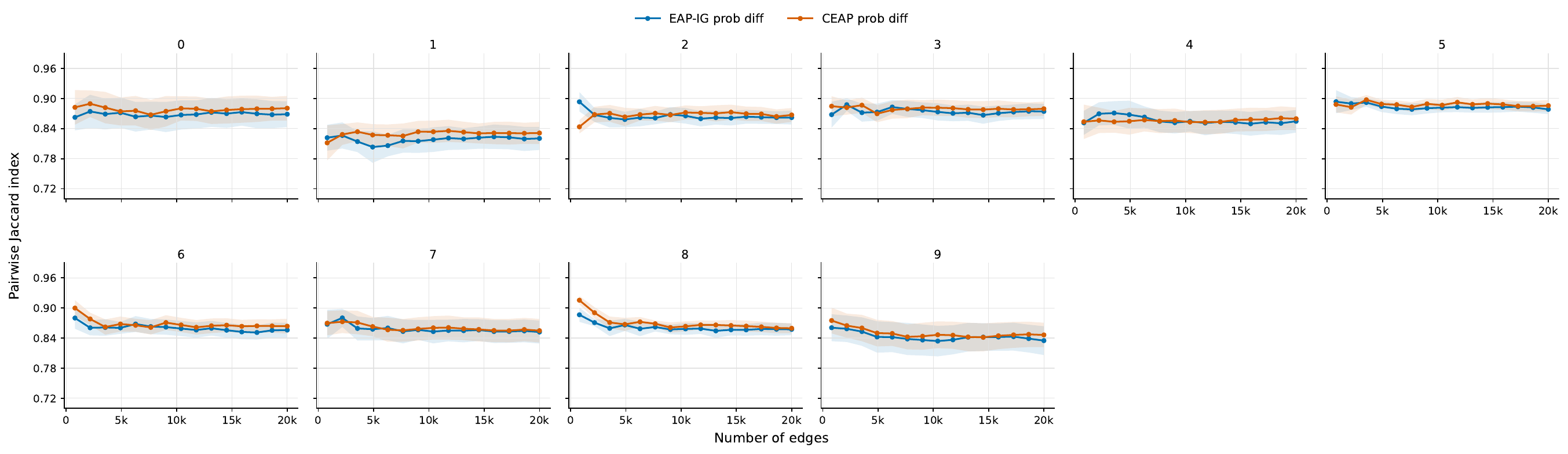}
    \caption{Pairwise Jaccard index vs.\ number of edges for Pythia-2.8B on Greater-than.}
\end{figure}

\begin{figure}[H]
    \centering
    \includegraphics[width=\textwidth]{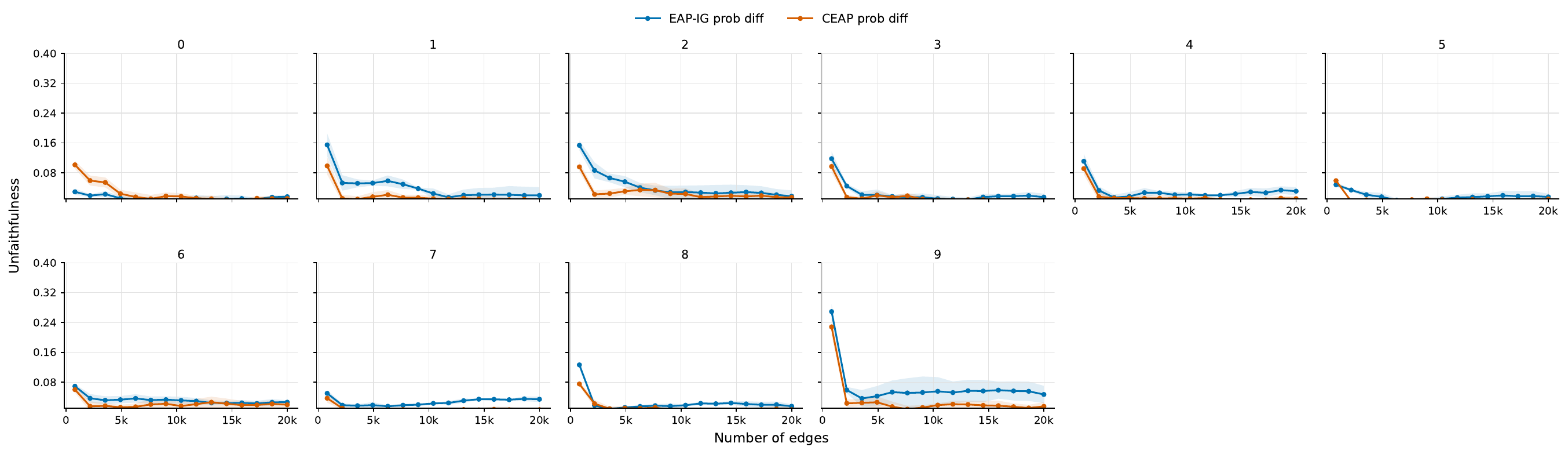}
    \caption{Unfaithfulness vs.\ number of edges for Pythia-2.8B on Greater-than.}
\end{figure}

\subsection{GPT-2 small}

\subsubsection{SVA}
\begin{figure}[H]
    \centering
    \includegraphics[width=\textwidth]{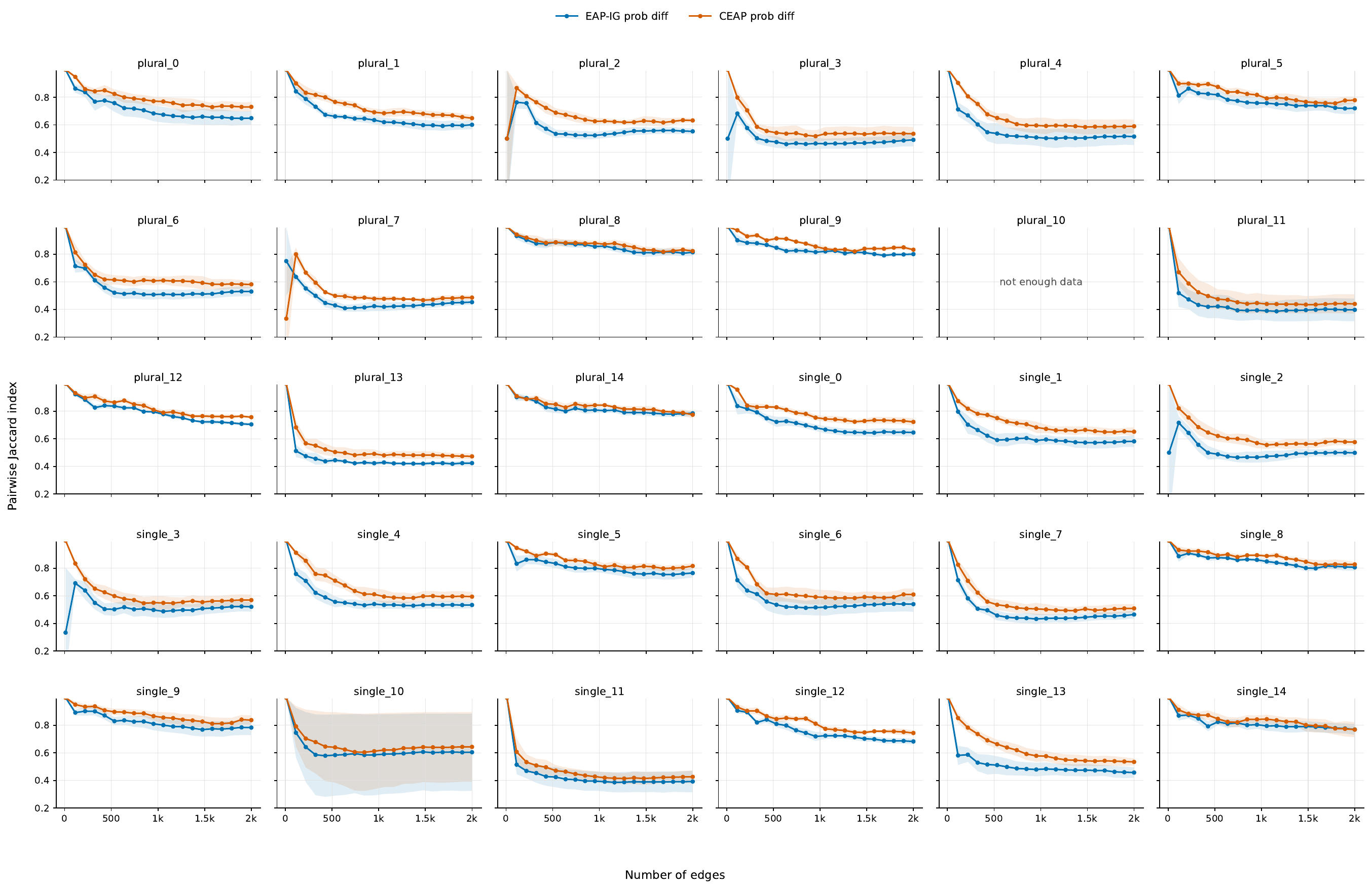}
    \caption{Pairwise Jaccard index vs.\ number of edges for GPT-2 small on SVA.}
\end{figure}

\begin{figure}[H]
    \centering
    \includegraphics[width=\textwidth]{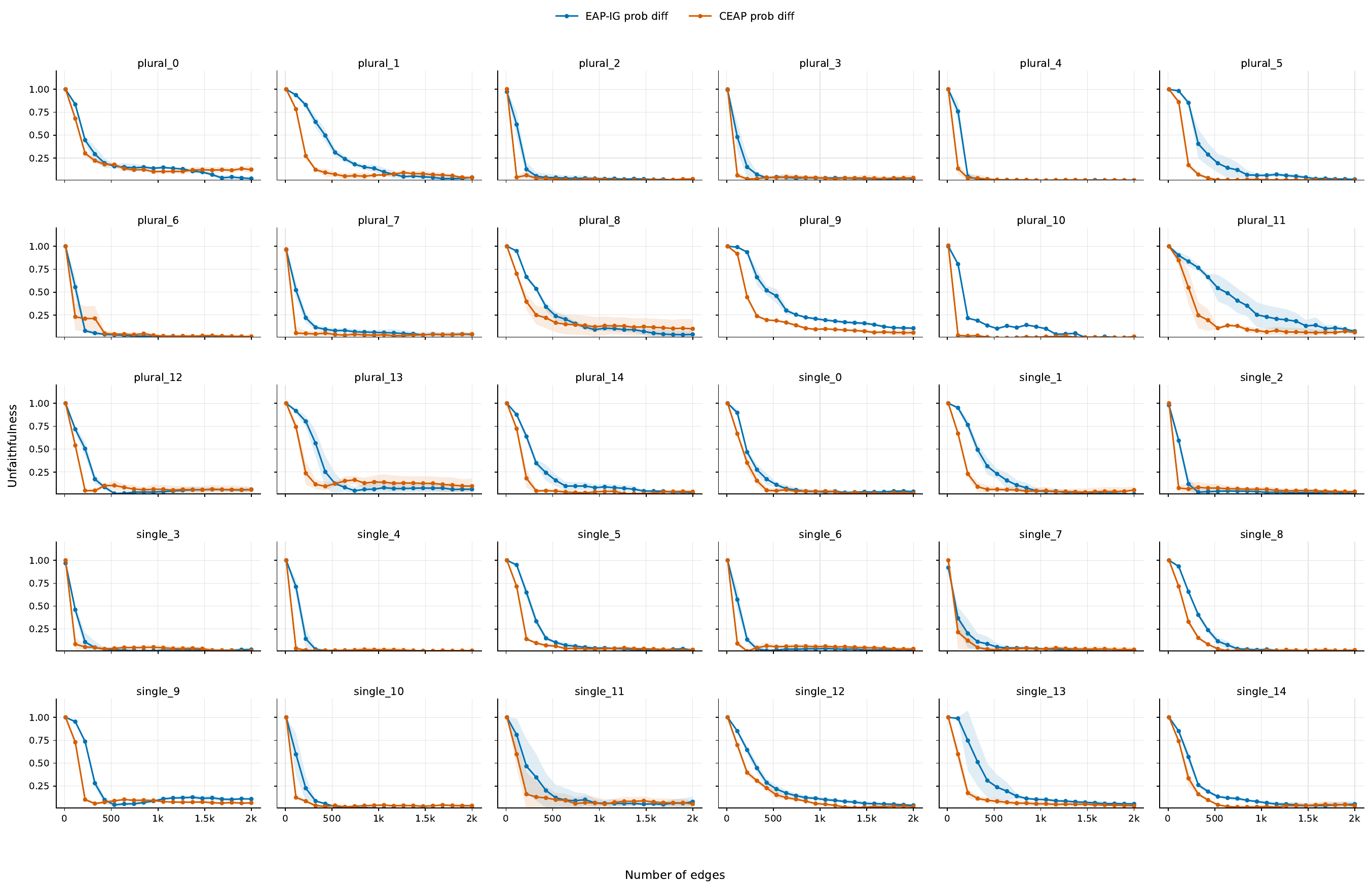}
    \caption{Unfaithfulness vs.\ number of edges for GPT-2 small on SVA.}
\end{figure}

\subsubsection{IOI}
\begin{figure}[H]
    \centering
    \includegraphics[width=\textwidth]{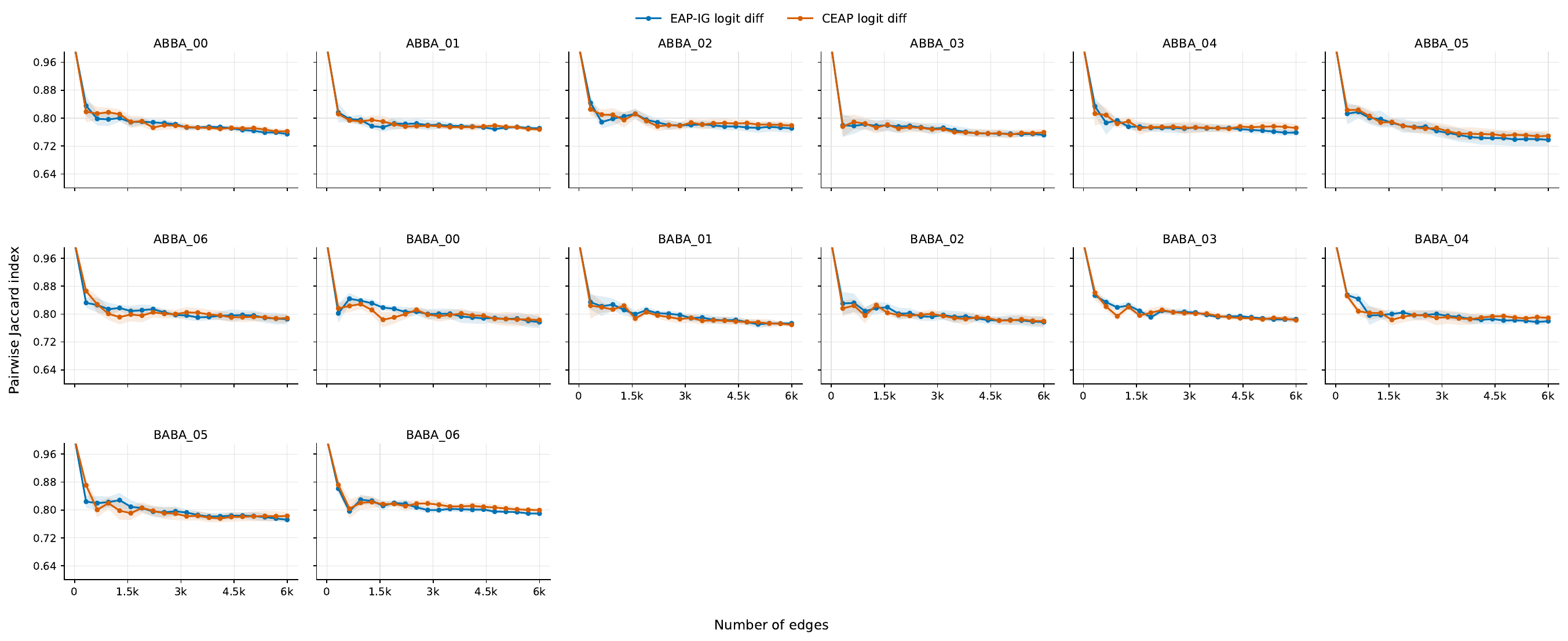}
    \caption{Pairwise Jaccard index vs.\ number of edges for GPT-2 small on IOI.}
\end{figure}

\begin{figure}[H]
    \centering
    \includegraphics[width=\textwidth]{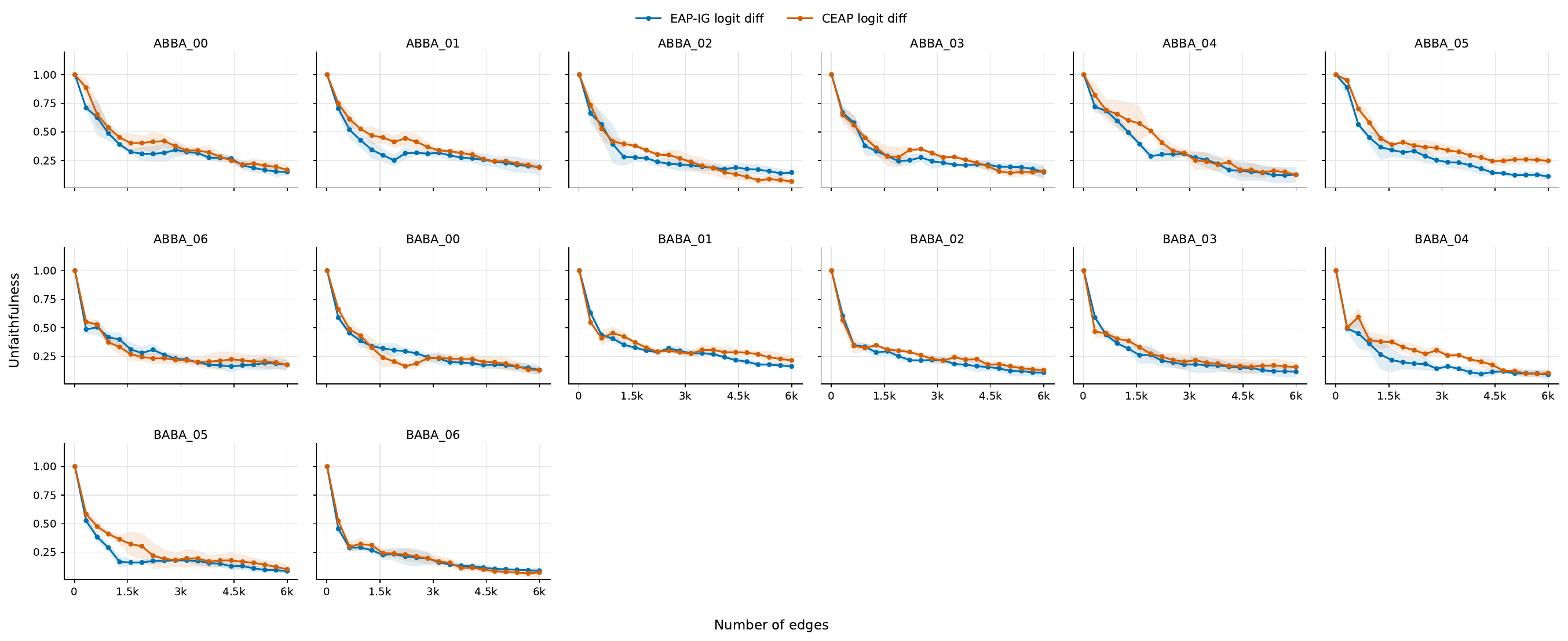}
    \caption{Unfaithfulness vs.\ number of edges for GPT-2 small on IOI.}
\end{figure}

\subsubsection{Greater-than}
\begin{figure}[H]
    \centering
    \includegraphics[width=\textwidth]{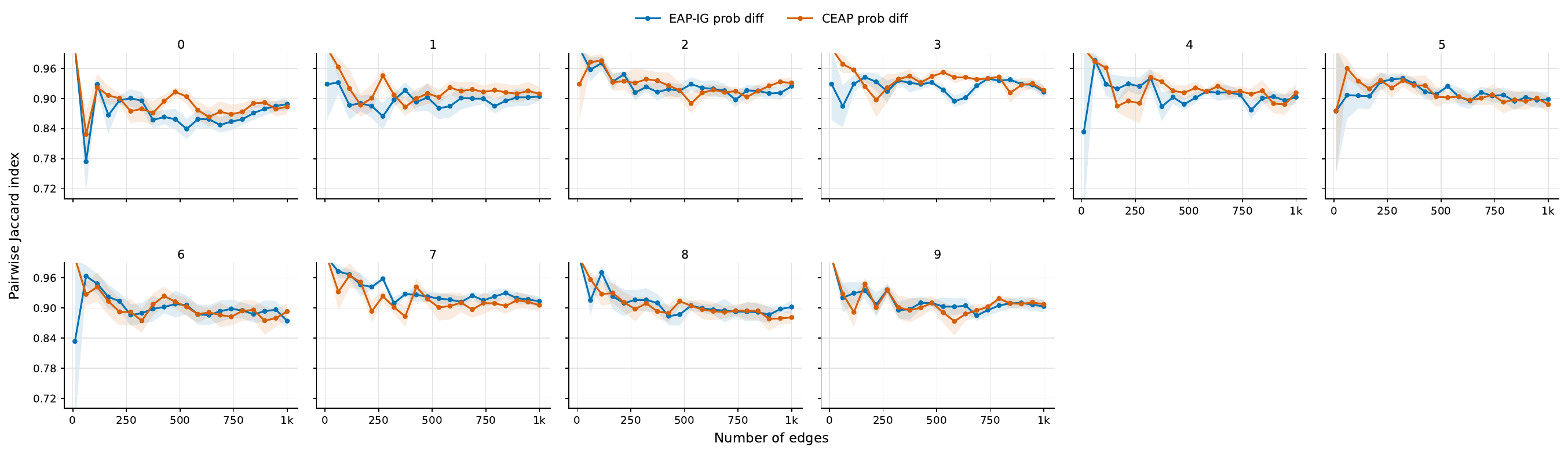}
    \caption{Pairwise Jaccard index vs.\ number of edges for GPT-2 small on Greater-than.}
\end{figure}

\begin{figure}[H]
    \centering
    \includegraphics[width=\textwidth]{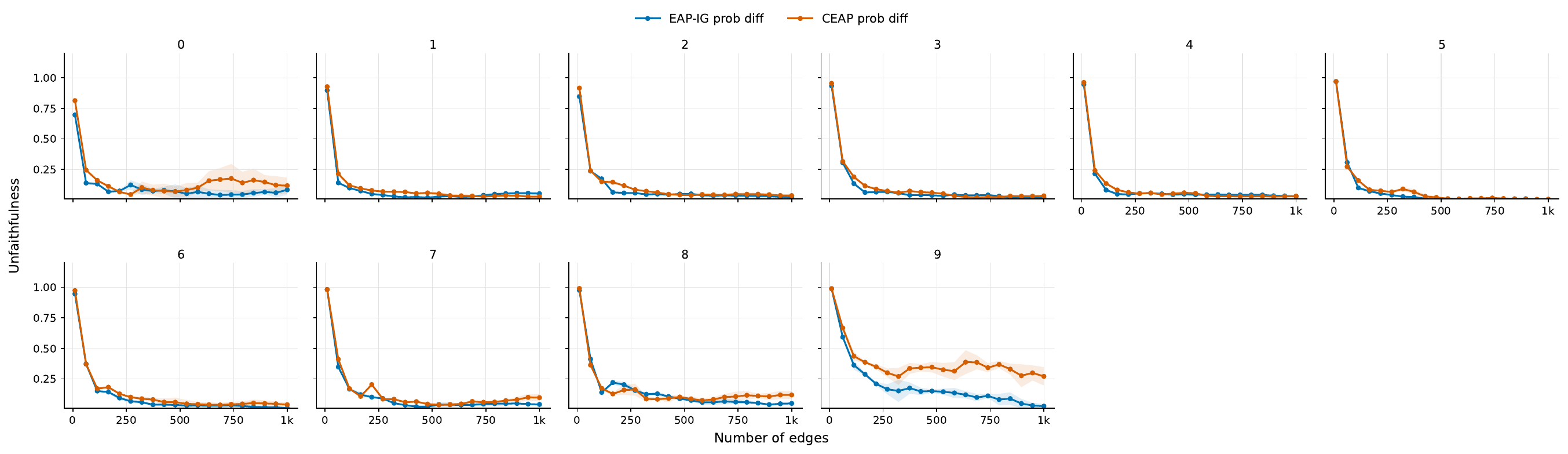}
    \caption{Unfaithfulness vs.\ number of edges for GPT-2 small on Greater-than.}
\end{figure}

\subsection{Pythia-160M}

\subsubsection{SVA}
\begin{figure}[H]
    \centering
    \includegraphics[width=\textwidth]{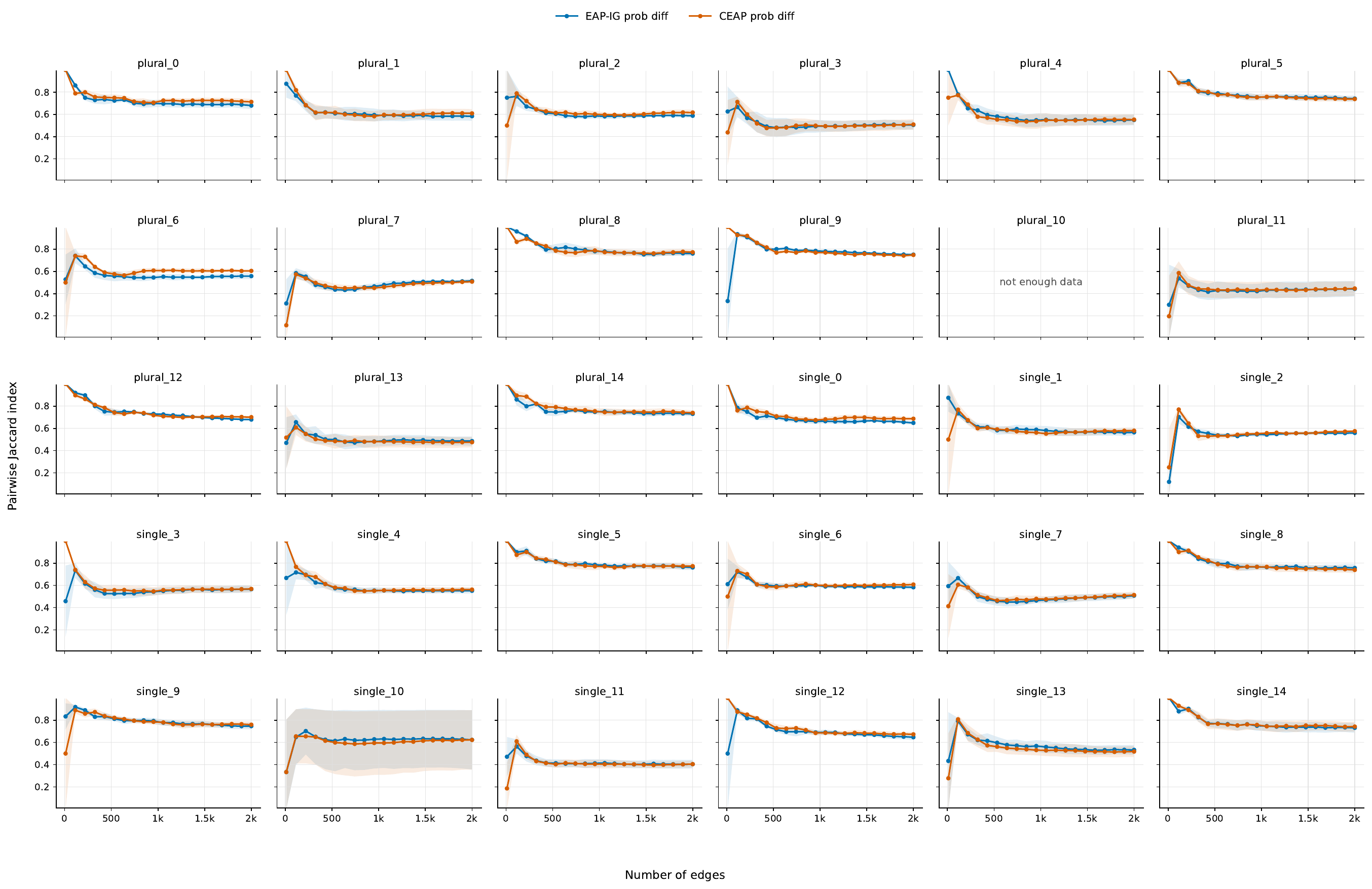}
    \caption{Pairwise Jaccard index vs.\ number of edges for Pythia-160M on SVA.}
\end{figure}

\begin{figure}[H]
    \centering
    \includegraphics[width=\textwidth]{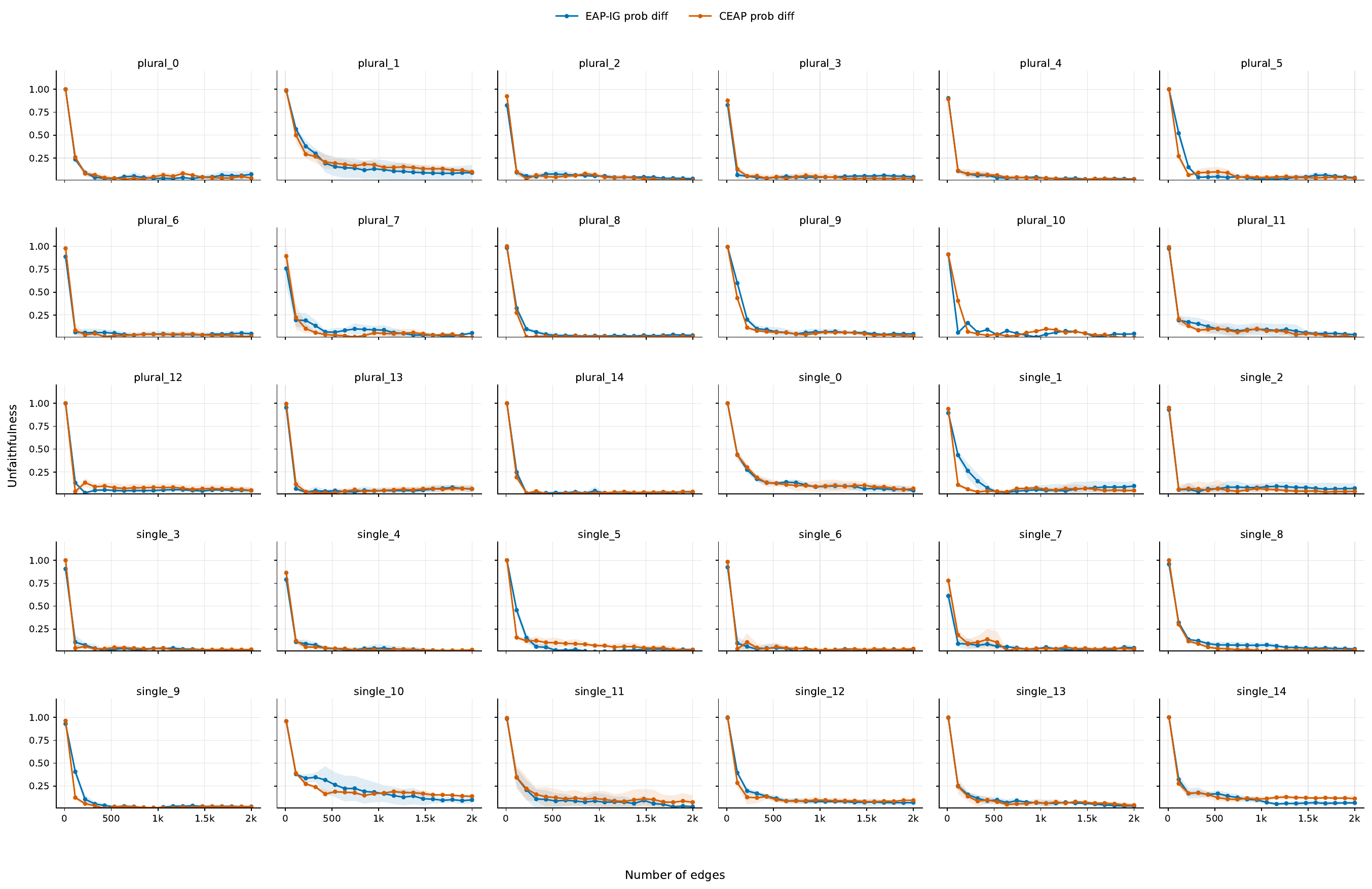}
    \caption{Unfaithfulness vs.\ number of edges for Pythia-160M on SVA.}
\end{figure}

\subsubsection{IOI}
\begin{figure}[H]
    \centering
    \includegraphics[width=\textwidth]{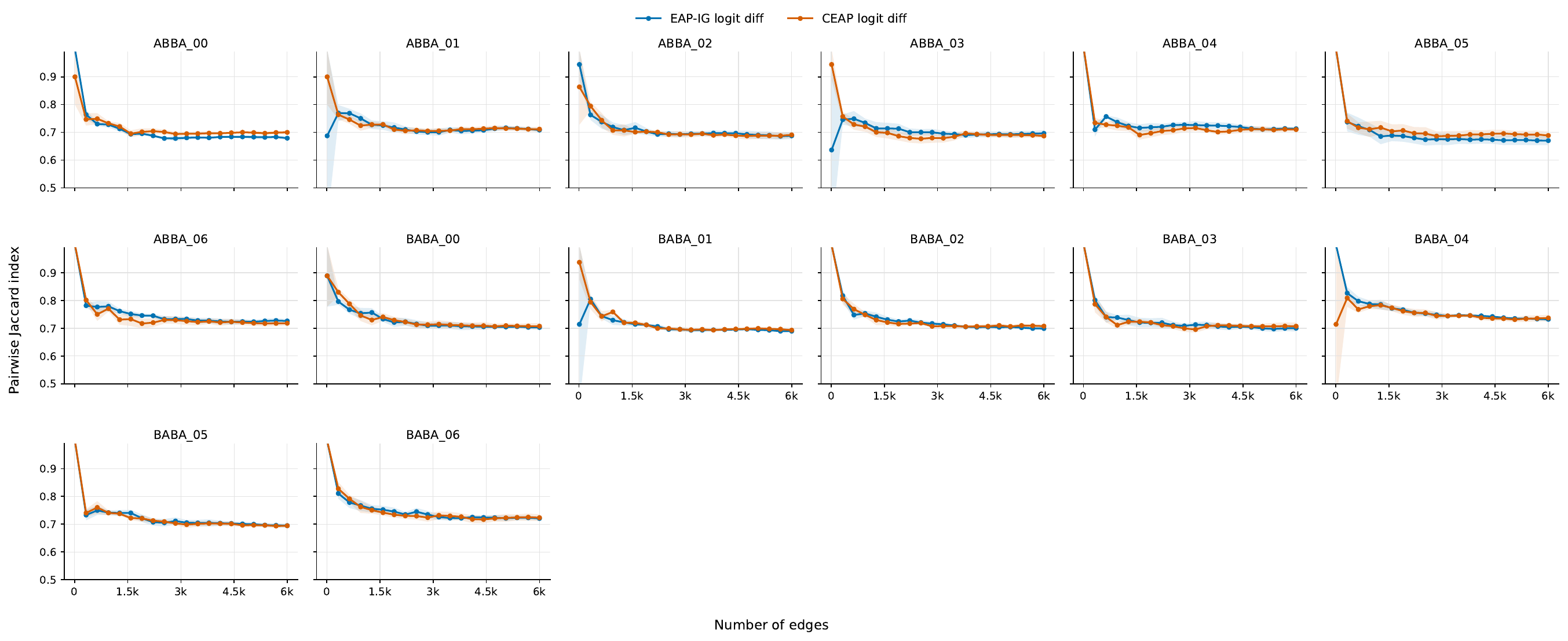}
    \caption{Pairwise Jaccard index vs.\ number of edges for Pythia-160M on IOI.}
\end{figure}

\begin{figure}[H]
    \centering
    \includegraphics[width=\textwidth]{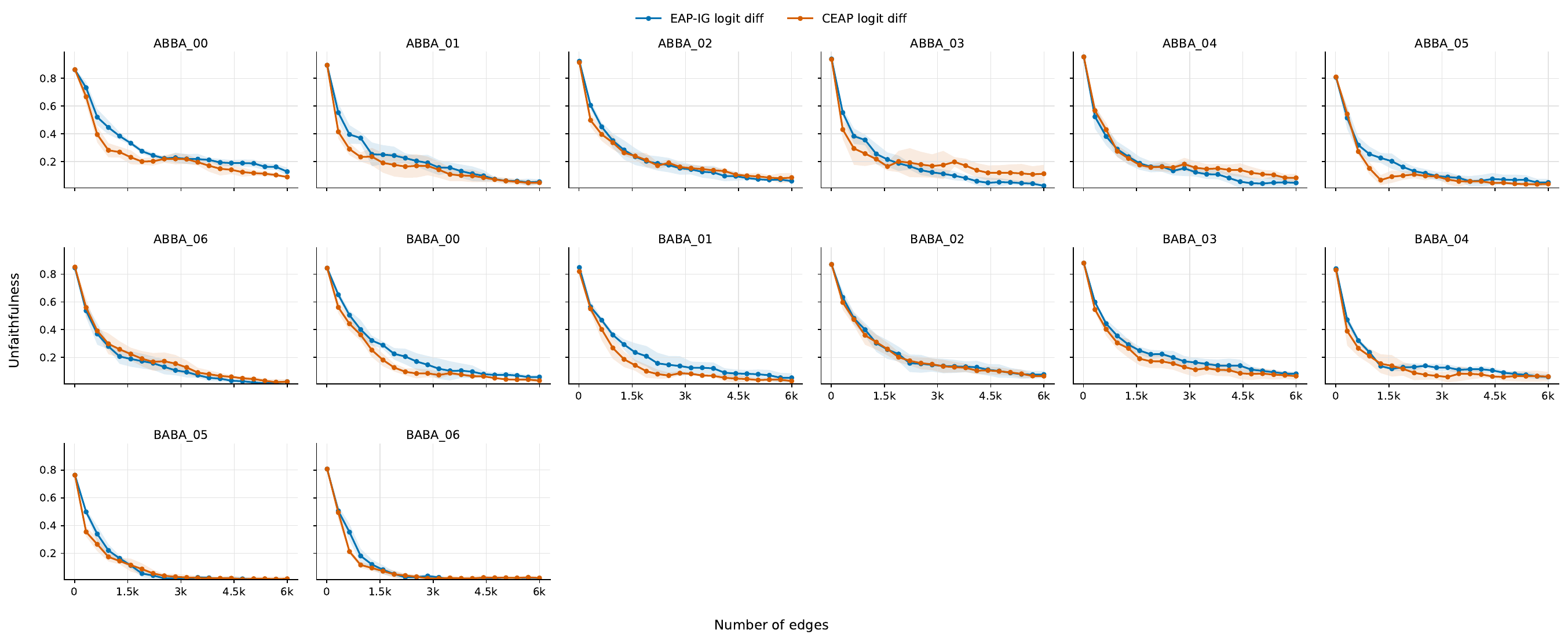}
    \caption{Unfaithfulness vs.\ number of edges for Pythia-160M on IOI.}
\end{figure}

\subsubsection{Greater-than}
\begin{figure}[H]
    \centering
    \includegraphics[width=\textwidth]{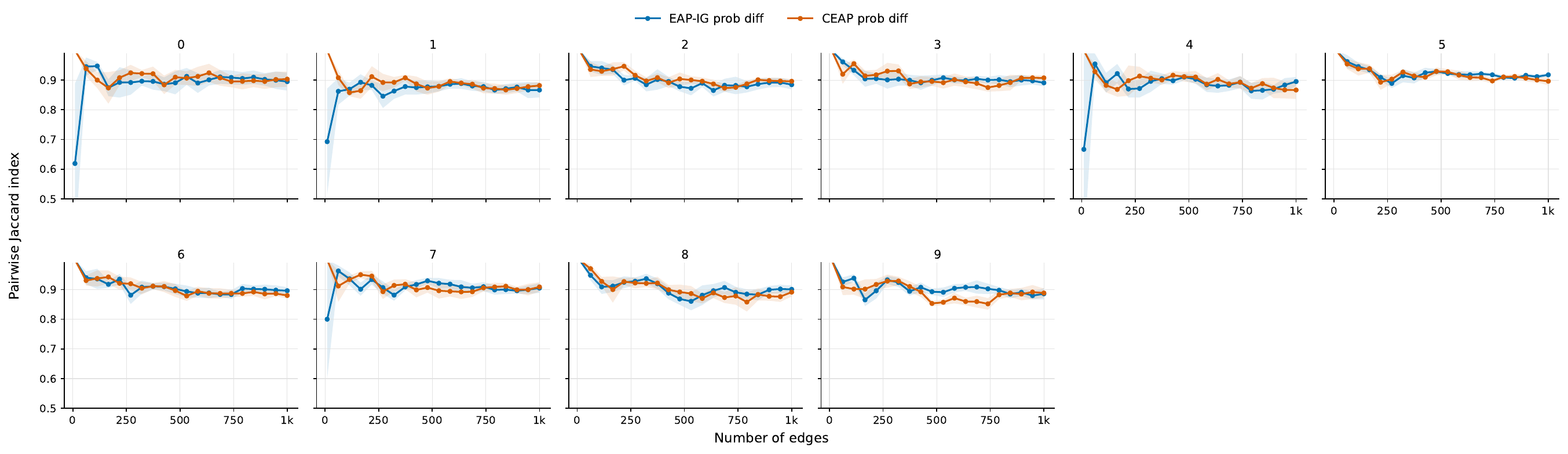}
    \caption{Pairwise Jaccard index vs.\ number of edges for Pythia-160M on Greater-than.}
\end{figure}

\begin{figure}[H]
    \centering
    \includegraphics[width=\textwidth]{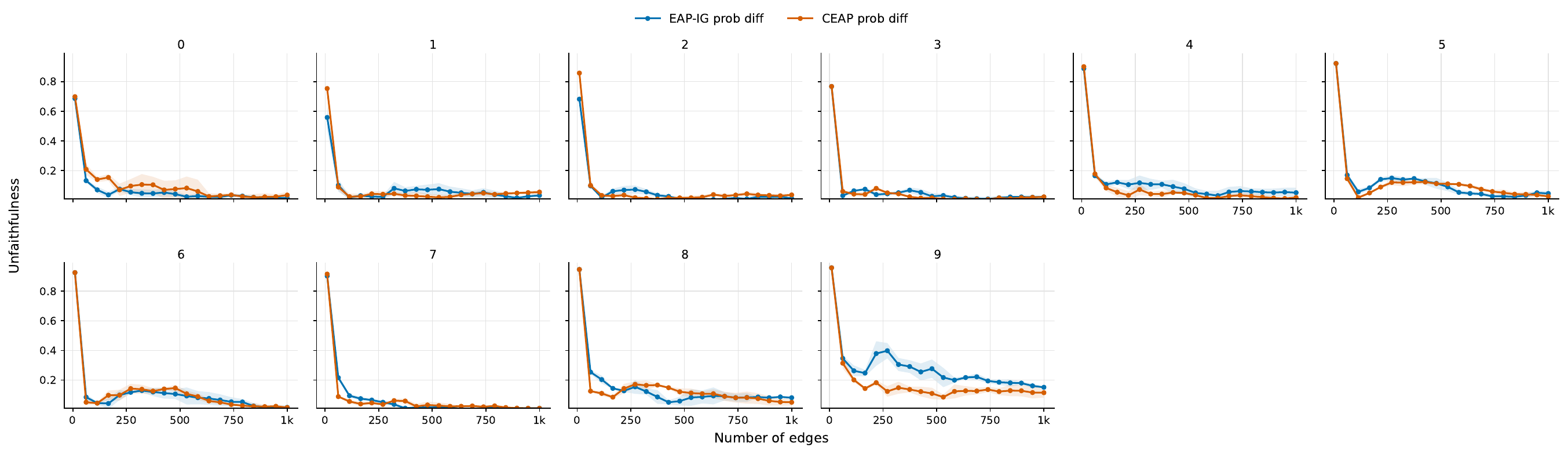}
    \caption{Unfaithfulness vs.\ number of edges for Pythia-160M on Greater-than.}
\end{figure}

\section{Template Absolute-Score Rank UMAP and Sample-Level Circuit Overlap Visualizations}
\label{app: umap and pji matrix}

This section pairs absolute-score rank vector UMAP embeddings of sample-level edge scores with sample-by-sample pairwise Jaccard index matrices for the greedily selected circuits.
We include GPT-2 small and Pythia-160M on SVA, IOI, and greater-than.

For the PJI matrix visualizations of GPT-2 small and Pythia-160M, we choose graph sizes large enough so that the average unfaithfulness across all samples falls below $0.2$.
The only exception is GPT-2 small on IOI, where we fix the graph size to $6000$ edges (about $20\%$ of all possible edges), which is the maximum graph size we experimented with; even at this size, the unfaithfulness remains above $0.2$.

\subsection{GPT-2 small}

\subsubsection{SVA}
\begin{figure}[H]
    \centering
    \begin{subfigure}[t]{0.44\textwidth}
        \centering
        \includegraphics[width=\linewidth,height=0.28\textheight,keepaspectratio]{figs/gpt2_sva_template_rank_umap_prob_diff.pdf}
        \caption{Absolute-score rank UMAP.}
    \end{subfigure}
    \hfill
    \begin{subfigure}[t]{0.52\textwidth}
        \centering
        \includegraphics[width=\linewidth,height=0.28\textheight,keepaspectratio]{figs/05_pairwise_sample_edge_iou_gpt2_sva.pdf}
        \caption{Pairwise Jaccard index.}
    \end{subfigure}
    \caption{Template-induced circuit difference for GPT-2 small on SVA.}
\end{figure}

\subsubsection{IOI}
\begin{figure}[H]
    \centering
    \begin{subfigure}[t]{0.44\textwidth}
        \centering
        \includegraphics[width=\linewidth,height=0.28\textheight,keepaspectratio]{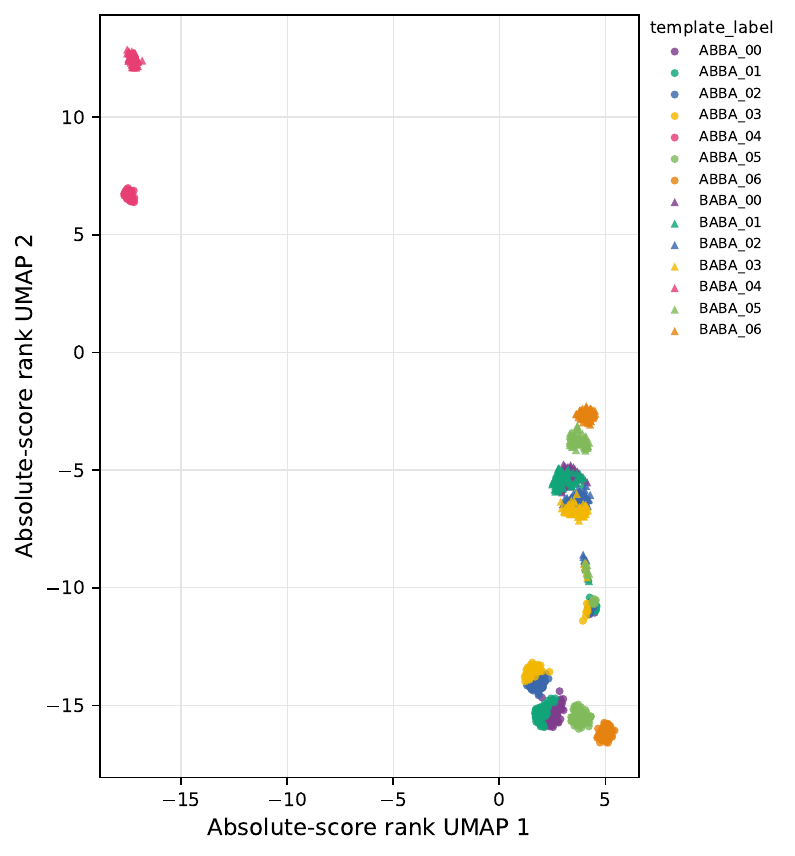}
        \caption{Absolute-score rank UMAP.}
    \end{subfigure}
    \hfill
    \begin{subfigure}[t]{0.52\textwidth}
        \centering
        \includegraphics[width=\linewidth,height=0.28\textheight,keepaspectratio]{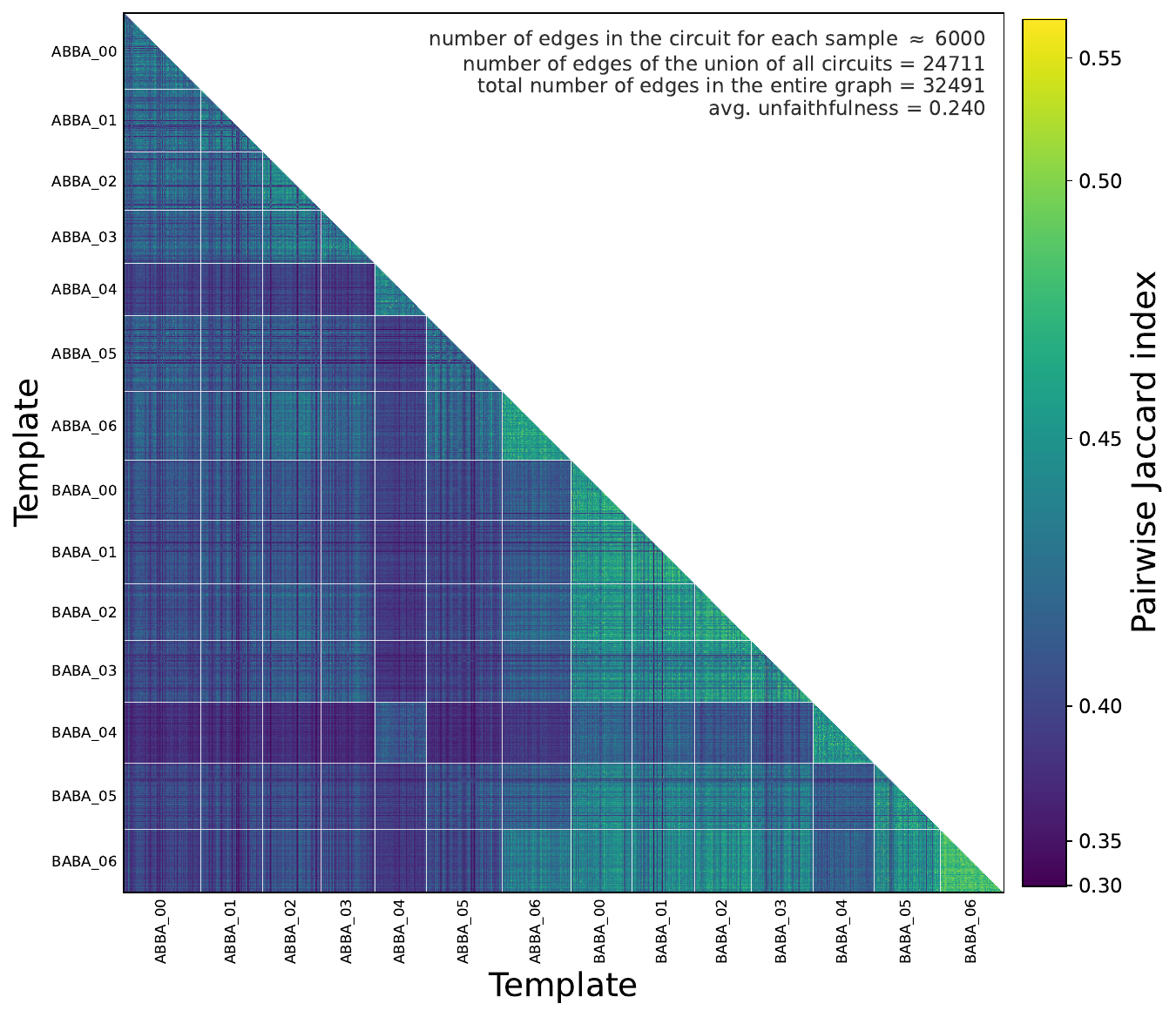}
        \caption{Pairwise Jaccard index.}
    \end{subfigure}
    \caption{Template-induced circuit difference for GPT-2 small on IOI.}
\end{figure}

\subsubsection{Greater-than}
\begin{figure}[H]
    \centering
    \begin{subfigure}[t]{0.44\textwidth}
        \centering
        \includegraphics[width=\linewidth,height=0.28\textheight,keepaspectratio]{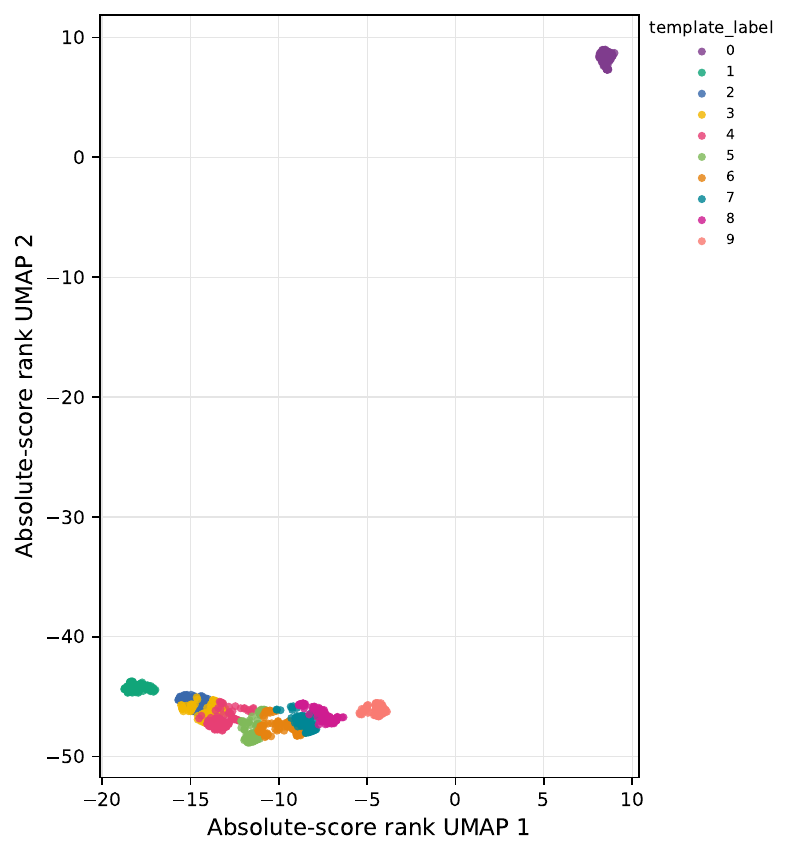}
        \caption{Absolute-score rank UMAP.}
    \end{subfigure}
    \hfill
    \begin{subfigure}[t]{0.52\textwidth}
        \centering
        \includegraphics[width=\linewidth,height=0.28\textheight,keepaspectratio]{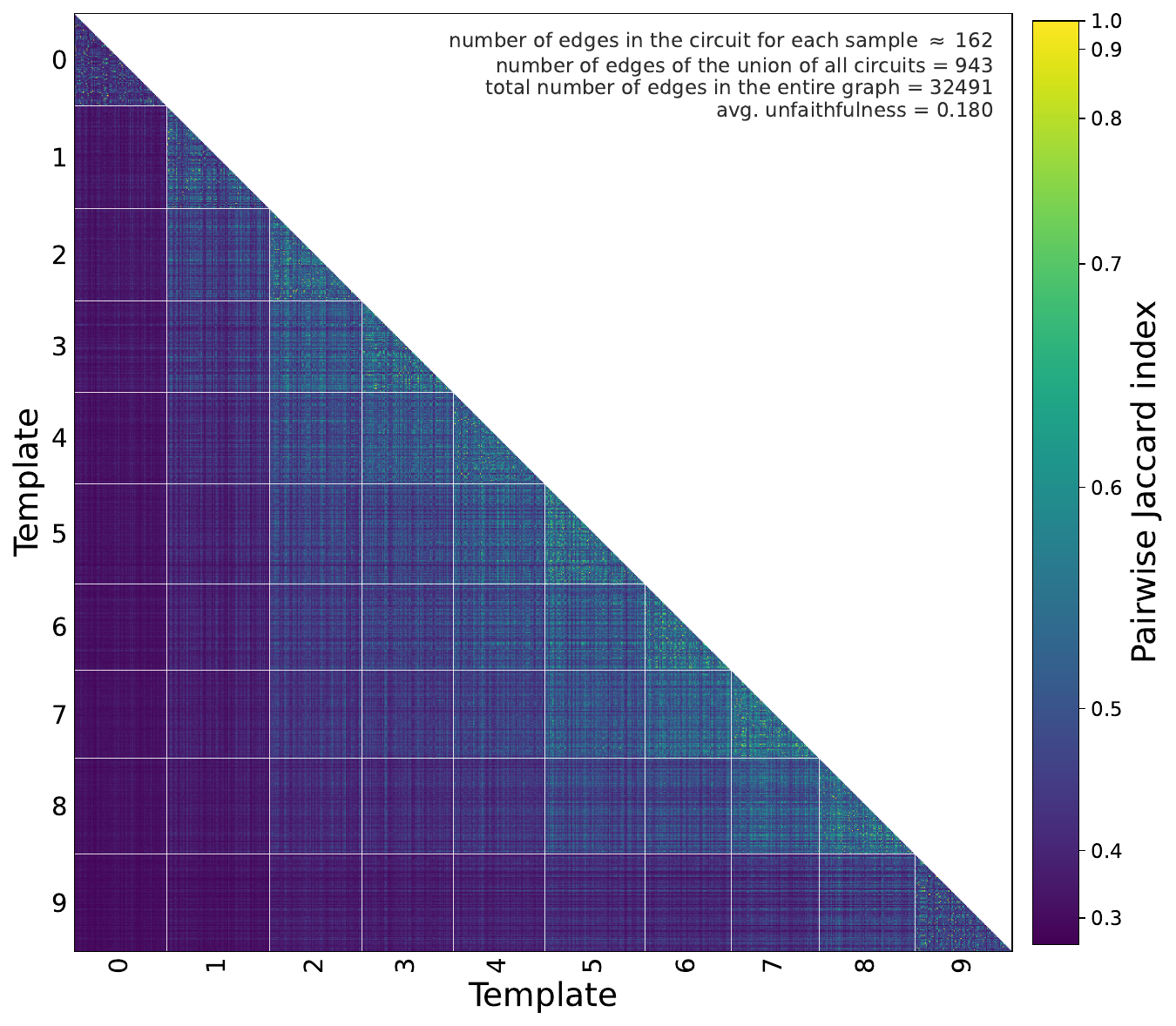}
        \caption{Pairwise Jaccard index.}
    \end{subfigure}
    \caption{Template-induced circuit difference for GPT-2 small on greater-than.}
\end{figure}

\subsection{Pythia-160M}

\subsubsection{SVA}
\begin{figure}[H]
    \centering
    \begin{subfigure}[t]{0.44\textwidth}
        \centering
        \includegraphics[width=\linewidth,height=0.28\textheight,keepaspectratio]{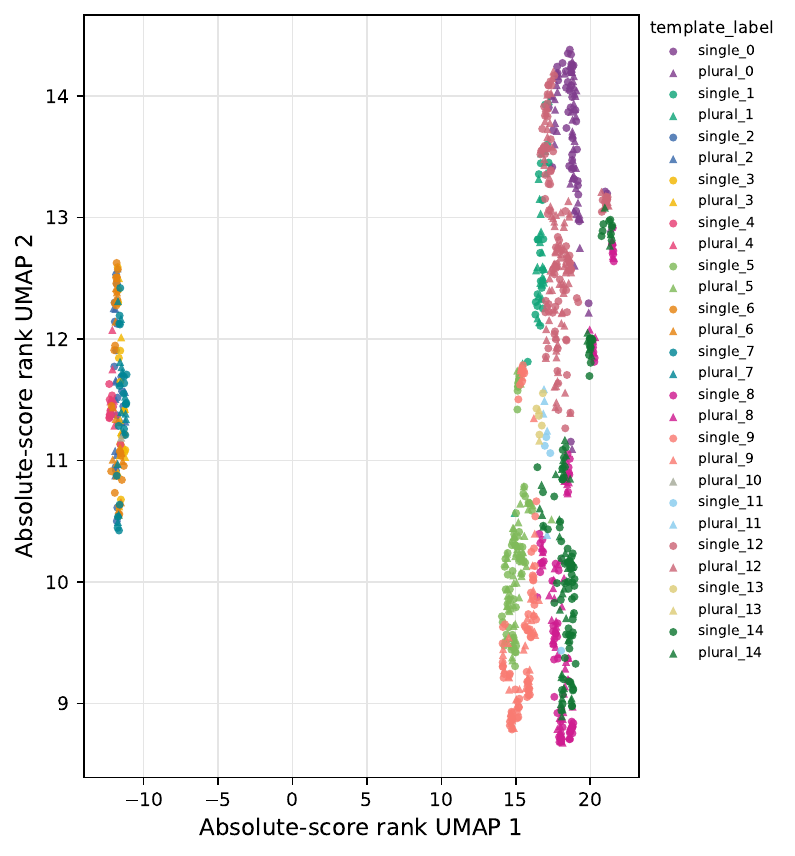}
        \caption{Absolute-score rank UMAP.}
    \end{subfigure}
    \hfill
    \begin{subfigure}[t]{0.52\textwidth}
        \centering
        \includegraphics[width=\linewidth,height=0.28\textheight,keepaspectratio]{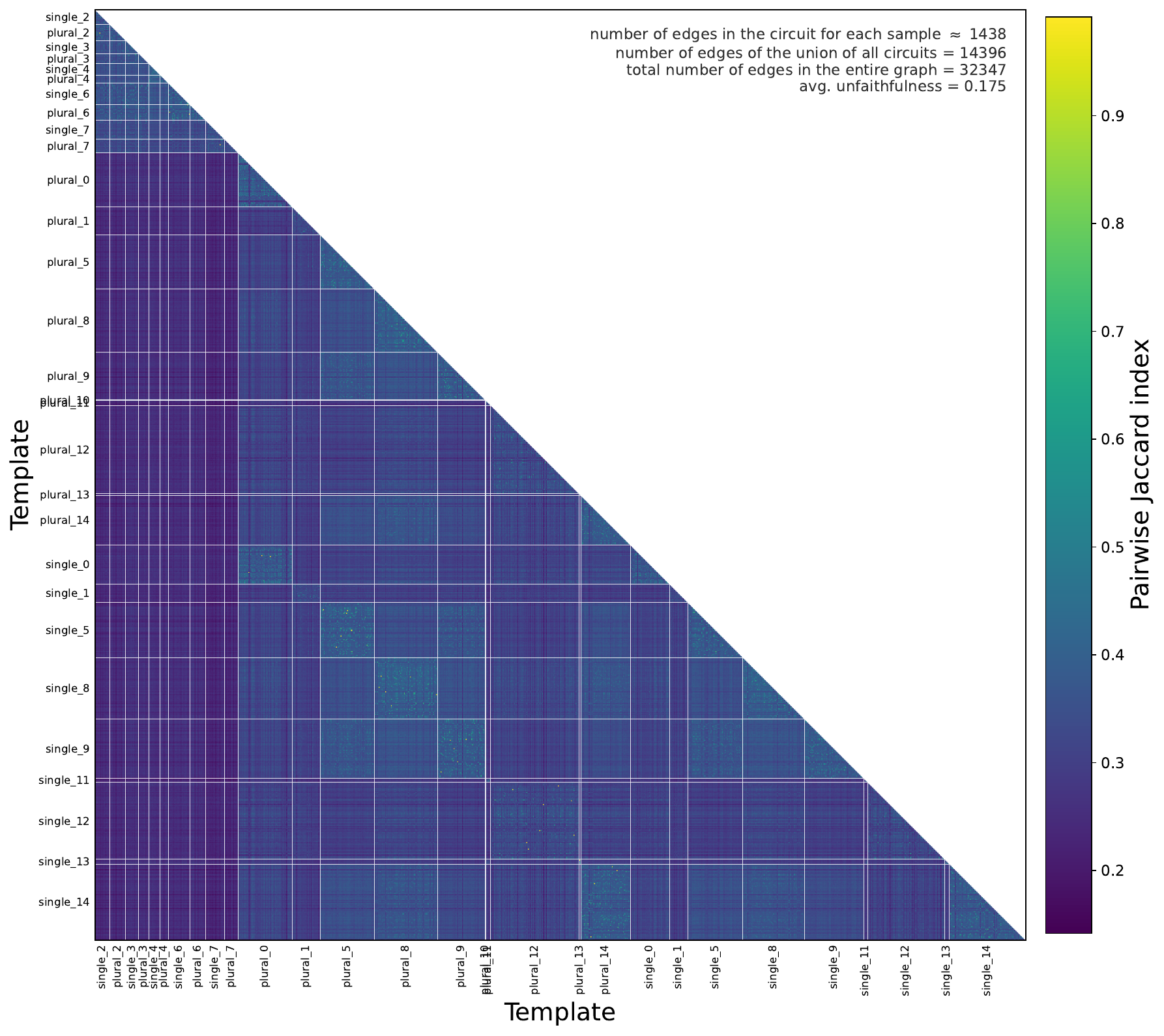}
        \caption{Pairwise Jaccard index.}
    \end{subfigure}
    \caption{Template-induced circuit difference for Pythia-160M on SVA.}
\end{figure}

\subsubsection{IOI}
\begin{figure}[H]
    \centering
    \begin{subfigure}[t]{0.44\textwidth}
        \centering
        \includegraphics[width=\linewidth,height=0.28\textheight,keepaspectratio]{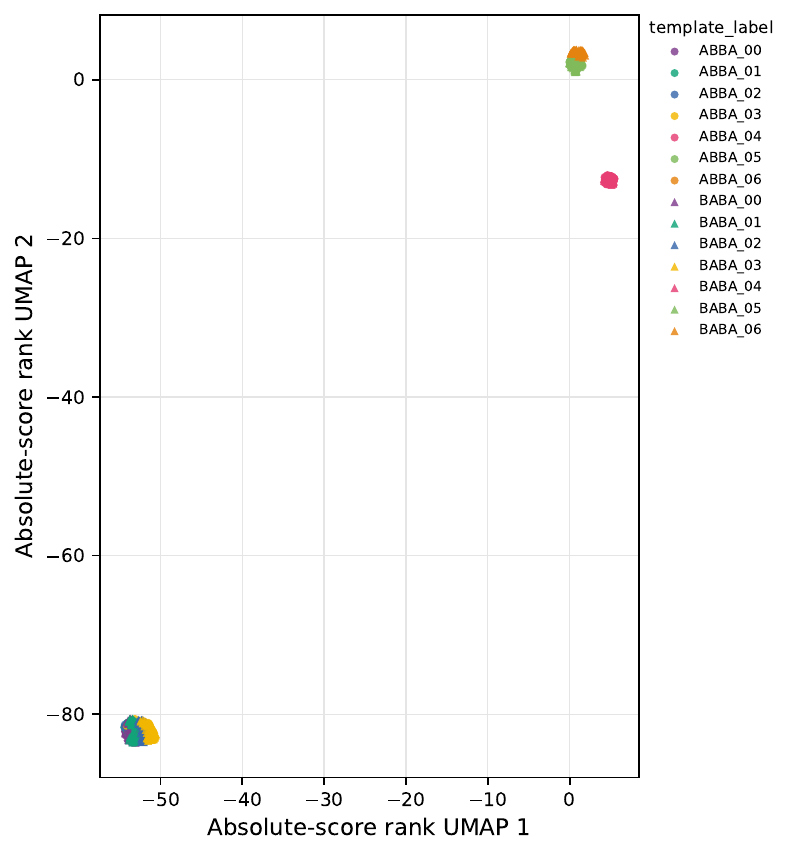}
        \caption{Absolute-score rank UMAP.}
    \end{subfigure}
    \hfill
    \begin{subfigure}[t]{0.52\textwidth}
        \centering
        \includegraphics[width=\linewidth,height=0.28\textheight,keepaspectratio]{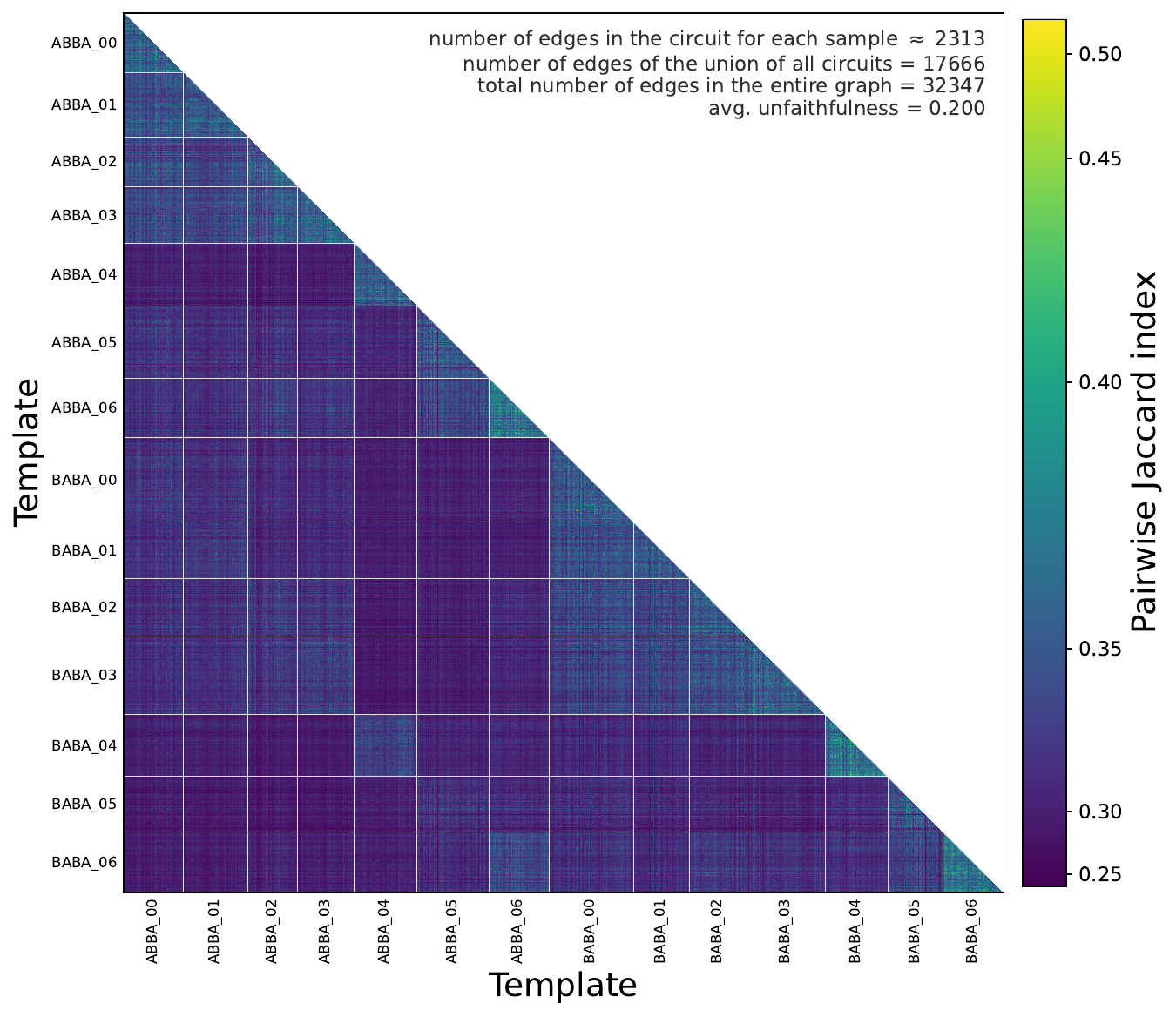}
        \caption{Pairwise Jaccard index.}
    \end{subfigure}
    \caption{Template-induced circuit difference for Pythia-160M on IOI.}
\end{figure}

\subsubsection{Greater-than}
\begin{figure}[H]
    \centering
    \begin{subfigure}[t]{0.44\textwidth}
        \centering
        \includegraphics[width=\linewidth,height=0.28\textheight,keepaspectratio]{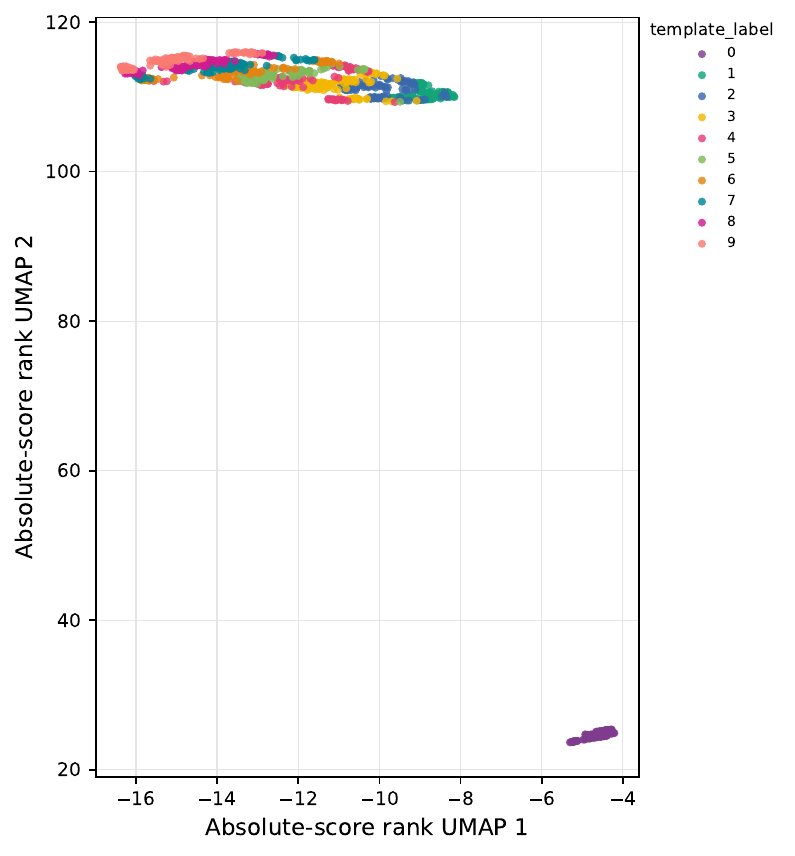}
        \caption{Absolute-score rank UMAP.}
    \end{subfigure}
    \hfill
    \begin{subfigure}[t]{0.52\textwidth}
        \centering
        \includegraphics[width=\linewidth,height=0.28\textheight,keepaspectratio]{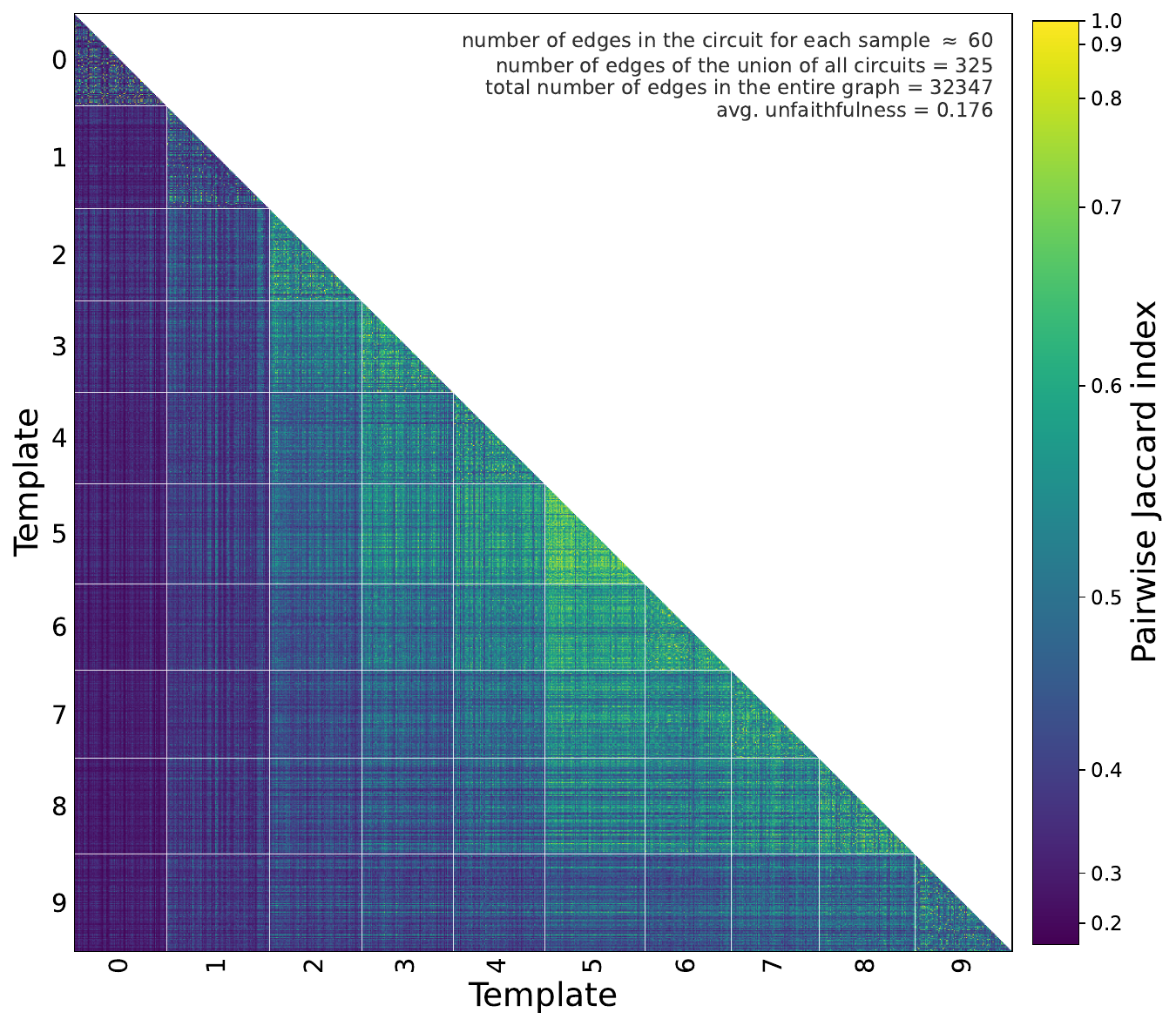}
        \caption{Pairwise Jaccard index.}
    \end{subfigure}
    \caption{Template-induced circuit difference for Pythia-160M on greater-than.}
\end{figure}

\section{Patching Circuits Found for a Sample Might Act Opposite to Intention on Another Sample}
\label{app: fail to steer}
\begin{table}[H]
\centering
\small
\setlength{\tabcolsep}{6pt}
\renewcommand{\arraystretch}{1.18}
\begin{tabular}{@{}>{\raggedright\arraybackslash}p{0.25\linewidth}
                >{\raggedright\arraybackslash}p{0.34\linewidth}
                >{\raggedright\arraybackslash}p{0.28\linewidth}@{}}
\toprule
\rowcolor{black!6}
\textbf{Quantity} & \textbf{Sample 93} & \textbf{Sample 858} \\
\midrule
Clean prompt &
\textit{The senators like to watch television shows and} &
\textit{The teachers the parent} \\
Corrupted prompt &
\textit{The senator likes to watch television shows and} &
\textit{The teachers the parents} \\
\addlinespace[2pt]
Clean \texttt{prob\_diff} &
$0.352503$ &
$0.119733$ \\
Own-circuit \texttt{prob\_diff} &
$0.336449$ &
$0.072026$ \\
Cross-circuit \texttt{prob\_diff} &
$0.256708$ &
$-0.223778$ \\
\bottomrule
\end{tabular}
\caption{
Cross-sample circuit patching between samples 93 and 858 of IOI. The own-circuit rows evaluate each sample using the circuit discovered on that sample, whereas the cross-circuit rows evaluate each sample using the circuit discovered on the other sample.
}
\label{tab:cross-circuit-93-858}
\end{table}

We use the saved
per-sample greedy circuits for GPT-2 on the SVA task, with the
\texttt{prob\_diff} metric and target circuit size 826. For a given sample here, the
standard evaluation patches activations in the circuit so as to recover the
movement from the pro-singular corrupted prompt toward the pro-plural prompt.
The cross-sample condition keeps the input pair fixed but replaces the circuit
with the circuit discovered on the other sample, thereby testing whether the
learned pathway transfers across examples.

Although the two circuits are comparable in size after pruning,\footnote{A procedure implemented by \citep{faithfulness} to remove headless and tailless branches.} with 712 edges
for sample 93 and 754 edges for sample 858, their edge-level overlap is small
(IoU $=0.125960$). This low overlap is accompanied by a qualitative failure of
transfer: when the circuit from sample 93 is used to patch sample 858, the
\texttt{prob\_diff} moves from $0.119733$ under the clean full-model evaluation
to $-0.223778$ under cross-circuit evaluation. Thus the transferred circuit does
not merely recover less of the intended effect; it drives the logit movement in
the opposite direction from the intended pro-plural intervention.

\section[Template Absolute-Score Rank UMAP and Sample-Level Circuit Overlap Visualizations]{Template Absolute-Score Rank UMAP and Sample-Level Circuit Overlap Visualizations for Models by \citep{gao2025weight}}
\label{app:sparse umap pji plot}
We repeat the visualization for three models open-sourced by \citep{gao2025weight}:
\begin{itemize}[leftmargin=*,itemsep=0.1em,parsep=0pt,topsep=0.2em]
    \item \texttt{dense1\_4x},
    \item \texttt{csp\_sweep1\_4x\_3.7Mnonzero\_afrac1.000},
    \item \texttt{csp\_sweep1\_4x\_3.7Mnonzero\_afrac0.500}.
\end{itemize}
Among these, \texttt{dense1\_4x} and \texttt{csp\_sweep1\_4x\_3.7Mnonzero\_afrac1.000} form the closest dense-sparse pair by test loss; the latter has $3.7$M nonzero parameters out of $110$M total parameters.
The difference between the last two models (as indicated by name: \texttt{afrac1.000} versus \texttt{afrac0.500})
is that the latter imposed activation sparsity on top of weight sparsity.
Since the unfaithfulness of all models on both tasks is very low (see \cref{table: single-double-quote overlap,table: else-elif overlap}) but can still differ by two orders of magnitude, we use circuits containing roughly $5.2\%$ of all graph edges for the PJI plots, rather than choosing an edge count based on unfaithfulness.
\cref{table: single-double-quote overlap,table: else-elif overlap} shows that cross-template overlap remains low, and the visualizations below show that sparse models still deploy different circuits for different templates.

\subsection{single-double-quote}

\subsubsection{\texttt{dense1\_4x}}
\begin{figure}[H]
    \centering
    \begin{subfigure}[t]{0.44\textwidth}
        \centering
        \includegraphics[width=\linewidth,height=0.28\textheight,keepaspectratio]{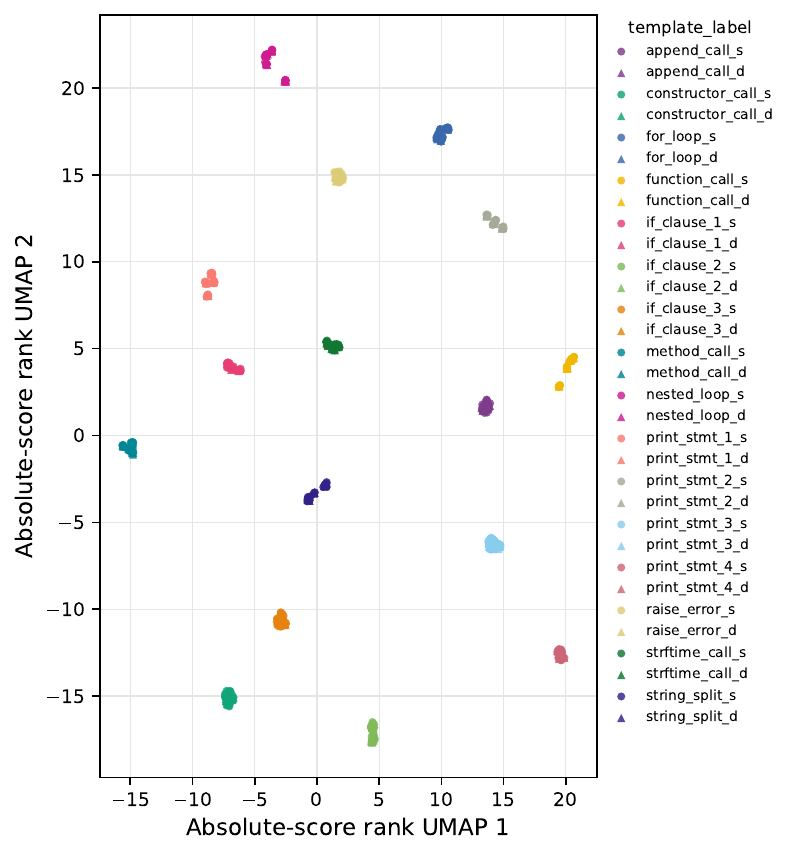}
        \caption{Absolute-score rank UMAP.}
    \end{subfigure}
    \hfill
    \begin{subfigure}[t]{0.52\textwidth}
        \centering
        \includegraphics[width=\linewidth,height=0.28\textheight,keepaspectratio]{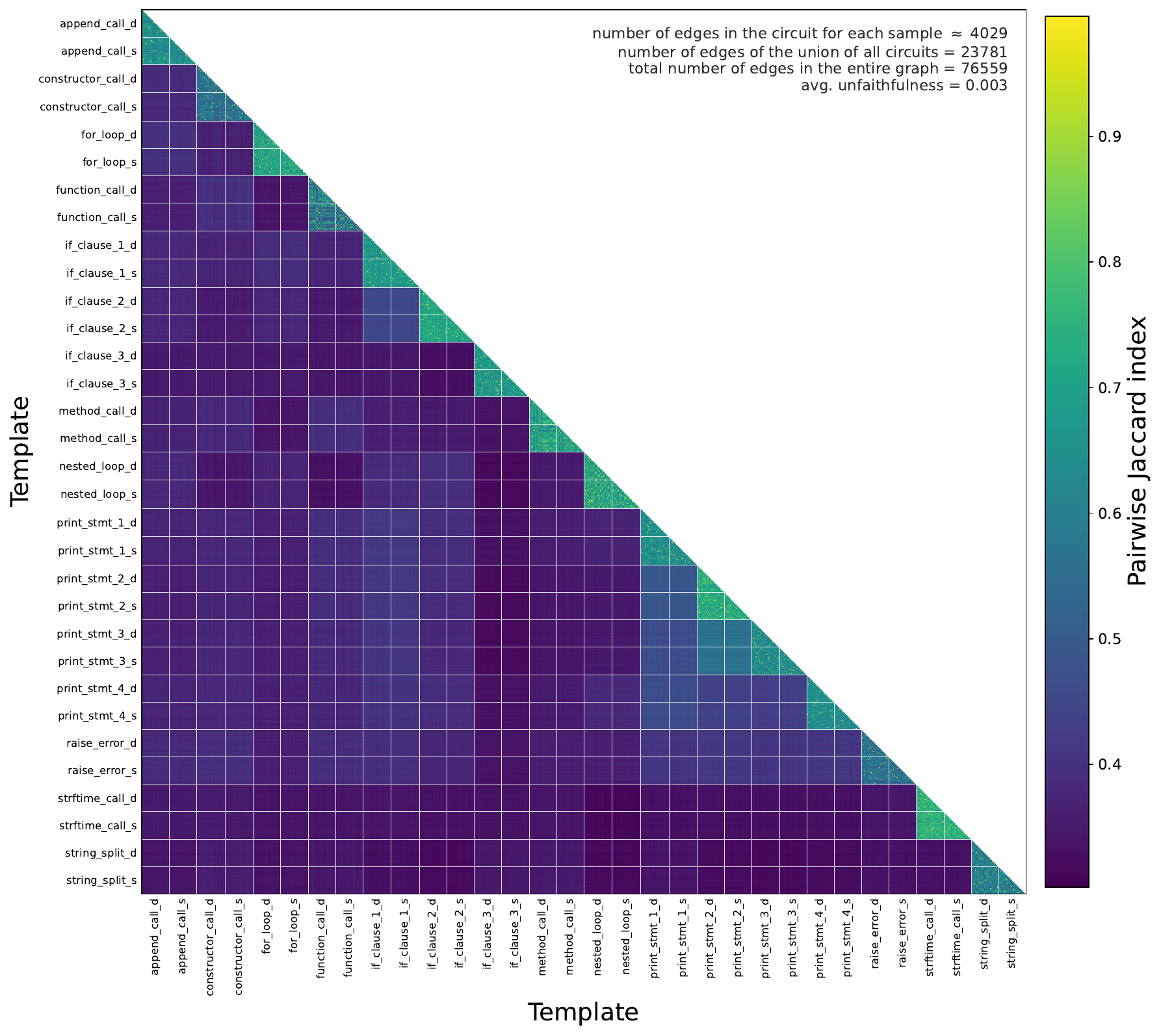}
        \caption{Pairwise Jaccard index.}
    \end{subfigure}
    \caption{Template-induced circuit difference for \texttt{dense1\_4x} on single-double-quote.}
\end{figure}

\subsubsection{\texttt{csp\_sweep1\_4x\_3.7Mnonzero\_afrac1.000}}
\begin{figure}[H]
    \centering
    \begin{subfigure}[t]{0.44\textwidth}
        \centering
        \includegraphics[width=\linewidth,height=0.28\textheight,keepaspectratio]{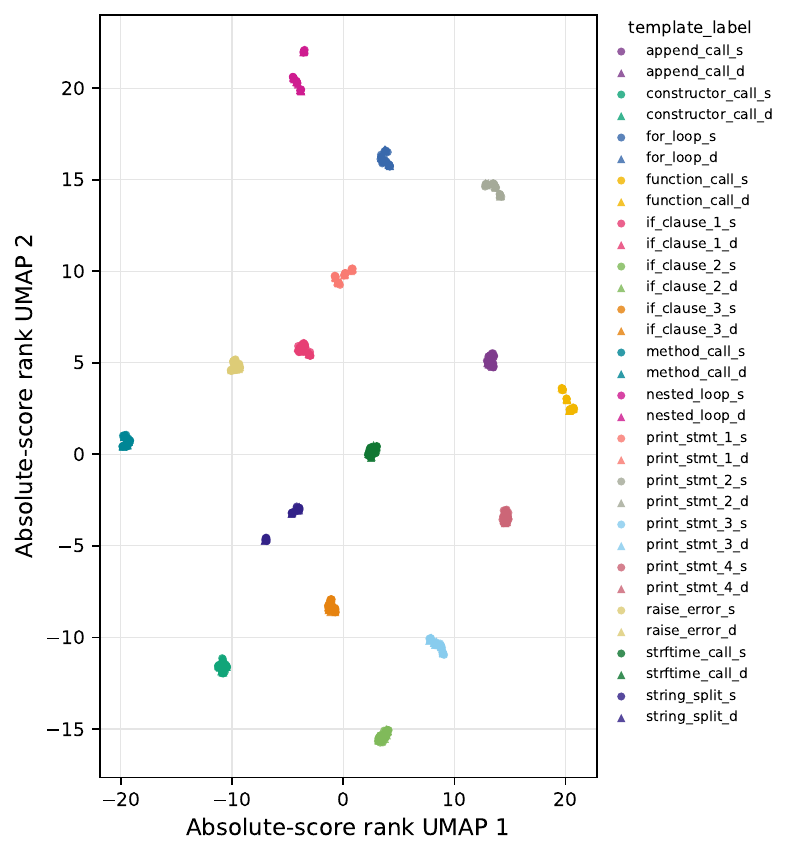}
        \caption{Absolute-score rank UMAP.}
    \end{subfigure}
    \hfill
    \begin{subfigure}[t]{0.52\textwidth}
        \centering
        \includegraphics[width=\linewidth,height=0.28\textheight,keepaspectratio]{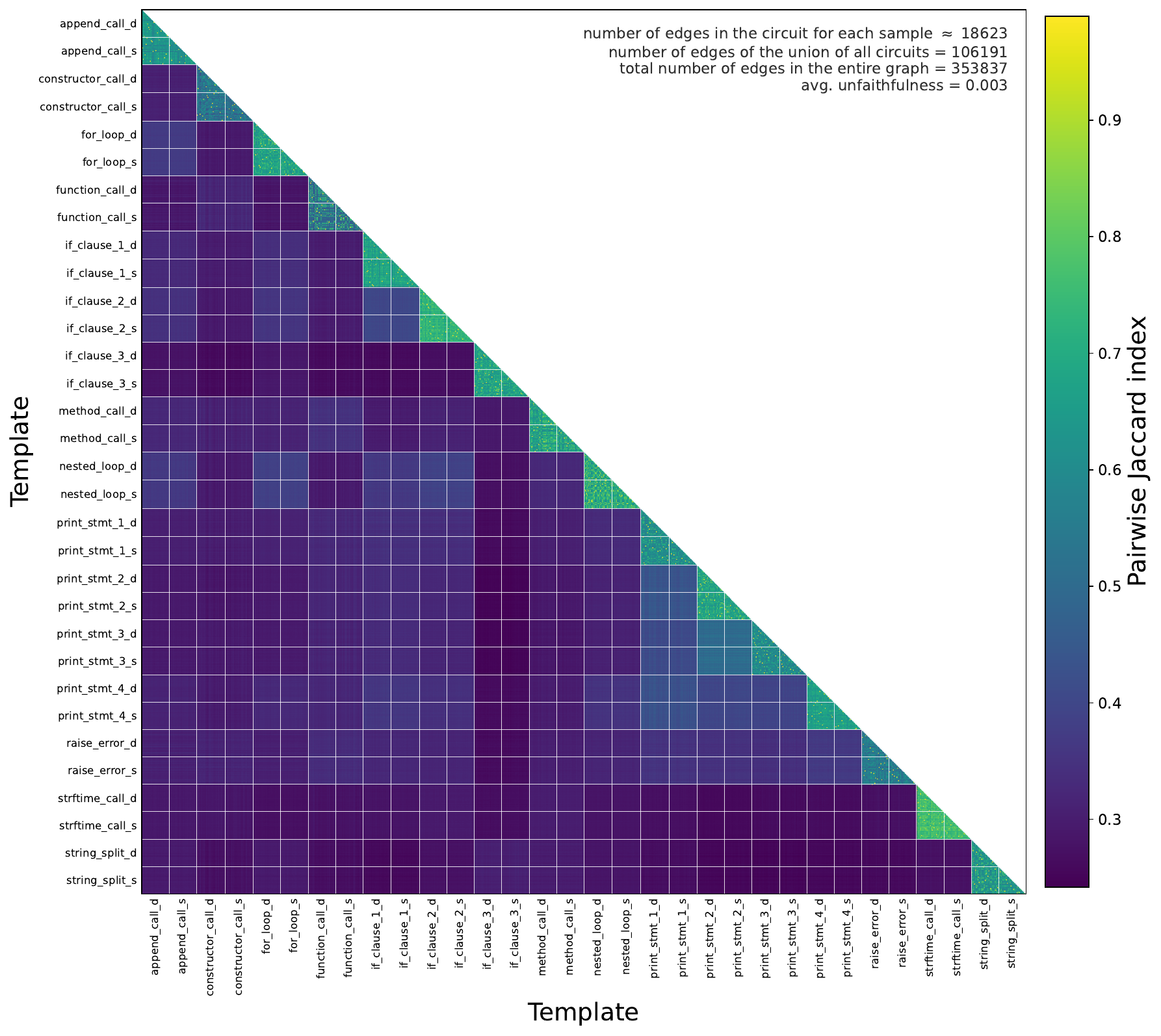}
        \caption{Pairwise Jaccard index.}
    \end{subfigure}
    \caption{Template-induced circuit difference for \texttt{csp\_sweep1\_4x\_3.7Mnonzero\_afrac1.000} on single-double-quote.}
\end{figure}

\subsubsection{\texttt{csp\_sweep1\_4x\_3.7Mnonzero\_afrac0.500}}
\begin{figure}[H]
    \centering
    \begin{subfigure}[t]{0.44\textwidth}
        \centering
        \includegraphics[width=\linewidth,height=0.28\textheight,keepaspectratio]{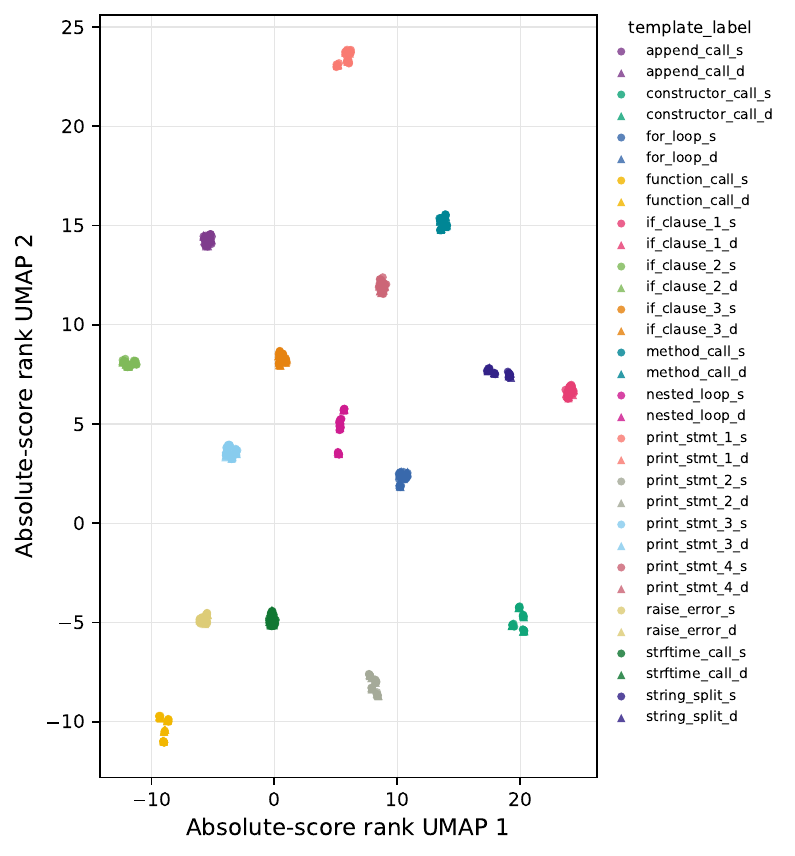}
        \caption{Absolute-score rank UMAP.}
    \end{subfigure}
    \hfill
    \begin{subfigure}[t]{0.52\textwidth}
        \centering
        \includegraphics[width=\linewidth,height=0.28\textheight,keepaspectratio]{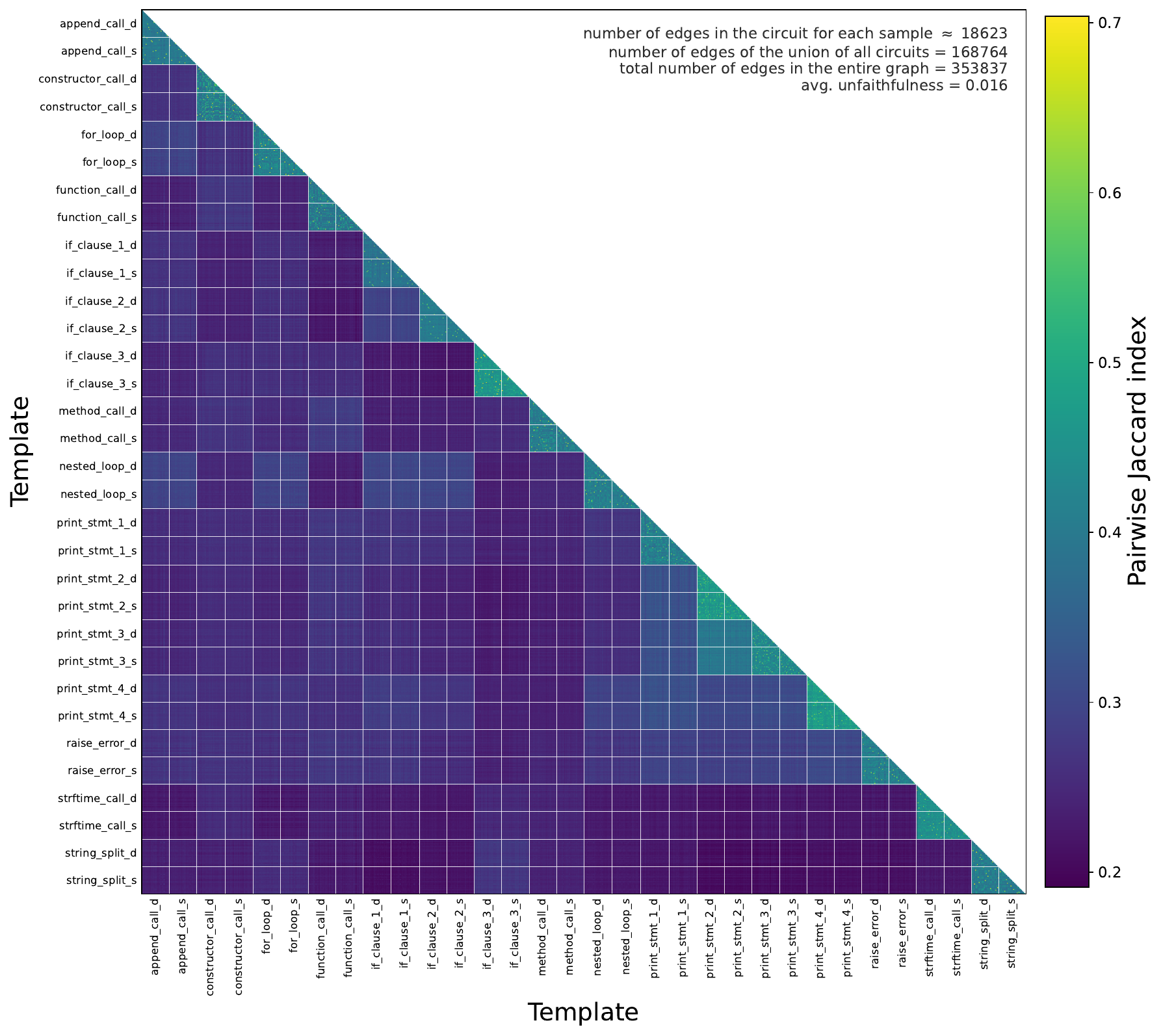}
        \caption{Pairwise Jaccard index.}
    \end{subfigure}
    \caption{Template-induced circuit difference for \texttt{csp\_sweep1\_4x\_3.7Mnonzero\_afrac0.500} on single-double-quote.}
\end{figure}

\subsection{else-elif}

\subsubsection{\texttt{dense1\_4x}}
\begin{figure}[H]
    \centering
    \begin{subfigure}[t]{0.44\textwidth}
        \centering
        \includegraphics[width=\linewidth,height=0.28\textheight,keepaspectratio]{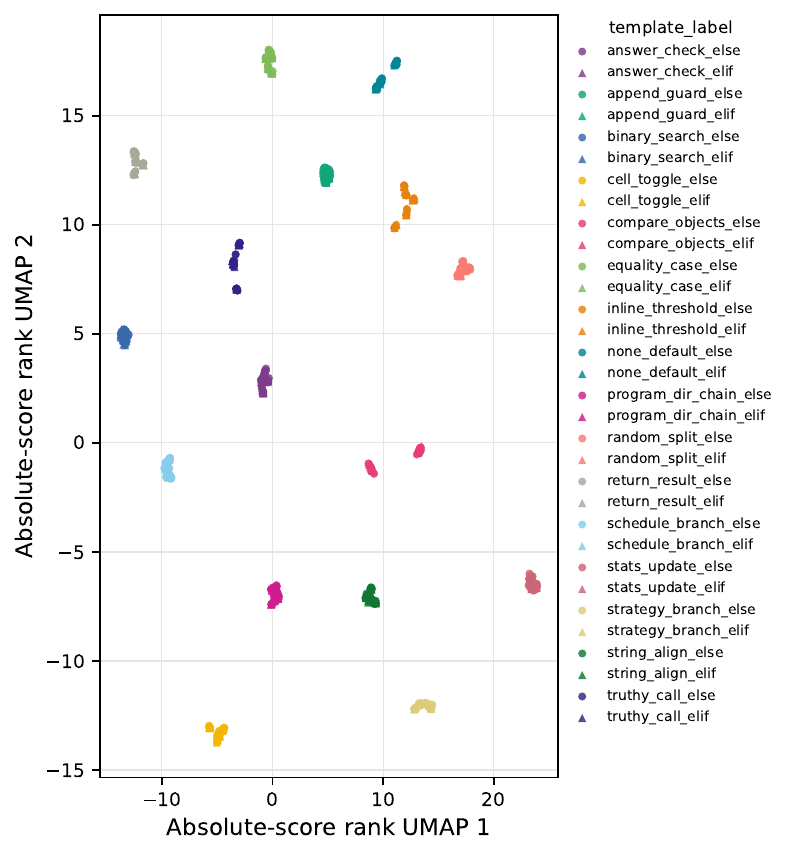}
        \caption{Absolute-score rank UMAP.}
    \end{subfigure}
    \hfill
    \begin{subfigure}[t]{0.52\textwidth}
        \centering
        \includegraphics[width=\linewidth,height=0.28\textheight,keepaspectratio]{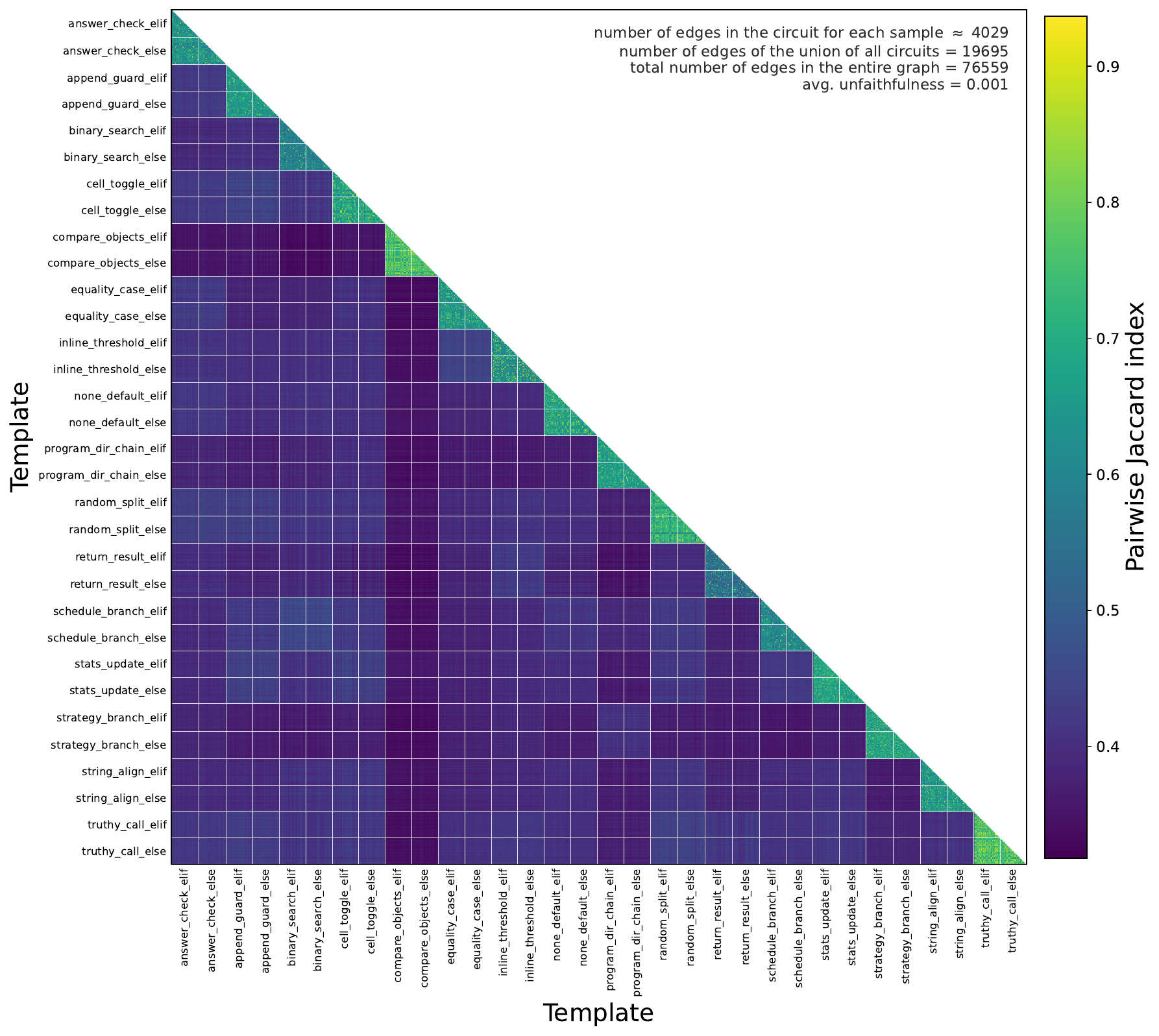}
        \caption{Pairwise Jaccard index.}
    \end{subfigure}
    \caption{Template-induced circuit difference for \texttt{dense1\_4x} on else-elif.}
\end{figure}

\subsubsection{\texttt{csp\_sweep1\_4x\_3.7Mnonzero\_afrac1.000}}
\begin{figure}[H]
    \centering
    \begin{subfigure}[t]{0.44\textwidth}
        \centering
        \includegraphics[width=\linewidth,height=0.28\textheight,keepaspectratio]{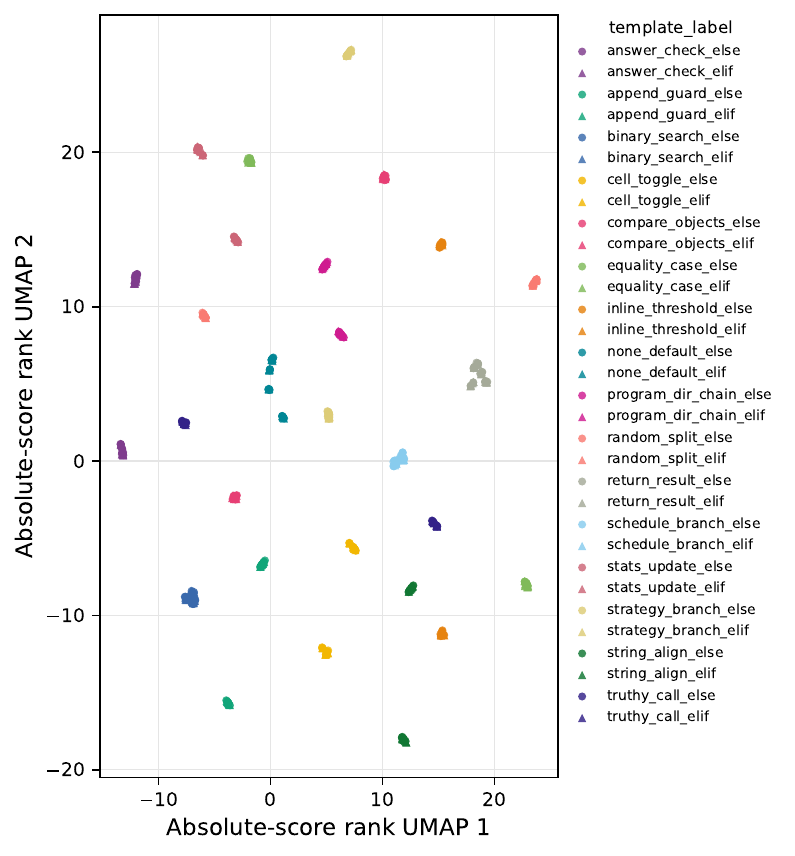}
        \caption{Absolute-score rank UMAP.}
    \end{subfigure}
    \hfill
    \begin{subfigure}[t]{0.52\textwidth}
        \centering
        \includegraphics[width=\linewidth,height=0.28\textheight,keepaspectratio]{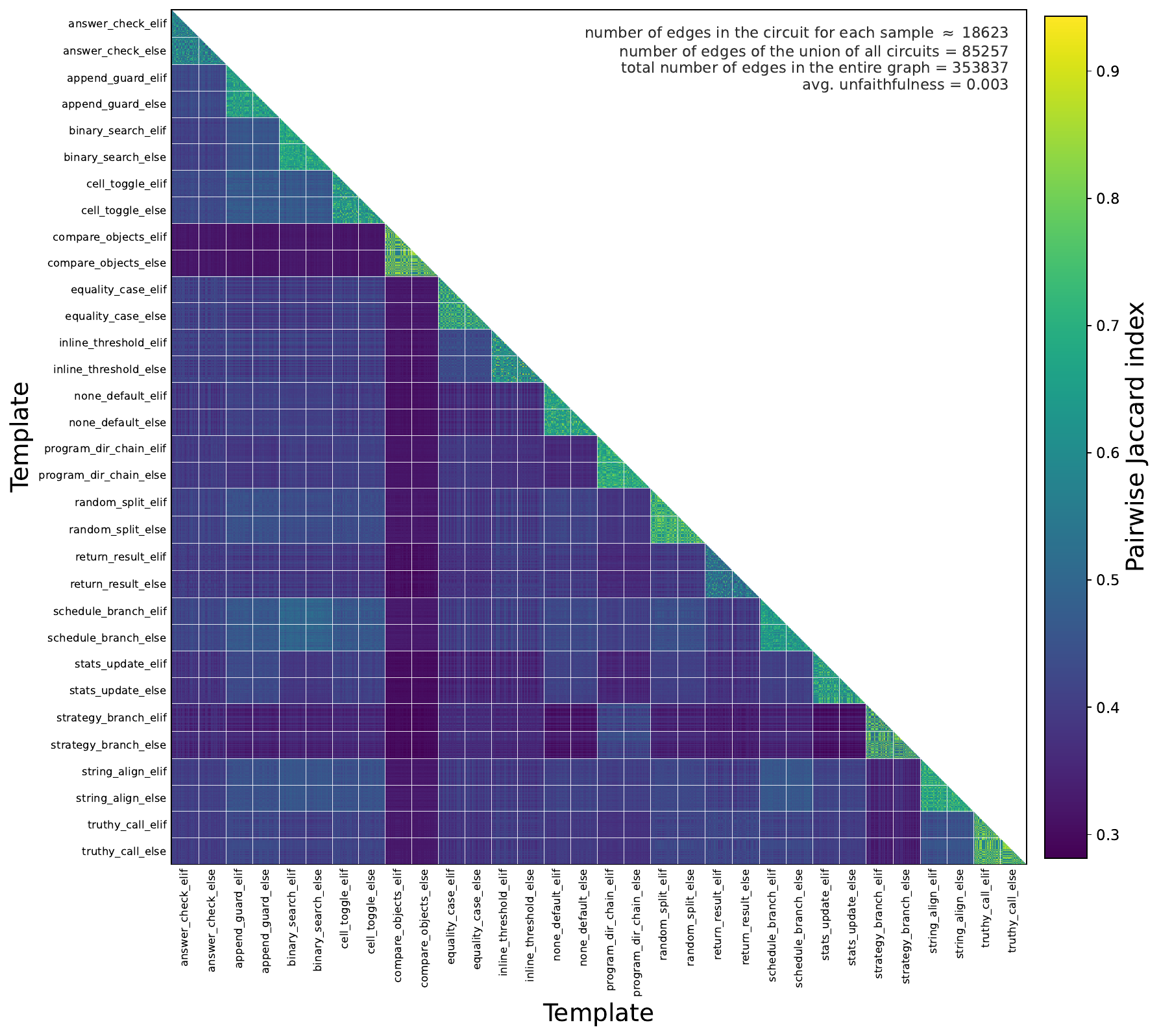}
        \caption{Pairwise Jaccard index.}
    \end{subfigure}
    \caption{Template-induced circuit difference for \texttt{csp\_sweep1\_4x\_3.7Mnonzero\_afrac1.000} on else-elif.}
\end{figure}

\subsubsection{\texttt{csp\_sweep1\_4x\_3.7Mnonzero\_afrac0.500}}
\begin{figure}[H]
    \centering
    \begin{subfigure}[t]{0.44\textwidth}
        \centering
        \includegraphics[width=\linewidth,height=0.28\textheight,keepaspectratio]{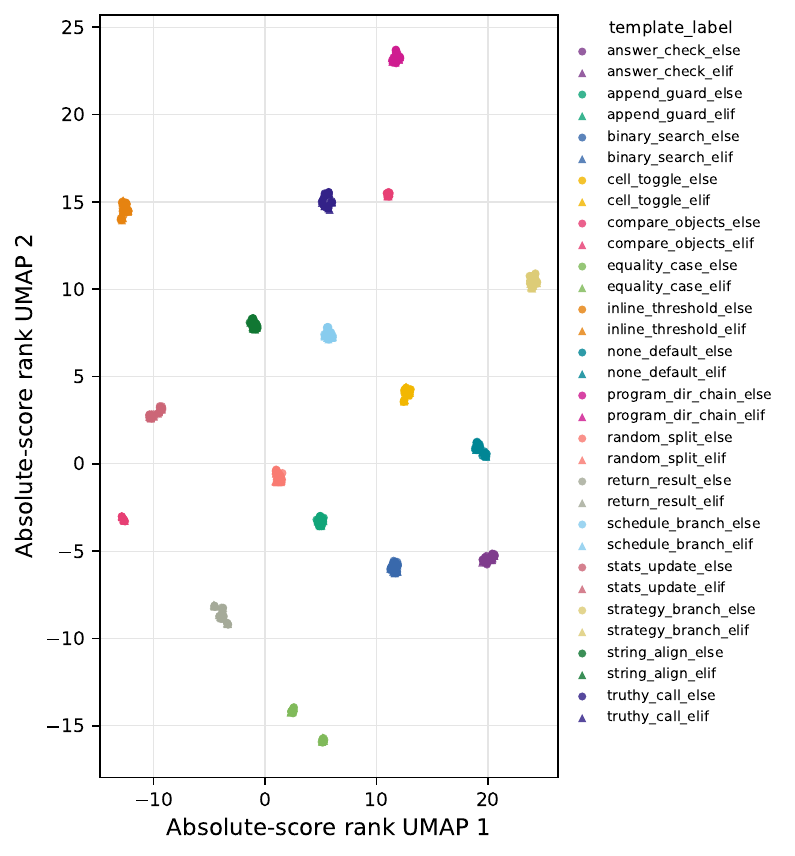}
        \caption{Absolute-score rank UMAP.}
    \end{subfigure}
    \hfill
    \begin{subfigure}[t]{0.52\textwidth}
        \centering
        \includegraphics[width=\linewidth,height=0.28\textheight,keepaspectratio]{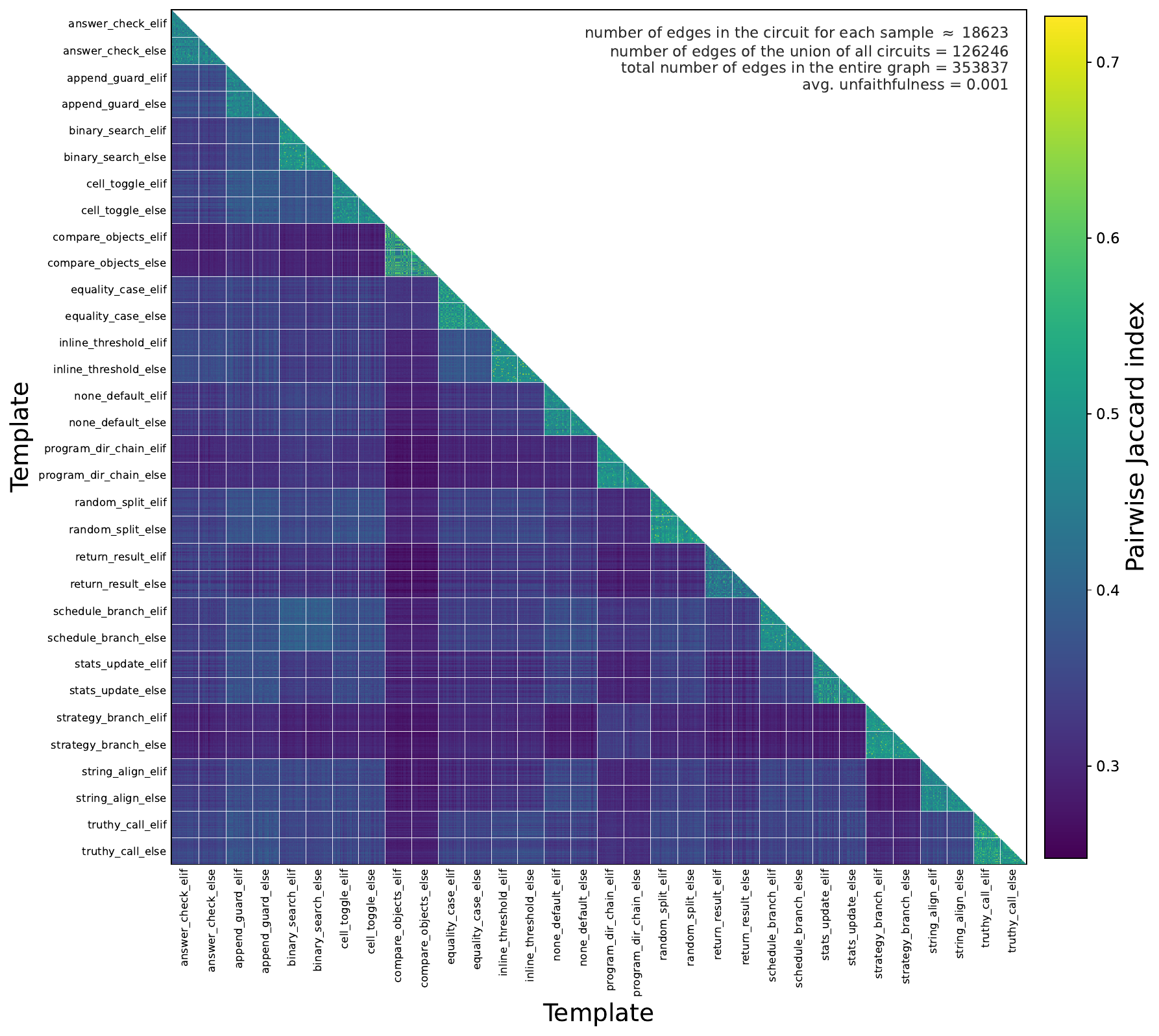}
        \caption{Pairwise Jaccard index.}
    \end{subfigure}
    \caption{Template-induced circuit difference for \texttt{csp\_sweep1\_4x\_3.7Mnonzero\_afrac0.500} on else-elif.}
\end{figure}

\section{Cross-Template Circuit Overlap Summary}
\label{app: template overlap summary tables}
Tables~\ref{tab:single-double-quote-template-offdiag-summary} and~\ref{tab:else-elif-template-offdiag-summary} summarize the cross-template sample-level circuit overlap for the dense and circuit-sparsity models.
For each model and task, we use circuits containing roughly $5.2\%$ of all graph edges.
The PJI average excludes not only the elementwise diagonal, but also pairs of samples from the same normalized template; for example, \texttt{A\_s} and \texttt{A\_d} are treated as the same single-double-quote template, and \texttt{A\_else} and \texttt{A\_elif} are treated as the same else-elif template.
Note that for models with \texttt{afrac} less than $1$, \citet{gao2025weight} use $\mathrm{topk(\cdot)}$ to enforce activation sparsity, which introduces discontinuities into the network.
In this setting, conductance can only capture the contribution of each edge along the continuous portions of the computation.
Any effect an edge has by causing the output to jump across a discontinuity is not reflected in the conductance score.
In principle, this could introduce inaccuracies.
However, as demonstrated in the tables below, all circuits achieve low unfaithfulness, providing empirical evidence that conductance remains effective.

\begin{table}[H]
    \centering
    \small
    \resizebox{\textwidth}{!}{%
    \begin{tabular}{llcc}
        \toprule
        Type & Model & Cross-template avg. PJI & Unfaithfulness \\
        \midrule
        \multirow{10}{*}{sparse} & \texttt{csp\_sweep1\_4x\_0.9Mnonzero\_afrac1.000} & 0.3424 & 0.0022 \\
         & \texttt{csp\_sweep1\_4x\_1.9Mnonzero\_afrac1.000} & 0.3195 & 0.0025 \\
         & \texttt{csp\_sweep1\_4x\_3.7Mnonzero\_afrac1.000} & 0.3116 & 0.0025 \\
         & \texttt{csp\_sweep1\_4x\_7.4Mnonzero\_afrac1.000} & 0.3074 & 0.0024 \\
         & \texttt{csp\_sweep1\_4x\_14.8Mnonzero\_afrac1.000} & 0.3059 & 0.0034 \\
         & \texttt{csp\_sweep1\_4x\_3.7Mnonzero\_afrac0.062} & 0.2129 & 0.0270 \\
         & \texttt{csp\_sweep1\_4x\_3.7Mnonzero\_afrac0.125} & 0.2436 & 0.0151 \\
         & \texttt{csp\_sweep1\_4x\_3.7Mnonzero\_afrac0.250} & 0.2547 & 0.0139 \\
         & \texttt{csp\_sweep1\_4x\_3.7Mnonzero\_afrac0.500} & 0.2565 & 0.0157 \\
         & \texttt{csp\_bridges2} & 0.2642 & 0.0184 \\
        \midrule
        \multirow{2}{*}{dense} & \texttt{dense1\_1x} & 0.3752 & 0.0090 \\
         & \texttt{dense1\_4x} & 0.3686 & 0.0026 \\
        \bottomrule
    \end{tabular}
    }
    \caption{Cross-template pairwise Jaccard index and unfaithfulness for single-double-quote. Values use circuits containing roughly $5.2\%$ of all graph edges; sample pairs from the same normalized template are excluded from the PJI average.}
    \label{table: single-double-quote overlap}
    \label{tab:single-double-quote-template-offdiag-summary}
\end{table}

\begin{table}[H]
    \centering
    \small
    \resizebox{\textwidth}{!}{%
    \begin{tabular}{llcc}
        \toprule
        Type & Model & Cross-template avg. PJI & Unfaithfulness \\
        \midrule
        \multirow{10}{*}{sparse} & \texttt{csp\_sweep1\_4x\_0.9Mnonzero\_afrac1.000} & 0.4496 & 0.0004 \\
         & \texttt{csp\_sweep1\_4x\_1.9Mnonzero\_afrac1.000} & 0.4106 & 0.0159 \\
         & \texttt{csp\_sweep1\_4x\_3.7Mnonzero\_afrac1.000} & 0.3915 & 0.0034 \\
         & \texttt{csp\_sweep1\_4x\_7.4Mnonzero\_afrac1.000} & 0.3818 & 0.0014 \\
         & \texttt{csp\_sweep1\_4x\_14.8Mnonzero\_afrac1.000} & 0.3634 & 0.0008 \\
         & \texttt{csp\_sweep1\_4x\_3.7Mnonzero\_afrac0.062} & 0.3496 & 0.0026 \\
         & \texttt{csp\_sweep1\_4x\_3.7Mnonzero\_afrac0.125} & 0.3206 & 0.0101 \\
         & \texttt{csp\_sweep1\_4x\_3.7Mnonzero\_afrac0.250} & 0.3330 & 0.0006 \\
         & \texttt{csp\_sweep1\_4x\_3.7Mnonzero\_afrac0.500} & 0.3292 & 0.0009 \\
         & \texttt{csp\_bridges2} & 0.3471 & 0.0106 \\
        \midrule
        \multirow{2}{*}{dense} & \texttt{dense1\_1x} & 0.4161 & 0.0320 \\
         & \texttt{dense1\_4x} & 0.3929 & 0.0013 \\
        \bottomrule
    \end{tabular}
    }
    \caption{Cross-template pairwise Jaccard index and unfaithfulness for else-elif. Values use circuits containing roughly $5.2\%$ of all graph edges; sample pairs from the same normalized template are excluded from the PJI average.}
    \label{table: else-elif overlap}
    \label{tab:else-elif-template-offdiag-summary}
\end{table}

\section{Population-level (Un)faithfulness}
\label{app: population-level unfaithfulness}
Let $N$ denote the set of samples and $(x_0^n,x_1^n)$ denote the corrupted-clean pair for sample $n\in N$.
Simplifying the notation from \cref{sec: background}, we write $M^n \coloneqq M(x;x_0^n,x_1^n)$,  $Q_\Gamma^n\coloneqq Q_\Gamma(x_0^n,x_1^n)$ for $\Gamma\in\{G,G'\}$ and write the corresponding sample-level faithfulness and unfaithfulness as $\Phi^n\coloneqq \Phi(G',G;x_0^n,x_1^n)=Q_{G'}^n/Q_G^n$ and $U^n\coloneqq U(G',G;x_0^n,x_1^n)=|1-\Phi^n|$.

Following the two notions of population faithfulness discussed by \citep{miller2024robust}, we note that there can be two types of population unfaithfulness:

\paragraph{Outer expectation:} 
$U_O^N\coloneqq \frac{1}{\vert N\vert}\sum_{n\in N} U^n.$

\paragraph{Inner expectation:}
The population faithfulness used by \citep{wang2023interpretability,faithfulness} is defined as $\Phi_I^N\coloneqq \frac{\sum_{n\in N}Q_{G'}^n}{\sum_{n\in N}Q_{G}^n}.$
Then the inner expectation form of unfaithfulness naturally follows
$$
U_I^N\coloneqq \vert 1- \Phi_I^N \vert = \left\vert  \frac{\sum_{n\in N}(Q_{G}^n-Q_{G'}^n)}{\sum_{n\in N}Q_{G}^n} \right\vert.
$$

At first glance, outer expectation appears to be the more reasonable option.
Its interpretation as an “average level of unfaithfulness” is straightforward.
Nevertheless, we advocate the inner expectation form as the more principled measure, for the following two reasons.

First, the current circuit discovery algorithms score the edges by averaging (or, equivalently, summing) the scores across all the samples \citep{faithfulness,franco2026findinghighlyinterpretablepromptspecific,meloux2025mechanistic}.
For gradient-based methods, by the linearity of the gradient, this approach is equivalent to substituting the metric $M$ in \cref{eq:conductance_formula} with $\sum_{n\in N}M^n$.
This implies that the network behavior accounted for is the summation of behavior over all the samples $\sum_{n\in N}Q^n$ when the model is confronted with all corrupted-clean pairs in $N$, analogous to the sample-wise scoring of \cref{eq:conductance_formula} accounting for the behavior of $Q_G$.
From this perspective, $\Phi_I^N$ and $U_I^N$ are more apt metrics to measure the quality of circuit discovery, since they directly account for the summation in their form.
Admittedly, one can instead substitute $M$ with the weighted sum $\sum_{n\in N}\frac{M^n}{Q^n_G}$, where $Q_G^n$ is treated as a constant, in \cref{eq:conductance_formula}, so that the behavior accounted for by the scoring is more aligned with the outer expectation form.
Nevertheless, we also have the following reason to further advocate the inner expectation form.

The outer expectation treats the unfaithfulness of all samples equally.
However, as we see in \cref{sec: selective contribution scaling}, the high unfaithfulness of samples with small $\vert Q_G^n\vert$ is an artifact of the metric itself and does not indicate major circuit defects.
Hence, we should be more forgiving of the unfaithfulness of samples with small $\vert Q_G^n\vert $, which the inner expectation form captures.
The inaccuracy of each $n$ caused by $G'$ is factored into this form through $(Q_G^n - Q_{G'}^n)$:
If two samples contribute the same $(Q_G^n - Q_{G'}^n)$ to the inner expectation, then the sample with smaller $\vert Q_G^n\vert$ would have much worse $U^n$, which reflects a higher level of tolerance for it.
In contrast, the outer expectation form might yield results that are drowned out by the extreme unfaithfulness values of samples with small $\vert Q_G^n\vert$.

\section{The Defect of Unfaithfulness for a Single Sample}
\label{app: single sample faithfulness}
Unfaithfulness is a metric intended to capture how differently the chosen circuit behaves from the whole model.
Nonetheless, the difference could lie in multiple aspects.
\citep{marrbook} proposed three levels of analysis: computational, algorithmic, and implementational.
The implementational level concerns the hardware-specific mechanisms to realize AI models and is largely irrelevant to interpretability.
The algorithmic level characterizes the program that the model uses to handle tasks \citep{wang2023interpretability,franco2026findinghighlyinterpretablepromptspecific}.
While it is most appealing to characterize unfaithfulness at this level, there is currently no unified framework for doing so.
The closest proxy for this is completeness \citep{wang2023interpretability}, but it is computationally intractable.
Thus, we are left with characterizing unfaithfulness at the computational level—that is, how much the input-output function implemented by the circuit over the relevant region of the input space (which defines a given task) deviates from the corresponding function implemented by the full model.
Achieving low computational unfaithfulness is a necessary condition for attaining low algorithmic unfaithfulness \citep{marrbook}.  
In this light, the form of unfaithfulness we investigate (computational unfaithfulness) should be viewed as only a crude surrogate for the genuine notion of unfaithfulness (algorithmic unfaithfulness).
This provides us with a rationale to refine our notion of unfaithfulness.

Within computational unfaithfulness, there is also the difference between single-sample computational unfaithfulness and functional computational unfaithfulness.
In terms of a single sample, the agreement between the full model’s output for it and that of the selected circuit provides very limited information regarding how similar the underlying algorithms implemented by the full model and the circuit actually are.
The circuit could instantiate a vastly different function from the full model, only coinciding in value at the point of the sample itself (see \cref{fig:function-fitting-diagram}).
As we include more edges into the circuit, the circuit function evaluated at the sample could deviate from the full model, but conceptually, the underlying algorithm is more developed with more edges and thus the function as a whole should be closer to the full model function.
Hence, it is reasonable to adopt the pessimistic unfaithfulness as a more canonical notion of single sample unfaithfulness to enforce monotonicity, which reflects the underlying improvement of algorithmic faithfulness as more edges are added.

On the other hand, population unfaithfulness involves many points in the input space, so it is more informative as a measure of unfaithfulness at the function level and already better satisfies the desideratum of monotonicity, which can be observed in \cref{app: complete ceap vs eap-ig jaccard and unfaithfulness}.
This explains the efficacy of (un)faithfulness in numerous previous works \citep{faithfulness,finetunecircuit,meloux2025mechanistic}.

\begin{figure}[H]
    \centering
    \includegraphics[width=0.78\textwidth]{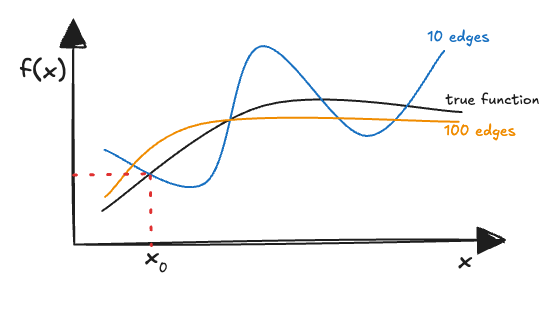}
    \caption{A schematic of why single sample faithfulness is highly non-monotonic. 
    Suppose we apply circuit discovery to recover the true function (which is the prerequisite for achieving algorithmic faithfulness), while the sample of interest is at $x_0$. 
    When we include $10$ edges, the circuit function might be quite far away from the true function, but they coincide at $x_0$. 
    With more edges included, the circuit function is closer to the true function as a whole, but it may deviate further from the true function at $x_0$.}
    \label{fig:function-fitting-diagram}
\end{figure}

\section{Extended Correlation Diagnostics Across Templates, Models, and Circuit Sizes}
\label{app:extended-correlation-diagnostics}

We extend the six rank-correlation diagnostics shown in \cref{fig: unfaithfulness exaggeration,fig: long-tail mechanism plot} from one GPT-2 small SVA template and one circuit size to all templates of the SVA/IOI/greater-than tasks, GPT-2 small and Pythia-160M, and the full greedy edge-count sweep. Solid curve segments indicate statistically significant Spearman rank correlations at $p<0.05$; dotted segments indicate non-significant correlations. We present the diagnostics in the same order as the corresponding panels in the main text.

When the curves are mostly above zero, it indicates a positive correlation; otherwise, a negative correlation.
There are many curve segments that are not statistically significant.
This is because certain templates reach low unfaithfulness easily.
Once all the samples in one template are highly faithful, the trends that involve unfaithfulness will disappear.
For example, if we include all edges in the graph, then all unfaithfulness will be zero, and the heavy-tailedness of $s^n$ will have no effect on it.
However, this does not nullify our explanation, which aims mainly at explaining cases where extremely poor unfaithfulness scores exist.

Note that $U'/U$ graphs are, in general, noisier and less statistically significant than $\bar U'/\bar U$. 
However, the general trend does not change.

\subsection{Individual Scoring}

\subsubsection{\texorpdfstring{Figure 7(b): $\bar{U}/U$ vs.\ $|Q_G|$}{Figure 7(b): Ubar/U vs. |QG|}}

\begin{figure}[H]
    \centering
    \includegraphics[width=\textwidth]{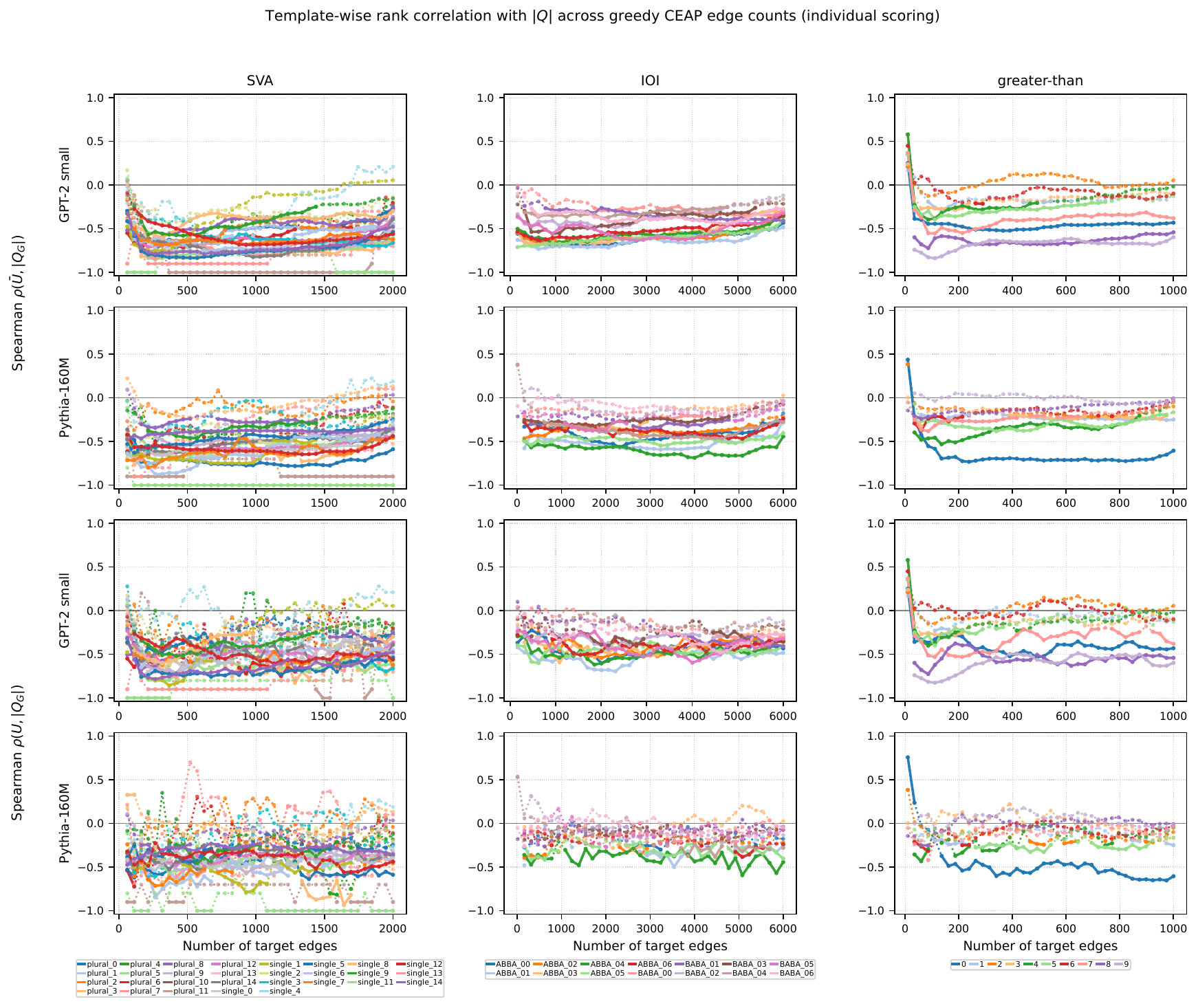}
    \caption{Extension of the $\bar{U}/U$ vs.\ $|Q_G|$ diagnostic. All the curves that are statistically significant are below zero, except in a few cases where the circuit is too small for the unfaithfulness to be informative yet.
    Moreover, if we constrain the circuit to be of a small size, the circuit discovery algorithm needs to accommodate more for the connectivity of the subgraph, rather than picking the edges with higher score absolute values, which deviates from 
    the mental image of \cref{fig:long-tail-mental-image}.}
    \label{fig:extended-u-vs-q}
\end{figure}

\subsubsection{\texorpdfstring{Figure 7(c): $\bar{U}'/U'$ vs.\ $|Q_G|$}{Figure 7(c): Ubar-prime/U-prime vs. |QG|}}

\begin{figure}[H]
    \centering
    \includegraphics[width=\textwidth]{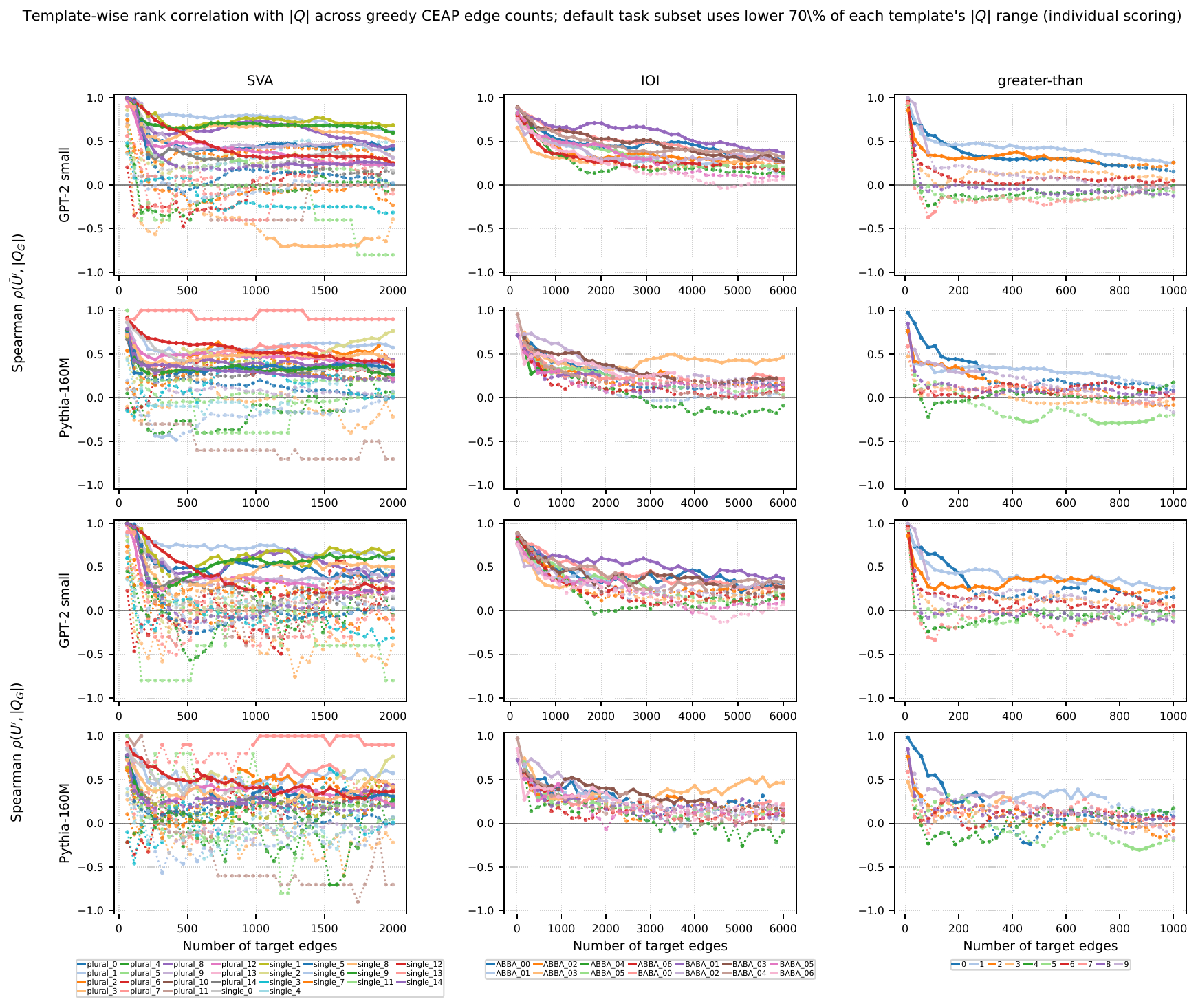}
    \caption{Extension of the unnormalized $\bar{U}'/U'$ vs.\ $|Q_G|$ diagnostic.
    It generally shows positive correlation, with a few exceptions, such as \texttt{plural\_3}. There, we have only $11$ samples, and the correlation readings may be unreliable. 
    Greater-than may show negative correlations here, because we see that $\bar U'/U'$ normally first goes up with $\vert Q_G\vert$, then goes down. To account for this, for each template we remove all points beyond the first $70\%$ of that template's $\vert Q_G\vert$ range. Note that this does not contradict our main point that points with small $\vert Q_G\vert$ in general do not have large $\bar U'/U'$. }
    \label{fig:extended-uprime-vs-q}
\end{figure}

\subsubsection{\texorpdfstring{Figure 9(a): $\mu$ vs.\ $|Q_G|$}{Figure 9(a): mu vs. |QG|}}

\begin{figure}[H]
    \centering
    \includegraphics[width=\textwidth]{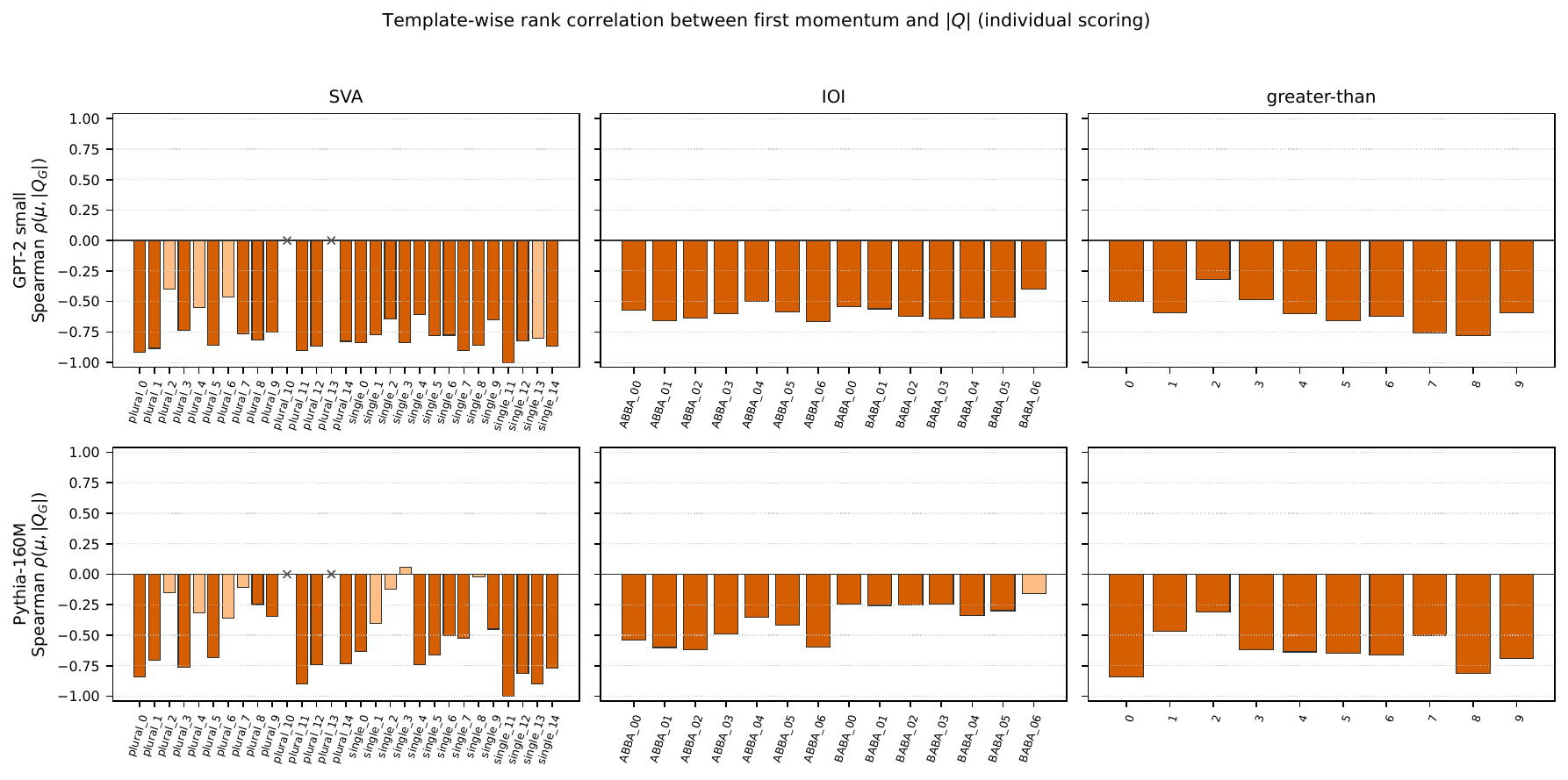}
    \caption{Extension of the $\mu$ vs.\ $|Q_G|$ diagnostic. There is a clearly negative correlation in general. A darker color suggests the Spearman $\rho$ is statistically significant, and a lighter color suggests otherwise. The x's here indicate that there are too few samples for those templates in the original dataset and we did not get any data for them when sampling the dataset for our probing here.
    }
    \label{fig:extended-tail-vs-q}
\end{figure}

\subsubsection{\texorpdfstring{Figure 9(b): $R$ vs.\ $\mu$}{Figure 9(b): R vs. mu}}

\begin{figure}[H]
    \centering
    \includegraphics[width=\textwidth]{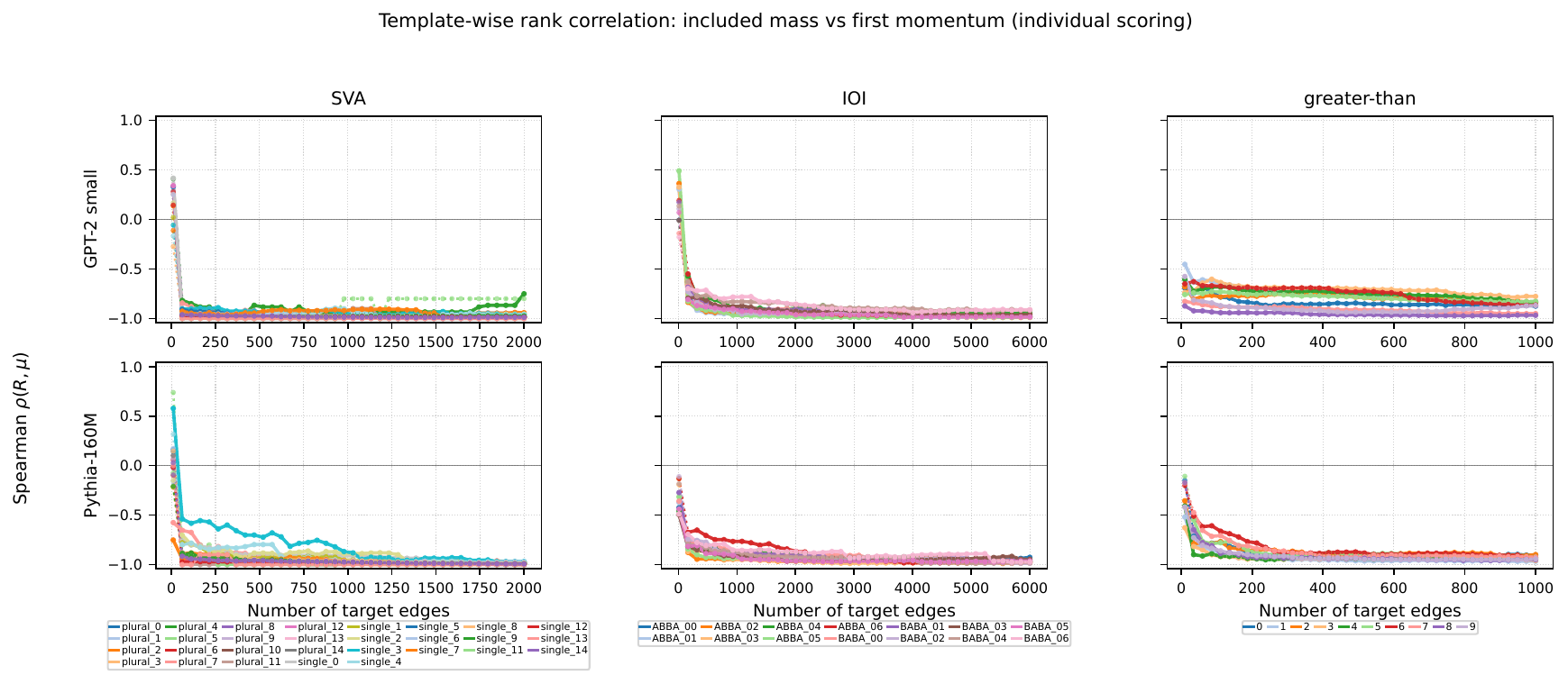}
    \caption{Extension of the $R$ vs.\ $\mu$ diagnostic. There is a clearly negative correlation in general.}
    \label{fig:extended-mass-vs-tail}
\end{figure}

\subsubsection{\texorpdfstring{Figure 9(c): $\bar{U}/U$ vs.\ $R$}{Figure 9(c): Ubar/U vs. R}}

\begin{figure}[H]
    \centering
    \includegraphics[width=\textwidth]{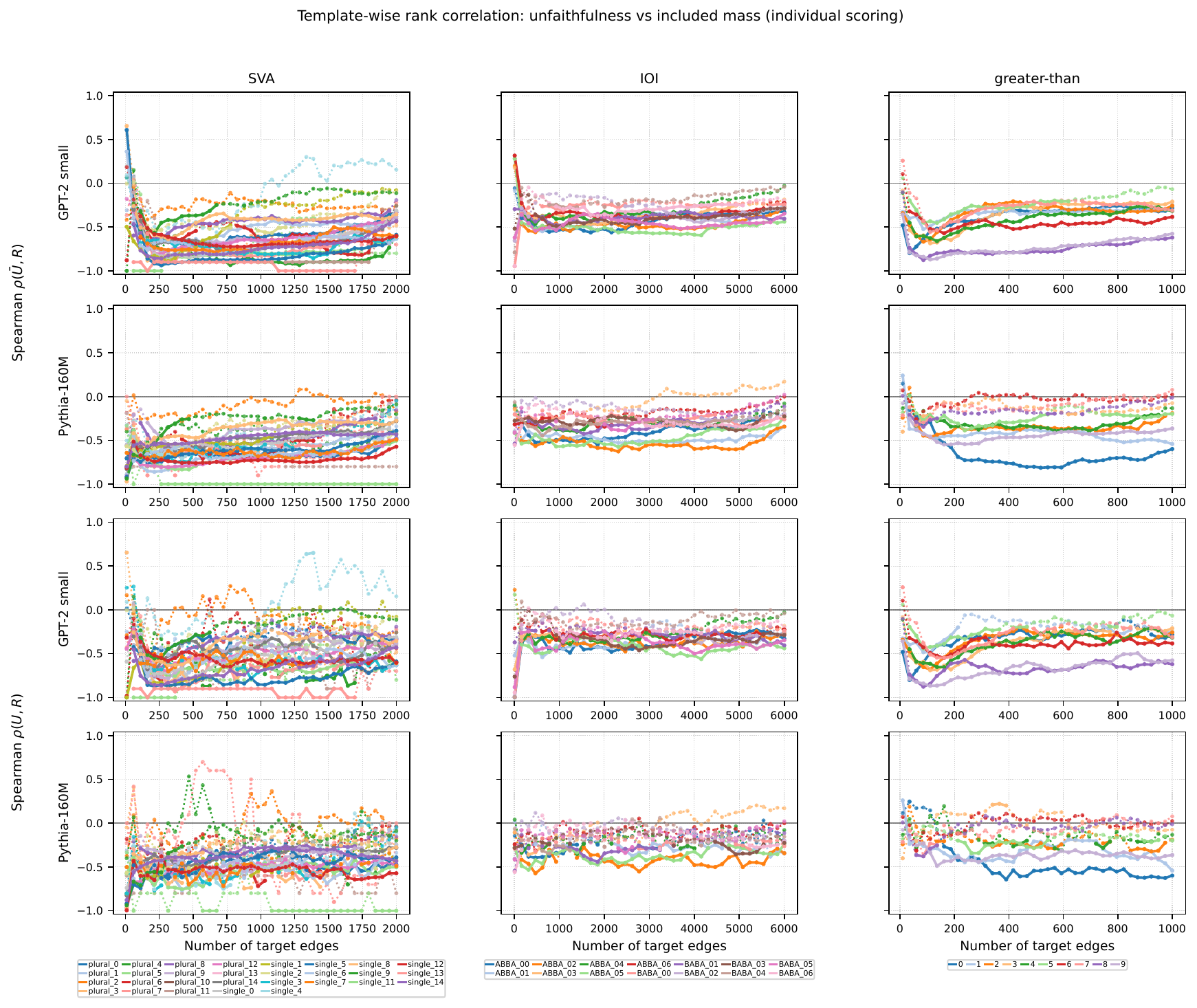}
    \caption{Extension of the $\bar{U}/U$ vs.\ $R$ diagnostic. There is a clearly negative correlation in general.}
    \label{fig:extended-u-vs-mass}
\end{figure}

\subsubsection{\texorpdfstring{Figure 9(d): $\bar{U}/U$ vs.\ $\mu$}{Figure 9(d): Ubar/U vs. mu}}

\begin{figure}[H]
    \centering
    \includegraphics[width=\textwidth]{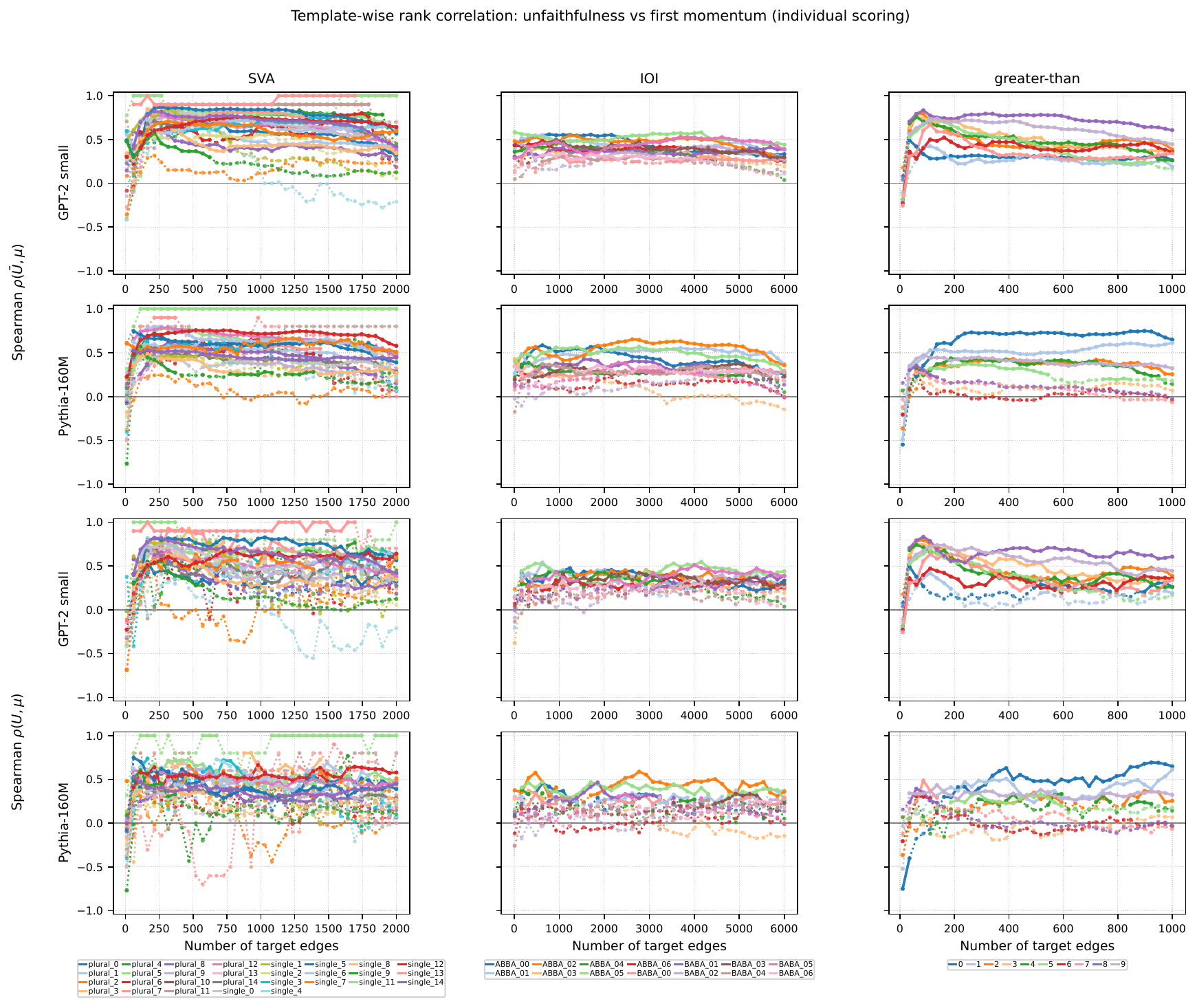}
    \caption{Extension of the $\bar{U}/U$ vs.\ $\mu$ diagnostic. There is a clearly positive correlation in general.}
    \label{fig:extended-u-vs-tail}
\end{figure}

\subsection{Template-averaged Scoring}
All figures follow the trend in the last section, with a few exceptions for greater-than.
We believe this is due to the diffuse nature of the greater-than circuits. As shown in \cref{app: umap and pji matrix}, the circuits underlying templates 1 through 9 form a smooth continuum rather than cleanly separated clusters, meaning that circuits within these templates are not particularly similar. Consequently, the results from the previous section provide only limited predictive power for this setup, because we require the greedy circuit discovery to approximately select the edges with the top absolute value scores for the mental image of \cref{fig:long-tail-mental-image} to work.
If the circuit is chosen via averaging across circuits that are not similar, then this assumption does not hold.

\subsubsection{\texorpdfstring{Figure 5(a): $U$ distribution}{Figure 5(a): U distribution}}

\begin{figure}[H]
    \centering
    \includegraphics[width=0.32\textwidth]{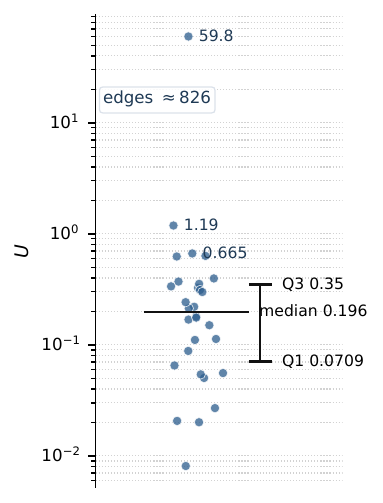}
    \caption{Template-averaged scoring version of the $U$ distribution diagnostic for GPT-2 small on SVA template \texttt{plural\_1}.}
    \label{fig:u-distribution-all-averaging}
\end{figure}

\subsubsection{\texorpdfstring{Figure 7(b): $\bar{U}/U$ vs.\ $|Q_G|$}{Figure 7(b): Ubar/U vs. |QG|}}

\begin{figure}[H]
    \centering
    \includegraphics[width=\textwidth]{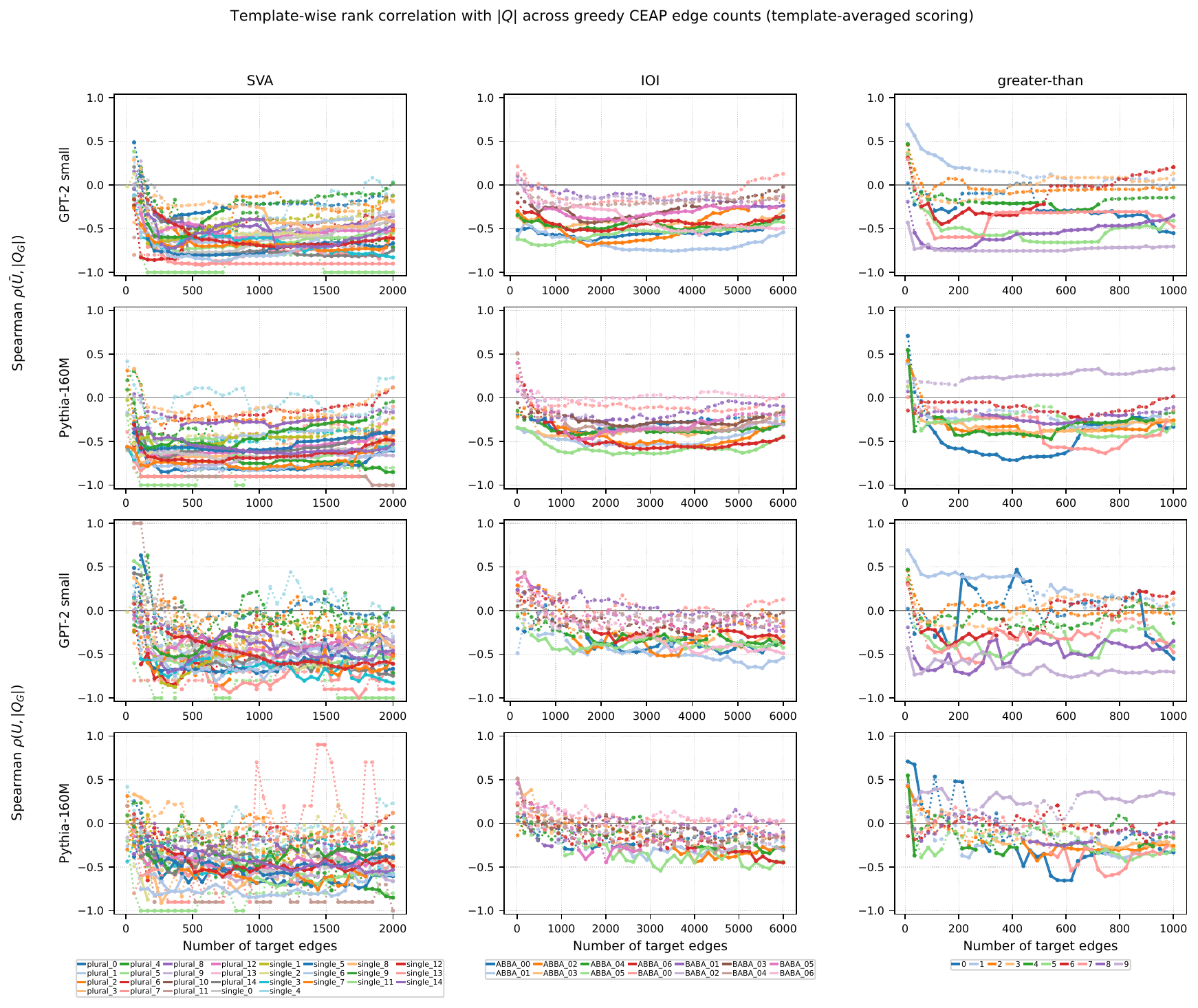}
    \caption{Template-averaged scoring version of the $\bar{U}/U$ vs.\ $|Q_G|$ diagnostic.}
    \label{fig:extended-u-vs-q-all-averaging}
\end{figure}

\subsubsection{\texorpdfstring{Figure 7(c): $\bar{U}'/U'$ vs.\ $|Q_G|$}{Figure 7(c): Ubar-prime/U-prime vs. |QG|}}

\begin{figure}[H]
    \centering
    \includegraphics[width=\textwidth]{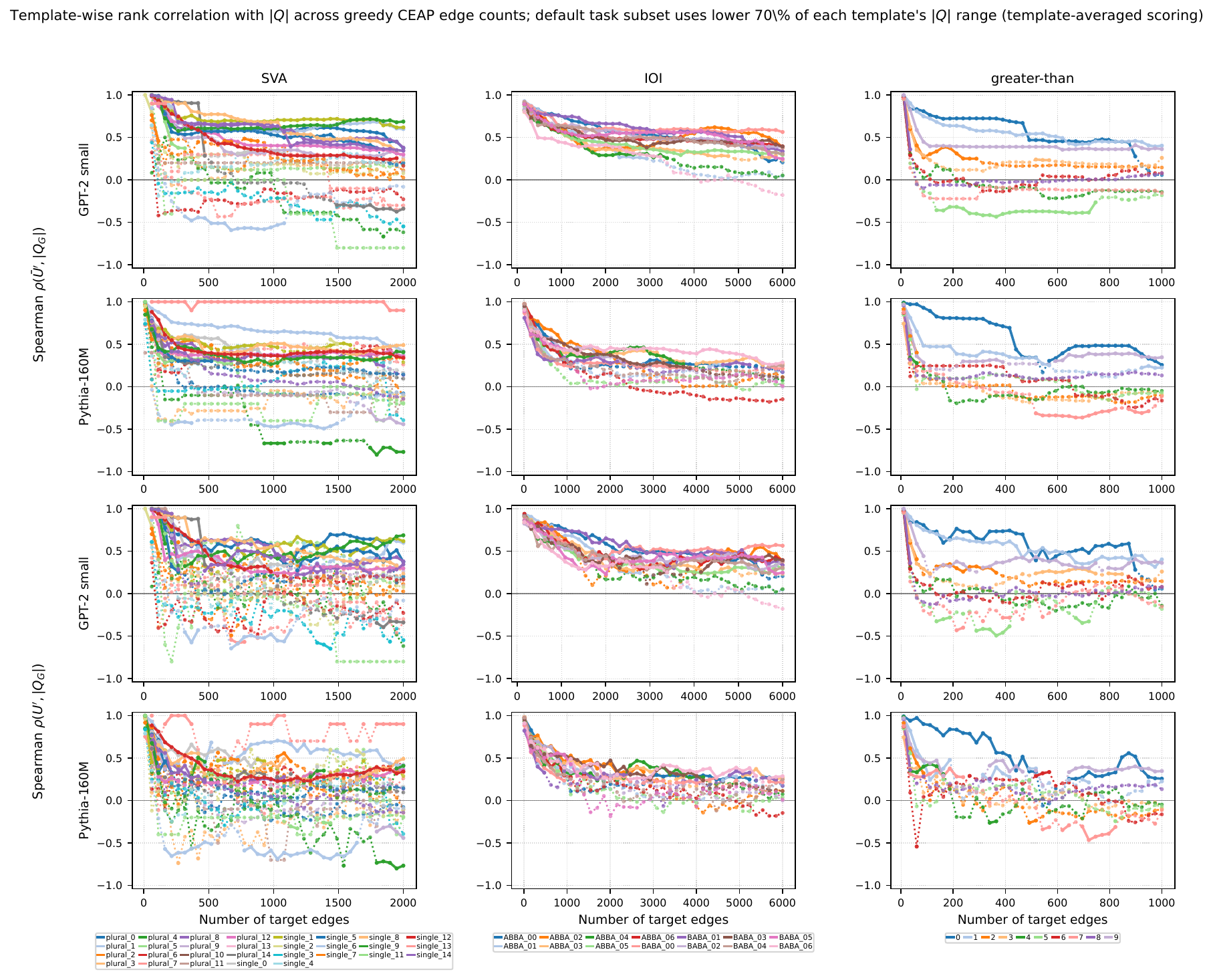}
    \caption{Template-averaged scoring version of the unnormalized $\bar{U}'/U'$ vs.\ $|Q_G|$ diagnostic.}
    \label{fig:extended-uprime-vs-q-all-averaging}
\end{figure}

\subsubsection{\texorpdfstring{Figure 9(a): $\mu$ vs.\ $|Q_G|$}{Figure 9(a): mu vs. |QG|}}

This plot is exactly the same as \cref{fig:extended-tail-vs-q}, as it reflects a property of each sample and does not depend on whether we average over all samples.

\subsubsection{\texorpdfstring{Figure 9(b): $R$ vs.\ $\mu$}{Figure 9(b): R vs. mu}}

\begin{figure}[H]
    \centering
    \includegraphics[width=\textwidth]{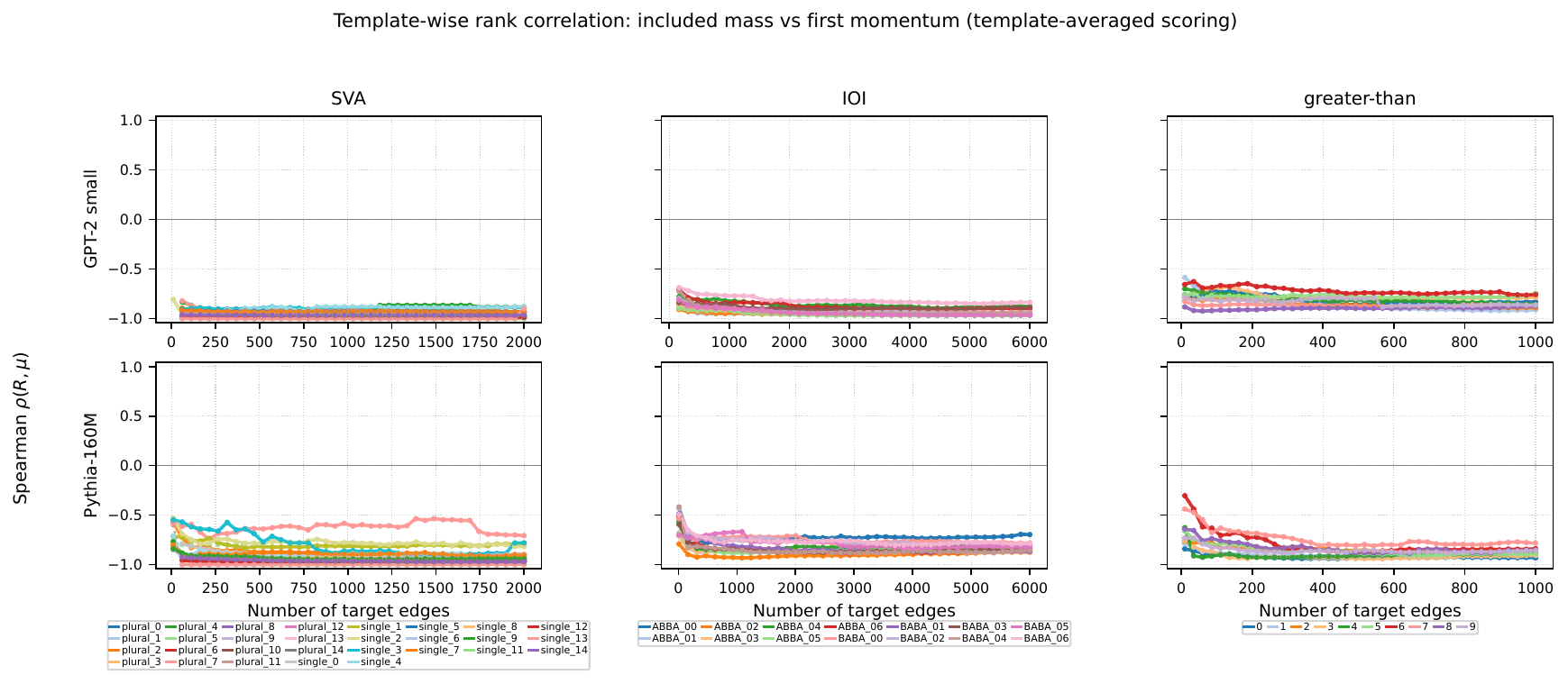}
    \caption{Template-averaged scoring version of the $R$ vs.\ $\mu$ diagnostic.}
    \label{fig:extended-mass-vs-tail-all-averaging}
\end{figure}

\subsubsection{\texorpdfstring{Figure 9(c): $\bar{U}/U$ vs.\ $R$}{Figure 9(c): Ubar/U vs. R}}

\begin{figure}[H]
    \centering
    \includegraphics[width=\textwidth]{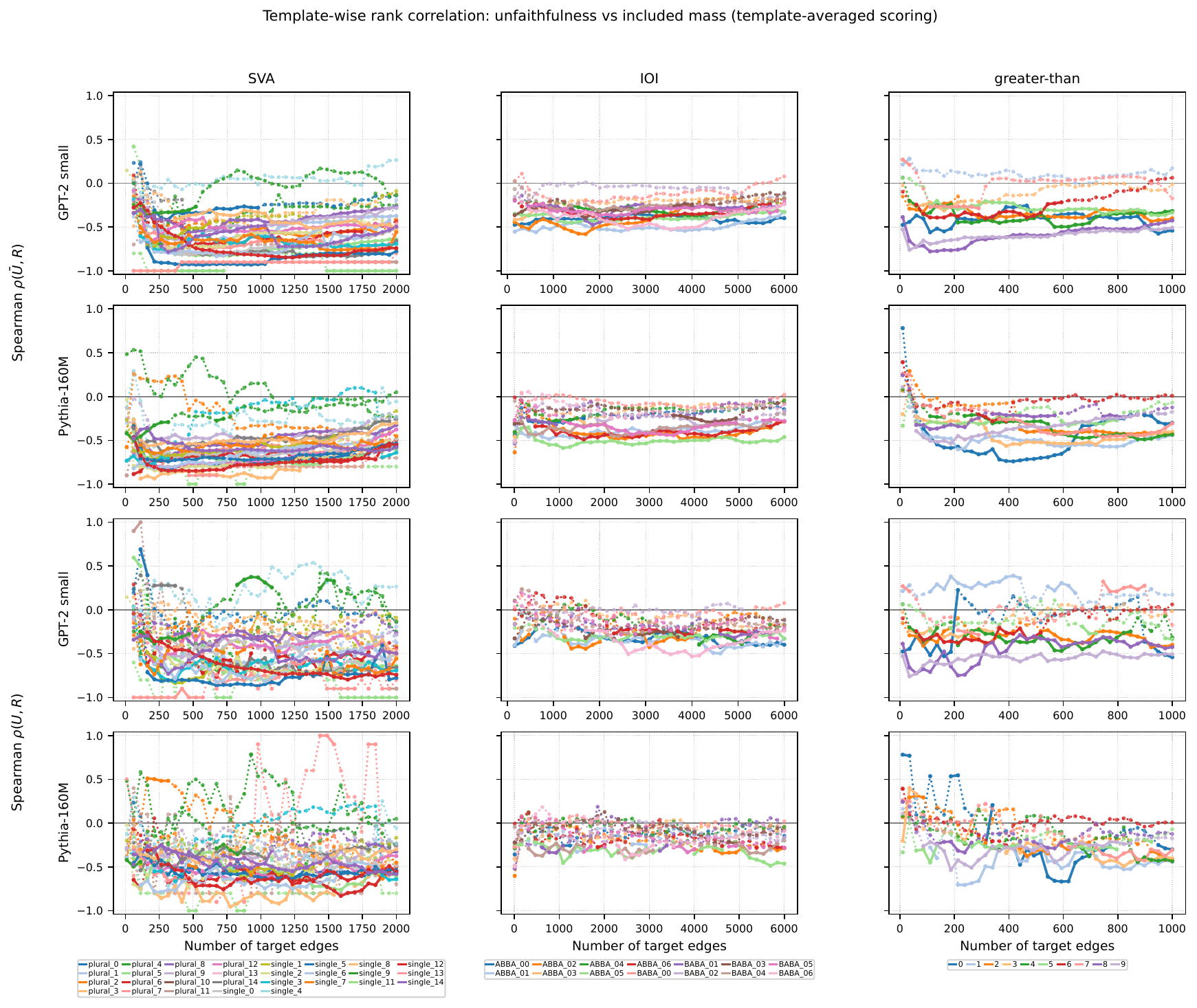}
    \caption{Template-averaged scoring version of the $\bar{U}/U$ vs.\ $R$ diagnostic.}
    \label{fig:extended-u-vs-mass-all-averaging}
\end{figure}

\subsubsection{\texorpdfstring{Figure 9(d): $\bar{U}/U$ vs.\ $\mu$}{Figure 9(d): Ubar/U vs. mu}}

\begin{figure}[H]
    \centering
    \includegraphics[width=\textwidth]{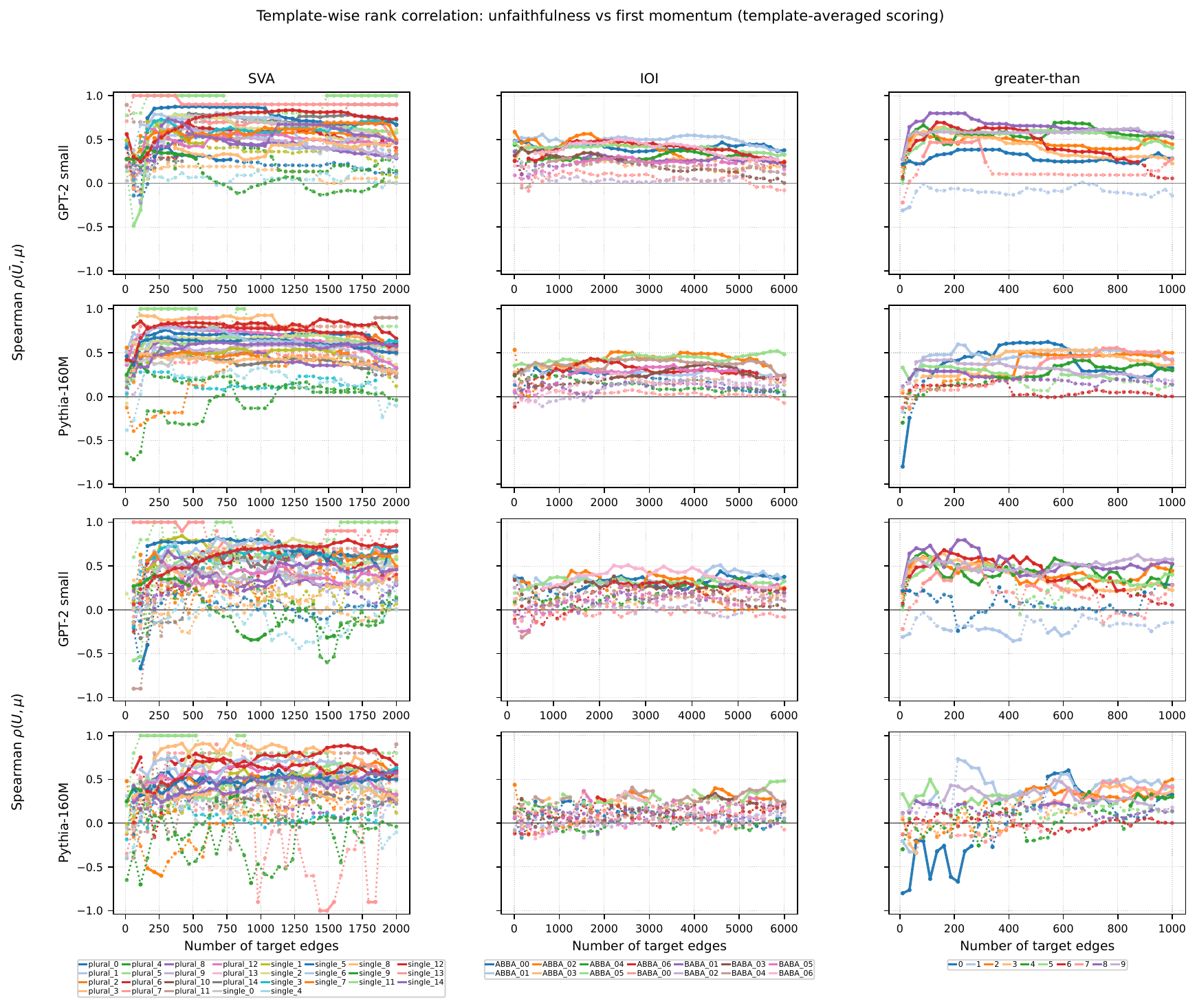}
    \caption{Template-averaged scoring version of the $\bar{U}/U$ vs.\ $\mu$ diagnostic.}
    \label{fig:extended-u-vs-tail-all-averaging}
\end{figure}

\section[Substantiating the Mental Image with Two Real Samples]{Substantiate the Mental Image of \cref{fig:long-tail-mental-image} with Two Real Samples}
\label{app: mental image corroboration}
\begin{figure}
    \centering
    \includegraphics[width=\textwidth]{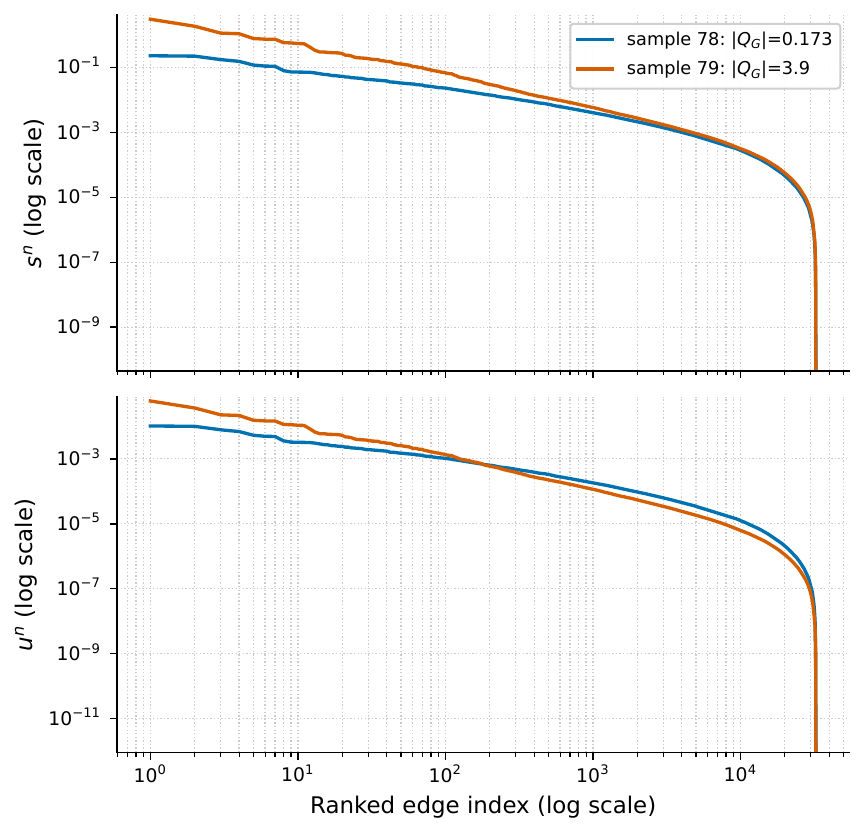}
    \caption{Redraw \cref{fig:long-tail-mental-image} with two real samples of the same template. Sample 78 is the one with the poorest $U$ reading in \cref{fig: u distribution}, and sample 79 is a sample of the same template but much higher $\vert Q_G\vert$. The curves are shown on logarithmic axes to make both the high-score region and long tail visible.}
    \label{fig:sva-score-tail-comparison}
\end{figure}

\newpage
\end{document}